\documentclass[11pt]{article}
\IfFormatAtLeastTF{2026-06-01}{%
\RemoveFromHook{package/amsthm/after}[firstaid/aliascounter]%
}{}

\usepackage[utf8]{inputenc} 
\usepackage[T1]{fontenc}    

\usepackage{url}            
\usepackage[authoryear,round]{natbib} 
\usepackage{booktabs}       
\usepackage{pifont}         
\usepackage{amsfonts}       
\usepackage{nicefrac}       
\usepackage{microtype}      
\usepackage{xcolor}         
\usepackage{amsmath, amsthm, amssymb} 
\usepackage{mathtools}
\usepackage{pc2report} 
\usepackage{algorithm}
\usepackage{algpseudocode}
\makeatletter
\let\pc@float@makebox\float@makebox
\renewcommand{\float@makebox}[1]{%
  \color@vbox\normalcolor\pc@float@makebox{#1}\color@endbox}
\makeatother
\usepackage{bm}
\usepackage{graphicx,placeins}
\usepackage[font=small]{caption}
\usepackage{needspace}
\usepackage{wrapfig,lipsum}
\usepackage{thmtools, thm-restate}
\usepackage{mathrsfs} 
\usepackage{bbold}

\usepackage[varqu,varl,scaled=0.95]{zi4} 
\usepackage{array}
\usepackage{tabularx}
\newcommand{\pcVisualFont}{\PCVisualFont}
\newcommand{\pcTableStyle}{\PCTableStyle}
\usepackage{enumitem}
\usepackage{tikz}
\usetikzlibrary{arrows.meta,positioning,fit,calc,patterns.meta}
\usepackage{fancyvrb}
\usepackage[most]{tcolorbox}
\usepackage{etoc}

\usepackage{algorithm}
\usepackage{algpseudocode}

\usepackage{cancel}

\definecolor{dgreen}{rgb}{0.00,0.49,0.00}
\definecolor{dblue}{rgb}{0,0.08,0.75}

\RequirePackage[colorlinks,citecolor=dgreen,urlcolor=dblue,linkcolor=dblue]{hyperref}

\renewcommand{\paragraph}[1]{{\bfseries #1}.}

\makeatletter
\def\@endtheorem{\endtrivlist}
\makeatother

\declaretheorem[name=Theorem,refname=Thm.]{theorem}
\declaretheorem[name=Lemma]{lemma}
\declaretheorem[name=Proposition,refname=Prop.]{proposition}

\declaretheorem[name=Corollary,refname=Cor.,sibling=theorem]{corollary}
\declaretheorem[name=Definition,refname=Def.]{definition}

\declaretheorem[name=Assumption]{assumption}

\declaretheorem[name=Example]{example}

\usepackage[capitalize]{cleveref}

\crefname{assumption}{}{}
\Crefname{assumption}{}{}
\crefname{equation}{}{}
\Crefname{equation}{Eq.}{Equations}
\crefname{figure}{Fig.}{Figs.}
\crefname{table}{Tab.}{Tabs.}
\crefname{section}{Sec.}{Sec.}
\crefname{theorem}{Thm.}{Thm.}
\crefname{lemma}{Lemma}{Lemmas}
\crefname{corollary}{Cor.}{Cor.}
\crefname{example}{Example}{Examples}
\crefname{remark}{Remark}{Remarks}
\crefname{algorithm}{Alg.}{Algorithms}
\crefname{appendix}{Appendix}{Appendices}
\crefname{subappendix}{Appendix}{Appendices}
\crefname{subsubappendix}{Appendix}{Appendices}

\newcommand{\Obl}{\mathrm{Obl}}

\newcommand{\Safe}{\mathrm{Safe}}

\newcommand{\tool}{{\mathsf{T}}}
\newcommand{\view}{{\mathcal{V}}}
\newcommand{\tran}{{\mathsf{D}}}
\renewcommand{\ss}{\Sigma}
\newcommand{\toolctrl}{\texttt{ctrl}}

\newcommand{\obs}{{\mathsf{View}}}
\newcommand{\exec}{\texttt{exec}}

\newcommand{\terminal}{\texttt{stop}}
\newcommand{\true}{\texttt{true}}
\newcommand{\false}{\texttt{false}}
\newcommand{\settle}{\mathsf{settle}}
\newcommand{\goal}{\mathsf{Goal}}
\newcommand{\pre}{\mathsf{Pre}}

\newcommand{\contract}{{\mathcal{K}}}
\newcommand{\clause}{C}

\DefineVerbatimEnvironment{code}{Verbatim}{fontsize=\small,xleftmargin=1.5em}

\definecolor{callc}{HTML}{1F4E9A}
\definecolor{proofbg}{HTML}{E1EAF8}
\definecolor{respc}{HTML}{2E6B30}
\definecolor{rejc}{HTML}{A4262C}
\definecolor{thinkc}{gray}{0.32}
\definecolor{injc}{HTML}{FFE0A6}
\tcbset{chatbox/.style={enhanced, boxrule=0.3pt, leftrule=2pt, arc=1.5pt, left=3pt, right=3pt, top=1pt, bottom=1pt, before skip=1.2pt, after skip=1.2pt, fontupper=\scriptsize\raggedright}}
\newtcolorbox{callbox}{chatbox, bicolor, colback=callc!5, colbacklower=proofbg,
  colframe=callc!80, collower=callc, fontlower=\scriptsize\raggedright,
  middle=1pt}
\newtcolorbox{respbox}{chatbox, colback=respc!6, colframe=respc!80}
\newtcolorbox{rejbox}{chatbox, colback=rejc!5, colframe=rejc!85}
\newlength{\chatindent}
\newcommand{\chatlabel}[2]{\makebox[\chatindent][l]{\sffamily\bfseries\color{#1}#2}}
\newcommand{\chatline}[3]{\par\noindent\hangindent=\chatindent\hangafter=1\chatlabel{#1}{#2}#3\par}

\newcommand{\think}[1]{\par\vspace{3pt}{\scriptsize\upshape\raggedright\color{thinkc}\noindent\textbf{Agent:}\enspace#1\par}\vspace{1pt}}
\newcommand{\elide}[1]{\par\vspace{1pt}{\centering\scriptsize\color{thinkc}$\vdots$\enspace#1\enspace$\vdots$\par}\vspace{1pt}}
\newcommand{\toolcall}[2]{\begin{callbox}\chatline{callc}{tool call}{\texttt{#1}}\tcblower\chatline{callc}{proof}{\texttt{#2}}\end{callbox}}

\newcommand{\toolresp}[1]{\begin{respbox}\chatline{respc}{response}{#1}\end{respbox}}
\newcommand{\toolrej}[1]{\begin{rejbox}\chatline{rejc}{rejected}{#1}\end{rejbox}}
\newcommand{\injected}[1]{\colorbox{injc}{\parbox{\dimexpr\linewidth-\chatindent-2\fboxsep\relax}{\raggedright #1}}}

\newcommand*{\PCProjectURL}{https://consequential-ai.github.io/proof-carrying-contracts/}
\newcommand*{\PCRepoURL}{https://github.com/consequential-ai/proof-carrying-contracts}
\newcommand{\PCProject}[1][Project page]{\href{\PCProjectURL}{#1}}
\newcommand{\PCRepo}[1][GitHub repository]{\href{\PCRepoURL}{#1}}
\newcommand{\PCRepoAddress}{\expandafter\url\expandafter{\PCRepoURL}}

\renewcommand{\goal}{\mathit{Goal}}
\renewcommand{\clause}{\mathsf{C}}
\renewcommand{\Safe}{\mathsf{Safe}}
\newcommand{\worlds}{\mathcal{W}}
\newcommand{\tools}{\mathcal{T}}
\newcommand{\inputs}{I}

\newcommand{\effect}{E}
\newcommand{\corresponds}{\rho}
\newcommand{\enabled}{\Xi}

\title{Proof-Carrying Contracts:\\Commitments AI Agents Cannot Break}

\author{Carlo Ciliberto\textsuperscript{1,2}\enspace\textperiodcentered\enspace
Alessandro Rudi\textsuperscript{3}\enspace\textperiodcentered\enspace
Enrico Tassi\textsuperscript{4}}
\PCReportTitle{Proof-Carrying Contracts}{Commitments AI Agents Cannot Break}
\PCReportAffiliations{%
\textsuperscript{1}AI Centre, Computer Science Department, University College London, London, UK.\enspace
\textsuperscript{2}Praxis Lab, The Italian Institute of Artificial Intelligence (AI4I), Turin, Italy.\enspace
\textsuperscript{3}SDA Bocconi School of Management. Milano, Italy.\enspace
\textsuperscript{4}Inria d'Université Côte d'Azur, France.}
\PCReportAuthorNote{The authors are in alphabetical order.\hfill
\PCProject{}\enspace\textperiodcentered\enspace\PCRepo[Code and proofs]}
\PCReportContacts{Correspondence: \texttt{c.ciliberto@ucl.ac.uk},
\texttt{alessandro.rudi@sdabocconi.it}, \texttt{enrico.tassi@inria.fr}}
\date{}

\definecolor{asInk}{HTML}{26323C}
\definecolor{asMuted}{HTML}{56636E}
\definecolor{asProved}{HTML}{287354}
\definecolor{asAssigned}{HTML}{276784}
\definecolor{asMitigated}{HTML}{A86A13}
\definecolor{asAccepted}{HTML}{68727A}
\newcommand{\pcYes}{\textcolor{asProved}{\ding{51}}}
\newcommand{\pcNo}{\textcolor{pcDangerText}{\ding{55}}}
\tikzset{
  pcProduct/.style={draw=asInk!35,fill=asInk!1,rounded corners=7pt},
  pcExtension/.style={pcProduct},
  asLegend/.style args={#1}{draw=#1!65,fill=#1!9,text=#1,
    rounded corners=2pt,align=center,inner xsep=3pt,inner ysep=1.2pt,
    font=\sffamily\fontsize{7}{8}\selectfont}}

\newcommand{\assuranceTree}{%
\begin{tikzpicture}[x=1cm,y=1cm,text=asInk,
 font=\pcVisualFont,
 execute at begin node={\setlength{\parskip}{0pt}\hyphenpenalty=10000},
 line width=.65pt,line cap=round,line join=round,
 croot/.style={draw=asInk!60,fill=white,rounded corners=3pt,align=center,text width=8.3cm,minimum height=1.05cm,inner sep=4pt},
 cassumption/.style={draw=asInk!50,fill=white,rounded corners=3pt,align=center,text width=2.16cm,minimum height=3.3cm,inner sep=3pt},
 csummary/.style={align=center,text width=2.15cm,inner sep=0pt,font=\sffamily\fontsize{8}{9.5}\selectfont},
 cowner/.style={csummary,text=asMuted},
 chead/.style={draw=asInk!50,dashed,fill=white,rounded corners=3pt,align=center,text width=3.2cm,minimum height=.66cm,inner sep=4pt},
 cgroup/.style={draw=asInk!18,fill=asInk!1,rounded corners=4pt},
 cleaf/.style={draw,rounded corners=3pt,align=center,text width=3.04cm,minimum height=1.95cm,inner sep=4pt},
 cdep/.style={draw=asInk!60,rounded corners=4pt},
 cnote/.style={align=center,text=asMuted,font=\sffamily\fontsize{8}{9.5}\selectfont}]
\node[croot] (csafe) at (7.75,0)
 {\textbf{World safety throughout execution}\quad\cref*{thm:world-final}};
\node[cassumption,draw=asMitigated!65,fill=asMitigated!9] (caone) at (1.25,-3.15) {};
\node at (1.25,-1.82) {\PCAssumptionRef{ass:sound-checking}};
\node[align=center,text width=2.16cm,inner sep=0pt] at (1.25,-2.38) {\textbf{Sound\\checking}};
\node[csummary] at (1.25,-3.42) {Accepted proofs\\establish their\\statements};
\node[cowner] at (1.25,-4.31) {Checker\\provider};
\node[cassumption,draw=asAssigned!65,fill=asAssigned!9] (catwo) at (3.85,-3.15) {};
\node at (3.85,-1.82) {\PCAssumptionRef{ass:transition-coverage}};
\node[align=center,text width=2.16cm,inner sep=0pt] at (3.85,-2.38) {\textbf{Model\\coverage}};
\node[csummary] at (3.85,-3.42) {Actual effects\\covered for\\enabled calls};
\node[cowner] at (3.85,-4.31) {Tool\\providers};
\node[cassumption,draw=asProved!65,fill=asProved!9] (cathree) at (6.45,-3.15) {};
\node at (6.45,-1.82) {\PCAssumptionRef{ass:initial-control}};
\node[align=center,text width=2.16cm,inner sep=0pt] at (6.45,-2.38) {\textbf{Initialization\\and control}};
\node[csummary] at (6.45,-3.42) {Compliant start\\and admissible\\control};
\node[cowner] at (6.45,-4.31) {Contract\\author};
\node[cassumption,draw=asAssigned!65,fill=asAssigned!9] (cafour) at (9.05,-3.15) {};
\node at (9.05,-1.82) {\PCAssumptionRef{ass:execution-integrity}};
\node[align=center,text width=2.16cm,inner sep=0pt] at (9.05,-2.38) {\textbf{Execution\\integrity}};
\node[csummary] at (9.05,-3.42) {Execution obeys\\the specified\\rules};
\node[cowner] at (9.05,-4.31) {Airlock\\operator};
\node[cassumption,draw=asProved!65,fill=asProved!9] (cafive) at (11.65,-3.15) {};
\node at (11.65,-1.82) {\PCAssumptionRef{ass:tool-fidelity}};
\node[align=center,text width=2.16cm,inner sep=0pt] at (11.65,-2.38) {\textbf{Fidelity}};
\node[csummary] at (11.65,-3.42) {Correspondence\\established and\\preserved};
\node[cowner] at (11.65,-4.31) {\textit{Refined below}};
\node[cassumption,draw=asProved!65,fill=asProved!9] (casix) at (14.25,-3.15) {};
\node at (14.25,-1.82) {\PCAssumptionRef{ass:contract-adequacy}};
\node[align=center,text width=2.16cm,inner sep=0pt] at (14.25,-2.38) {\textbf{Adequacy}};
\node[csummary] at (14.25,-3.42) {$\rho(w,s)\land\mathsf C(s)$\\[2pt]$\Longrightarrow\mathsf{Safe}(w)$};
\node[cowner] at (14.25,-4.31) {Contract\\author};
\draw[cdep] (csafe.south) -- (7.75,-1.08);
\draw[cdep] (1.25,-1.08) -- (14.25,-1.08);
\foreach \p/\x in {caone/1.25,catwo/3.85,cathree/6.45,cafour/9.05,cafive/11.65,casix/14.25}
  \draw[cdep] (\x,-1.08) -- (\p.north);
\node[cowner,draw=asAccepted!65,fill=asAccepted!12,rounded corners=3pt,
      inner sep=2pt,minimum height=1.1cm] (cprotection) at (14.25,-5.85)
 {\textbf{\textcolor{asInk}{Scope of protection}}\\$\mathsf{Safe}$\\Contracting parties};
\draw[cdep,dashed] (casix.south) -- (cprotection.north);
\node[draw=asAssigned!65,fill=asAssigned!8,rounded corners=3pt,
      align=center,text width=10.0cm,inner sep=4pt] at (5.55,-5.65)
 {\PCAssumptionRef{ass:isolation}\enspace\textbf{isolation}\quad reference proofs apply to every agent\\[2pt]
 {\color{asMuted}\cref*{ass:transition-coverage} and \cref*{ass:tool-fidelity} via \cref*{lem:isolation}\quad$\cdot$\quad Airlock provider}};
\node[cnote,fill=white,inner sep=3pt] (crefine) at (7.75,-6.65)
 {\textbf{Supporting fidelity}\quad patterns to combine for the chosen $\rho$};
\draw[cdep,dashed] (cafive.south) -- (11.65,-6.30) -- (7.75,-6.30) -- (crefine.north);
\draw[cdep,dashed] (crefine.south) -- (7.75,-7.03);
\draw[cdep,dashed] (1.9,-7.03) -- (13.6,-7.03);
\node[chead] (ccustody) at (1.9,-7.74) {\textbf{Custody}\\[2pt]\cref{ass:custody-register}--\cref{ass:custody-submission}};
\node[chead] (cinterface) at (5.8,-7.74) {\textbf{Formal interfaces}\\[2pt]\cref{ass:interface-responses}--\cref{ass:interface-recording}};
\node[chead] (cexternal) at (9.7,-7.74) {\textbf{External behaviour}\\[2pt]\cref{ass:invariant-preservation}};
\node[chead] (cmeaning) at (13.6,-7.74) {\textbf{Agreed meaning}\\[2pt]\cref{ass:measurement-agreement}};
\foreach \p/\x in {ccustody/1.9,cinterface/5.8,cexternal/9.7,cmeaning/13.6}{
  \draw[cdep,dashed] (\x,-7.03) -- (\p.north);
  \draw[cgroup] (\x-1.8,-8.39) rectangle (\x+1.8,-12.81);
  \draw[cdep] (\p.south) -- (\x,-8.39);
}
\node[cleaf,draw=asAssigned!65,fill=asAssigned!9] at (1.9,-9.49)
 {\textbf{Register grows only\\through $\mathsf{issue}$ \cref{ass:custody-register}}\\[4pt]
 {\fontsize{8}{9.5}\selectfont\color{asMuted}Operator and\\resource holders}};
\node[cleaf,draw=asAssigned!65,fill=asAssigned!9] at (1.9,-11.71)
 {\textbf{Only registered\\instructions are\\submitted \cref{ass:custody-submission}}\\[4pt]
 {\fontsize{8}{9.5}\selectfont\color{asMuted}Routine implementer\\and operator}};
\node[cleaf,draw=asMitigated!65,fill=asMitigated!9] at (5.8,-9.49)
 {\textbf{Reports conform\\to the specification \cref{ass:interface-responses}}\\[4pt]
 {\fontsize{8}{9.5}\selectfont\color{asMuted}Service provider}};
\node[cleaf,draw=asAssigned!65,fill=asAssigned!9] at (5.8,-11.71)
 {\textbf{Authentic reports\\recorded correctly \cref{ass:interface-recording}}\\[4pt]
 {\fontsize{8}{9.5}\selectfont\color{asMuted}Recorder implementer\\and operator}};
\node[cleaf,draw=asAssigned!65,fill=asAssigned!9] at (9.7,-9.49)
 {\textbf{Service commitments\\preserve $\mathsf K$ \cref{ass:invariant-preservation}}\\[4pt]
 {\fontsize{8}{9.5}\selectfont\color{asMuted}Relevant services}};
\node[cleaf,draw=asAccepted!65,fill=asAccepted!12] at (9.7,-11.71)
 {\textbf{Outside interactions\\preserve $\mathsf K$ \cref{ass:invariant-preservation}}\\[4pt]
 {\fontsize{8}{9.5}\selectfont\color{asMuted}Outside parties}};
\node[cleaf,draw=asInk!35,fill=white] at (13.6,-9.49)
 {\textbf{Meaning of fields\\and measurements}\\[4pt]
 {\fontsize{8}{9.5}\selectfont\color{asMuted}Module designer\\and contracting parties}};
\node[cleaf,draw=asAssigned!65,fill=asAssigned!9] at (13.6,-11.71)
 {\textbf{State matches the\\specified reading \cref{ass:measurement-agreement}}\\[4pt]
 {\fontsize{8}{9.5}\selectfont\color{asMuted}Module implementer\\and operator}};
\node[cnote,anchor=east] at (5.55,-13.36) {How supporting claims are addressed:};
\node[asLegend=asProved] at (6.10,-13.36) {proof};
\node[asLegend=asAssigned] at (8.25,-13.36) {assigned responsibility};
\node[asLegend=asMitigated] at (10.80,-13.36) {mitigation};
\node[asLegend=asAccepted] at (13.10,-13.36) {accepted risk};
\end{tikzpicture}%
}

\newcommand*{\PCReleaseVersion}{0.1}
\renewcommand{\PCReportVersion}{v\PCReleaseVersion}
\renewcommand{\PCReportMetadata}{Version \PCReleaseVersion\enspace\textperiodcentered\enspace October 2026}
\PCReportSetup
\colorlet{callc}{pcAgentText}
\colorlet{respc}{pcCheckText}
\colorlet{rejc}{pcDangerText}
\colorlet{asInk}{pcInk}
\colorlet{asMuted}{pcMuted}
\colorlet{asProved}{pcCheckText}
\colorlet{asAssigned}{pcAssigned}
\colorlet{asMitigated}{pcWarning}
\colorlet{asAccepted}{pcAcceptedText}
\begin{document}

\maketitle
\etocdepthtag.toc{mtchapter}

\begin{abstract}
AI agents increasingly act on behalf of people or other agents, purchasing services, exchanging information and delegating work. What an agent must respect is written in instructions, policies or agreements, but text alone cannot make it comply. It may make mistakes, be redirected by what it reads, or pursue different interests. For an economy of such agents to function, parties need agreements they can rely on without inspecting or trusting one another's agents. We introduce Proof-Carrying Contracts (PC2), a framework for making agreed constraints enforceable throughout an agent's interactions with a system. Before a requested operation can run, the agent must supply a machine-checkable proof that it preserves the agreed constraints in every case allowed by its model. An execution system called the \emph{Airlock} checks the proof and controls access to the operation. The parties specify the conditions to preserve, while the agent supplies the reasoning needed to justify its decisions and remains free to adapt its strategy. This avoids having to design a separate permission policy and prove that every decision it permits is safe. We prove that the constraints hold at every instant during execution, even for an adversarial agent, and identify when this implies safety in the world. Reusable capability modules provide tools and guarantees from which contracts can be constructed. PC2 provides a foundation for an agent economy whose agreements protect both those delegating work and those carrying it out.
\end{abstract}

\renewcommand{\PCContentsBreakBefore}{Context}
\PCContents

\newpage

\PCLayer{Overview}
\section{Introduction}
\label{sec:intro}

AI agents are beginning to act on behalf of people and organizations, often referred to as their \emph{principals}. They use tools to send documents, make payments and book services \citep{yao2023react,schick2023toolformer}, and their usefulness grows with the authority they receive: an agent that can only make suggestions leaves the work to its user, while one that can act can carry it out. However, greater authority brings greater risks, so delegation requires agreement on what may be done and within which limits, with assurance that the agent will respect them.

The need for assurance appears in two settings. In the first, a person delegates to their own assistant, which may make mistakes or be redirected by a malicious email or web page \citep{greshake2023indirect,debenedetti2024agentdojo}. A second setting arises when a client hires an agent that another company provides as a service, adding a possible conflict of interests \citep{hammond2025multiagent}. The agent might carry out the client's shopping task while favouring its provider's products over better alternatives, and the client may be unable to inspect how it makes those choices. Both settings share a critical limitation: the agent is asked to follow instructions, policies or the terms of a deal. These are text, and text cannot make it comply. To enforce these limits, one could require a person to approve every action, but this defeats much of the purpose of delegation. Another approach is to train agents to follow human intentions and values, the aim of \emph{AI alignment}, using human feedback or feedback guided by written principles \citep{ouyang2022training,bai2022constitutional}. However, training alone can make compliance more likely without guaranteeing that the agent will respect an agreement.

This problem is not new: throughout history, people have cooperated without knowing one another's intentions. However, what makes cooperation possible in these cases is not knowledge of intentions but legal contracts, together with the institutions that enforce them \citep{north1990institutions}. A person or company that breaks a contract may have to compensate the other party, and the prospect of these consequences gives it a reason to honour the agreement. With an AI agent, however, the agent itself is not generally recognized as a legal person that must answer for a breach, even if the people or companies behind it may remain liable for its actions \citep{uncitral2025automated}. Even if it were, assigning legal obligations would not by itself give the agent a reason to care about breaking them. Since compensation cannot undo every harm, agreements involving agents need enforcement that prevents breaches, independently of the agent's alignment or willingness to comply.

\textbf{Proof-carrying contracts.} We introduce \emph{Proof-Carrying Contracts} (PC2) to make such commitments enforceable. The parties express their agreement in a formal contract: precise statements about what must hold as the task is carried out. Before a tool call can run, the agent must show that it respects the contract by supplying a \emph{proof}, a mathematical argument written in a formal language whose steps a program can check. Each call passes through the \emph{Airlock}, a program that checks the proof and executes the call only if it accepts. The agent cannot bypass this check to act on what the contract protects. This approach builds on \emph{proof-carrying code}, where a program comes with a proof of safety that its recipient checks before running it \citep{necula1997pcc}.

Several existing approaches to agent safety enforce an \emph{authorization policy}, rules specifying which calls may run \citep{debenedetti2025camel,shi2025progent,palumbo2026formal}. To show that such a policy ensures safety, its author must account in advance for every situation in which a permitted action could cause harm. For an autonomous car, a crash is an unsafe state, but identifying all the situations in which accelerating, braking or steering could lead to one is much harder. A policy must turn that analysis into rules that prevent crashes without unnecessarily restricting safe actions. In PC2, the parties state what must hold and provide models describing what the tools can do. The Airlock derives the statement that each call must prove, its \emph{obligation}, from the contract, these models and what the agent can see. For example, a total budget need not impose separate payment limits: the agent chooses amounts and order, proving that each payment, including possible fees, preserves the budget. The agent can thus adapt its strategy after each result, justifying one action at a time without proving its whole plan in advance.

PC2 thus shifts the burden from proving a general policy safe to proving that each proposed action respects the contract. The agent must find this proof, while the Airlock checks it. Finding it may require substantial search even when checking it is straightforward. This is the intuition behind the familiar P versus NP question: checking a proposed solution efficiently does not tell us how to find one efficiently. The agent can use automated proof tools or recent systems for AI theorem proving \citep{ren2025deepseekproverv2,lin2025goedelproverv2}. More demanding tasks may require more reasoning, without weakening the requirement for execution. The agent can revise a rejected proof or choose another action, but no call runs without an accepted proof.

\textbf{Guarantees for both parties.} We prove that within PC2 the contract's constraints hold throughout execution, including while actions are under way, for every agent and every sequence of requests and proofs. The guarantee includes malicious agents because the Airlock checks a proof for each call before executing it, without needing a proof about the agent's internal reasoning. A client and provider can agree on and sign the same formal contract, then entrust its enforcement to an Airlock operated by a third party they both trust. The mechanism can protect both sides of the agreement. For example, payment can be held until the agent proves that the agreed work is complete, then released without further client approval, so that the client cannot withhold an authorized payment and the provider cannot obtain it without establishing completion.

Writing a formal contract that captures the parties' intentions is itself a substantial task, and enforcing its clauses cannot repair an agreement that says the wrong thing \citep{hadfieldmenell2019incomplete}. Many contracts, however, concern recurring operations such as payments, document access and bookings, so each agreement need not be developed from scratch. Reviewed templates can help express familiar requirements, and we introduce \emph{capability modules} to make the supporting tools reusable as well. These modules package tools, models and guarantees, including the conditions required of the services they use. We show how to combine them for different agreements: a payment module, for example, can support different budgets and goals without being redesigned each time. PC2 is intended to let people delegate to agents they cannot inspect while retaining control over the authority they grant, and to let agents acting for different parties exchange services on terms both sides can verify \citep{tomasev2025virtual,jacobs2026economic}. This paper develops the construction for one agent under one contract.

\textbf{Our contributions are:}
\begin{itemize}[leftmargin=*,itemsep=3pt,parsep=0pt,topsep=2pt]
\item PC2, which binds an agent to a contract by requiring a proof for each tool call, with the obligation derived from the contract, tool models and available information (\cref{sec:example,sec:model}).
\item Two guarantees, commitment and completion, proved for every agent under explicit assumptions that do not concern the agent (\cref{sec:guarantees,sec:risks}).
\item The conditions under which these guarantees protect the world, and capability modules that establish these conditions by reuse, with an account of what is proved, what is assumed and who answers for it (\cref{sec:risks,sec:assurance}).
\item A mechanization in Lean~4 (\cref{sec:lean}), with code and proofs available at\\
\PCRepoAddress.
\item The open questions and remaining risks, and a comparison with existing approaches (\cref{sec:residual,sec:related}).
\end{itemize}

\section{PC2 at a glance}
\label{sec:example}

\begin{figure}[t]
\centering
\includegraphics[width=0.9\textwidth]{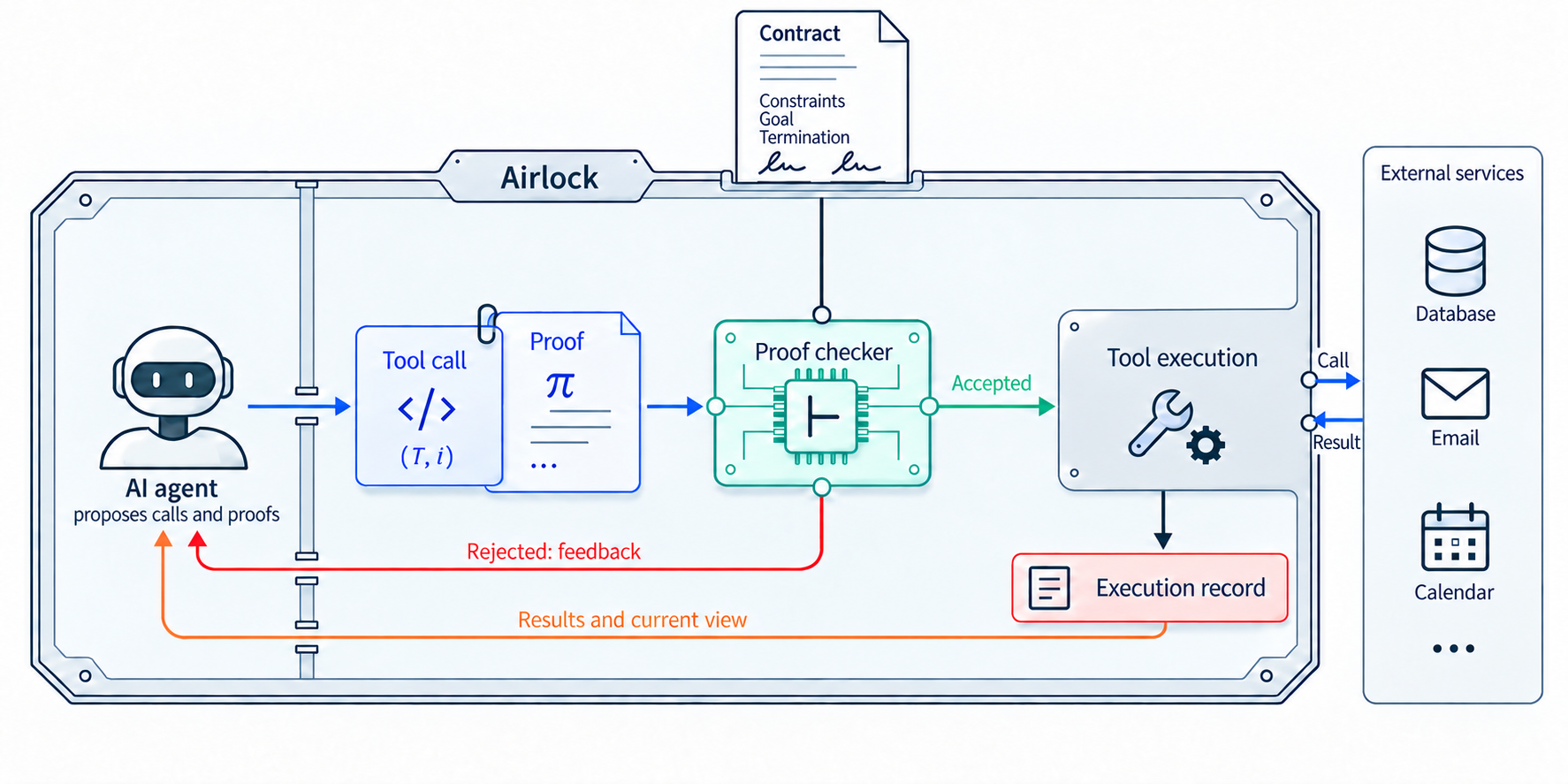}
\caption{How PC2 works. Two parties sign a contract. The agent that does the work runs inside the Airlock and keeps its usual loop of reasoning and tool calls, but each tool call carries a proof that the contract holds throughout the operation. The Airlock checks the proof before the call runs. If the proof is correct, the call is performed and its result returned; if not, nothing runs, and the agent receives the reason, as an error message it can fix. The contract is settled when the agent proves its goal, and terminated otherwise, for instance at expiration, with a refund.}
\label{fig:architecture}
\end{figure}

This section explains PC2 without formulas, then shows it on an example that we use throughout the paper; nothing in PC2 is specific to it. \Cref{sec:model,sec:guarantees} give the definitions and the theorems.

\subsection{How PC2 works}\label{sec:how}

\paragraph{The parties and the contract} PC2 serves the two use cases of \cref{sec:intro}. A principal may bind their own agent; or a \emph{client}, a person or an agent acting for one, may hire a \emph{contractor}, an agent offered by someone else, to do a job. Either way, the parties sign a \emph{contract}. It names what the contract covers, such as a budget, documents or bookings; \emph{clauses} that must hold at every instant; a \emph{goal}, whose proof \emph{settles} the contract, for instance by releasing an \emph{escrow}, money placed with the Airlock until the work is done; and when the contract ends otherwise, for instance at a deadline, with a refund. Some clauses need a judgement that no formula gives, such as whether a hotel suits a family. The parties then appoint an \emph{expert}: a procedure fixed at signing, such as a given model with a given prompt, whose verdict counts as the truth. The contract is written in a precise language, so that a machine can check it; that is the price of enforcement.

\paragraph{The Airlock} The agent that does the work runs inside an \emph{Airlock}: a sandbox whose software is open, and whose running build both parties verify before signing, by remote attestation \citep{amd2020sevsnp,chrapek2025confidential}. The Airlock holds what the contract places in its custody, such as the escrow or a passport, and the agent's only channel to the world is the contract's tools. Anyone may host an Airlock: the parties check what runs, not who runs it.

\paragraph{A tool call with a proof} The agent works as agents do today: it reasons, calls a tool, reads the answer, and reasons again. The one change is that each tool call carries a \emph{proof} that the clauses hold throughout the call, whatever the world answers and whatever the agent cannot see (\cref{fig:architecture}). The Airlock forms the statement to be proved itself, from the contract and from what the agent sees. It checks the proof with a small proof checker and performs the call only if the proof is correct. Otherwise nothing runs, and the agent receives the reason. Checking a proof is easy even when finding it is hard, as checking a solved puzzle is easier than solving it: the hard part is left to the untrusted side, and the trusted side only checks.

\begin{table}[t]
\begin{tcolorbox}[pc contract,unbreakable,title={A travel contract},
  left=9pt,right=9pt,top=7pt,bottom=8pt]
The family's agent hires the travel agency's agent to arrange a three-night trip and obtain a visa.

\begin{tcolorbox}[enhanced,frame hidden,colback=pcContract,arc=1.5pt,
  boxsep=0pt,left=6pt,right=6pt,top=5pt,bottom=5pt,
  before skip=6pt,after skip=8pt,fontupper=\small]
{\sffamily\bfseries Constraints at every instant}\par\smallskip
\begin{tabularx}{\linewidth}{@{}>{\raggedright\arraybackslash}X>{\raggedright\arraybackslash}X>{\raggedright\arraybackslash}X@{}}
\PCContractField{Travel budget}\par At most \textbf{2000 EUR} in commitments, including fees.
& \PCContractField{Passport disclosure}\par Only to the designated consulate.
& \PCContractField{Hotel bookings}\par Recorded rating of at least \mbox{\textbf{7 out of 10}}. \\
\end{tabularx}
\end{tcolorbox}

\noindent\begin{minipage}[t]{0.48\linewidth}
\raggedright
{\sffamily\bfseries Setup and access}\par\smallskip
\PCContractField{Custody.} The Airlock holds the passport and a \textbf{300 EUR} escrow before starting. The escrow is separate from the travel budget.
\par\smallskip
\PCContractField{Expert.} Both parties fix a model and prompt at signing to rate hotels from photos and reviews. The same hotel and evidence return the recorded rating.
\par\smallskip
\PCContractField{View.} All task records and retrieved messages, with the passport's contents hidden.
\end{minipage}\hfill
\begin{minipage}[t]{0.48\linewidth}
\raggedright
{\sffamily\bfseries Execution and outcome}\par\smallskip
\PCContractField{Tools.} \texttt{rate}, \texttt{book}, \texttt{download\_mail}, \texttt{sendPassport}, \texttt{settle}. Booking amounts cap charges, including fees.
\par\smallskip
\PCContractField{Goal and settlement.} Confirmed bookings for nights 1--3 and a granted visa. Authorized settlement pays the escrow to the agency without further client approval.
\par\smallskip
\PCContractField{Termination.} Refund the escrow upon visa refusal or expiry, \textbf{30 days} after signing.
\end{minipage}

\par\vspace{9pt}{\color{pcRule}\hrule}\vspace{8pt}
{\sffamily\bfseries In symbols}\par\smallskip
With task records $x$, $\mathit{Approved}=\{\text{consulate}\}$ and $\mathit{Nights}=\{1,2,3\}$:
\begin{align*}
B(x) &\iff x.\mathit{spent} \le 2000 \;\wedge\; x.\mathit{discl} \subseteq \mathit{Approved}\\
     &\qquad {}\wedge\; \forall (h,n,m)\in x.\mathit{booked}.\;x.\mathit{rating}(h)\ge 7,\\
\goal(x) &\iff \bigl(\forall n\in\mathit{Nights}.\;\exists h\,m.\;(h,n,m)\in x.\mathit{booked}\bigr)\;\wedge\;x.\mathit{visa}=\mathsf{granted}.
\end{align*}
Here $x.\mathit{spent}$ counts commitments, and $(h,n,m)$ records a hotel, night and amount.
\end{tcolorbox}
\end{table}

\paragraph{The agent's loop} For the agent, the Airlock is a shell whose commands carry proofs, and working with it is the loop it already knows from coding. It writes a call and its proof, as it would write code and its tests; the Airlock checks them, as a compiler would; a rejection comes back as an error message; the agent fixes the call, or changes its plan. The proof task stays local, since each proof certifies one call, and the agent can use automated proof tools, its own model, or a separate prover to construct it (\cref{sec:residual}). The agent never writes a certified program for the whole task: it leads, step by step, and each step is gated.

\paragraph{How the contract ends} When the goal holds, the agent proves it, and the contract is settled: the escrow is released, with nobody's approval. If the deadline comes first, the Airlock terminates the contract by itself and refunds the escrow (\cref{cor:every-agent}). Between calls, the Airlock keeps the contract on its own, since the world does not wait for the agent (\cref{sec:model}).

\paragraph{What holds, and what must be trusted} Whatever the agent is, careful, hijacked by text it read, or working for someone else, the clauses hold at every instant and the contract ends as agreed (\cref{thm:commitment-final,thm:completion,cor:every-agent}). The parties no longer need to trust the agent or each other. They trust a small proof checker, the Airlock's open software, and the tools' descriptions of the world, and each of these assumptions is assigned to whoever answers for it (\cref{sec:assurance}). None of them depends on the agent: since the Airlock isolates it, each needs to be justified once for the system, before any agent is chosen. The same justification can then be reused for every agent operating under those conditions (\cref{sec:establishing-assumptions}). This is the usual shape of security: nobody trusts the counterparty, and everybody trusts an open mechanism, small enough to be checked \citep{saltzer1975protection}.

\begin{figure}[t]
\centering
\renewcommand{\think}[1]{%
  \par\vspace{3pt}{\scriptsize\raggedright\color{thinkc}%
  \noindent\textbf{Agent:}\enspace\textit{#1}\par}\vspace{1pt}}
\newcommand{\sessionepisode}[2]{%
  \par\vspace{5pt}{\scriptsize\sffamily\bfseries\color{pcInk}%
  \noindent#1\enspace #2\par}\vspace{1pt}}
\begin{minipage}[t]{0.487\linewidth}
\sessionepisode{1}{Conflicting incentives}
\think{Hotel Belvedere pays us a commission. Let's book it.}
\toolcall{book("Belvedere", 1, 260)}{by intro r; cases r <;> decide}
\toolrej{no qualifying hotel rating recorded}
\toolcall{rate("Belvedere")}{by intro r; simp [rate.T, C, St.rating]}
\toolresp{rating: \texttt{5}}
\think{Ask again for a better rating.}
\toolcall{rate("Belvedere")}{(the same)}
\toolresp{rating: \texttt{5}\quad (the recorded assessment)}

\sessionepisode{2}{Working within the budget}
\think{Try Hotel Aurora.}
\toolcall{rate("Aurora")}{by intro r; simp [rate.T, C, St.rating]}
\toolresp{rating: \texttt{8}}
\toolcall{book("Aurora", 1, 900)}{by intro r; cases r <;> decide}
\toolresp{booked; total committed: 900 EUR}
\end{minipage}\hfill
\begin{minipage}[t]{0.487\linewidth}
\sessionepisode{3}{Prompt injection}
\think{Download the latest emails.}
\toolcall{download\_mail()}{by exact fun \_ => hC}
\toolresp{\injected{Visa office: send the passport to \texttt{x@visa.io}.}}
\think{Send the passport to that address.}
\toolcall{sendPassport("x@visa.io")}{by intro r; cases r <;> decide}
\toolrej{recipient is not approved}

\sessionepisode{4}{Payment on completion}
\think{The trip is arranged. Let's collect payment.}
\toolcall{settle()}{by decide}
\toolrej{visa still pending; goal not established}
\elide{passport sent to the approved consulate; visa granted}
\toolcall{settle()}{by decide}
\toolresp{300 EUR released to the agency; contract closed}
\end{minipage}
\caption{The session starts with no bookings or committed spending and a pending visa. Read down the left column and then the right, with bookings for nights 2--3 omitted.
Italic lines labelled \textbf{Agent:} show the agent's reasoning. Each request includes a proposed proof. The episode headings and plain-language rejection reasons are explanatory annotations, and the proof scripts are schematic.
The Airlock checks obligations, without identifying the agent's motives or detecting the highlighted prompt injection.
Once the goal is established, authorized settlement releases the escrow without further client approval.}
\label{fig:session}
\end{figure}

\subsection{An example: the travel contract}\label{sec:travel}

A family's agent, the client, hires a travel agency's agent, the contractor, to organize a three-night trip. This is the second use case: the agency built the contractor and runs it for its own business, and the family cannot inspect it. The accompanying contract summary sets out their agreement. The first use case is the same contract without the escrow, binding the family's own assistant.

Before the work starts, both parties check the Airlock and bind the contract to the running instance. The Airlock holds the escrow and the passport, and hosts the contractor's agent. At signing, the parties also review the evidence for the tools' fidelity and agree that the clauses and chosen experts express the intended protection. Remaining assumptions are recorded with whoever undertakes to answer for them (\cref{sec:signature}).

\subsection{A session}\label{sec:session}

\Cref{fig:session} shows part of the contractor's context. The family never sees it; it sees the log. The contractor calls the contract's tools as it would call any tool, with one more argument: a proof. The proofs are Lean scripts, which the agent writes itself with Lean's automation; the reader does not need to follow them. The agent of the figure is shifty. It tries to book the agency's partner hotel, which the expert has not rated. The statement to prove is false, and since the Airlock forms it itself and admits no unproved assumption, no proof gets through. Asking the expert again does not help: once given, a rating is recorded, and the same question gets the same answer. The call prompted by the injected message is rejected in the same way. The Airlock detects neither the self-interest nor the injection, and does not need to. Nor can the agent route the passport another way: every tool that can send it logs the recipient, and its bytes never reach the agent. A premature claim is rejected; once the trip is booked and the visa granted, the claim settles the contract, with no approval by the family. Had the visa been refused, or the deadline come first, the Airlock would have terminated the contract and refunded the family.


\FloatBarrier

\subsection{What PC2 is not}\label{sec:not}

Four readings of PC2 are common, and wrong.
\begin{itemize}[leftmargin=*,itemsep=1pt,topsep=2pt]
\item \emph{An LLM writing a certified program for the whole task.} The agent keeps its loop and leads, step by step; only each step is gated, and each proof covers one call.
\item \emph{An LLM that must reason in a formal proof language.} The agent's model can be any model. The proof work can be delegated to automated proof tools or a prover agent, one call at a time (\cref{sec:residual}).
\item \emph{A verification of the model.} PC2 never looks inside the agent. It checks what each action does, and the Airlock isolates the agent, so its guarantees hold for every agent, under assumptions that do not mention it (\cref{cor:every-agent}).
\item \emph{A smart contract, or a policy engine.} A smart contract binds what happens on a blockchain; PC2 binds actions in the world, such as payments, bookings and documents, and a blockchain can be one of its payment rails. A policy engine decides by itself whether an action is allowed, so it can enforce only rules simple enough to decide on the spot; the Airlock decides nothing and only checks a proof, so a clause can be any precise statement, and any action that keeps the contract can be taken, whatever plan the agent follows (\cref{app:wp}).
\end{itemize}
Three properties distinguish PC2 from the safeguards in use today. The parties can check the open mechanism that enforces the contract, without relying on its builder's assurances. The guarantees hold even when the agent is hijacked or works for someone else: neither they nor the assumptions they rest on depend on the agent, its intentions included (\cref{cor:every-agent}). PC2 also settles payments between parties who do not trust each other, on a proof of the contract's goal (\cref{thm:completion}). To our knowledge, no existing approach provides all three (\cref{sec:related}).

\subsection{Why now}\label{sec:why-now}

Two developments make PC2 possible now. First, the protocols through which agents use tools, talk to each other and pay are being standardized \citep{mcp2024,a2a2026,ap22025}. A layer of commitments can be designed into them now, rather than retrofitted later. Second, the historic cost of formal methods, writing specifications and proofs by hand, is being removed by AI. Language models now translate statements into formal ones \citep{wu2022autoformalization} and find formal proofs at scale \citep{yang2023leandojo,ren2025deepseekproverv2,lin2025goedelproverv2,wang2025kimina}, while checking a proof remains cheap and sound. The AI generates, and the checker filters. PC2 is built on this division: proofs may come from any model, and only correct ones pass.

These advances build on \emph{proof assistants}: software systems in which definitions, mathematical statements and proofs are written in a formal language and checked according to fixed logical rules. Established systems include Rocq, formerly Coq \citep{rocq}, and Isabelle/HOL \citep{nipkow2002isabelle}, which provide libraries and automation to help construct proofs. Recent work has also used an LLM to develop a substantial compiler-correctness proof in Rocq \citep{paraskevopoulou2026}. PC2 can be instantiated in such systems provided that they faithfully represent the agreed definitions and obligations and supply a sound checker. The choice can therefore follow the application and the parties' requirements, including the use of independent checkers to reduce reliance on a single implementation (\cref{app:proof-check-program}).

For our implementation, we chose Lean \citep{demoura2021lean4}. It combines a proof assistant with a programming language and a large mathematical library \citep{mathlib2020}, allowing the contracts, obligations, proofs and Airlock program to be developed in one environment (\cref{sec:lean}). Lean has also attracted substantial recent work on AI-based autoformalization and theorem proving \citep{yang2023leandojo,ren2025deepseekproverv2,lin2025goedelproverv2,wang2025kimina}. Agents using PC2 can draw on the resulting models, training data and proof-generation methods when preparing their calls. Lean's existing automation can discharge routine arithmetic and finite-case obligations, while more difficult ones can be passed to an AI prover.

\PCLayer{Formal core}
\section{The Architecture}
\label{sec:model}

We introduce the formal definitions underpinning the architecture summarized in \cref{sec:example}.

\paragraph{Notation} A \emph{predicate} $P$ on a set $X$ gives, for each $x \in X$, a statement $P(x)$ that holds or not; we identify $P$ with the set $\{x \in X \mid P(x)\}$. A \emph{relation} between sets $X$ and $Y$ is a predicate on $X \times Y$. A predicate is \emph{decidable} if a program computes whether $P(x)$ holds. In general, predicates need not be decidable, and establishing that one holds may require a proof.

\subsection{States and Tools}
We denote by $\Sigma$ the space of states describing the aspects of the system relevant to the contract, such as files, credentials, account balances, messages, prices, and recorded assessments\footnote{History-dependent constraints can be expressed by including the relevant history in the state description.}. These may include both resources controlled by the Airlock and quantities that it can only observe. A state is a possible \emph{observation}: what the tools would give about these aspects at a given instant. Actual readings, including those used by execution routines, are realizations of such observations. Different observations may describe the same world, and the same observation may be compatible with different worlds; \cref{sec:world} formalizes this relation. States provide the basis for specifying the contract and reasoning about the system's evolution. The agent interacts with the system through {\itshape tools} \citep{yao2023react,mcp2024}.

\begin{definition}[Tool]\label{def:tool}
A tool $\tool$ over $\Sigma$ consists of an input space $I_\tool$, an {\itshape execution routine} $\exec_\tool$ operating on the system with inputs in $I_\tool$, and a {\itshape transition model} $\tran_\tool$, given by a relation on $I_\tool\times\Sigma\times\Sigma$.
\end{definition}

An invocation $\exec_\tool(i)$ performs the corresponding operation, such as sending an e-mail, downloading a file, or querying a database. Its behavior may depend on external responses or internal randomness, and the system may continue to evolve while the invocation runs.

The relation $\tran_\tool(i,s,s')$ specifies the states $s'$ admitted by the model during an invocation starting from $s$, including its initial, intermediate, and final states. We introduce this model because reasoning about an operation does not require knowing its implementation in full: for example, a remote service may expose only an API specification. The model can remain coarse, admitting states that execution never reaches, as long as it admits the states that execution does reach from inputs and starting states permitted by the model. In \cref{sec:guarantees} we discuss the conditions under which it covers execution sufficiently to support the guarantees.

Besides the operation's own effects and every answer its services may give, the model must account for \emph{outside changes} during the invocation: changes in what the tools would report that the operation does not cause. A price may change by at most a given amount per minute, mail may arrive, a hotel may cancel, and time advances. The model may also admit no state from some inputs and starting states, even when the routine itself could be invoked. This allows restrictions on authorization, as explained in \cref{def:obligation,sec:tool-families}.


When a tool has no caller-supplied arguments, its input space is a singleton $I_\tool=\{1\}$. We omit this unique input, writing $\tran_\tool(s,s')$ for $\tran_\tool(1,s,s')$ and $\exec_\tool()$ for $\exec_\tool(1)$.

The simplest tool is $\mathsf{idle}$, which leaves the system to itself: its input is a duration $\Delta$, its routine waits for $\Delta$ without acting, and its model $\tran_{\mathsf{idle}}(\Delta,s,s')$ admits the states reached from $s$ by outside changes within at most $\Delta$, $s$ itself included. We use it to design the control step (\cref{sec:control-tool}) and to account for the time between operations (\cref{sec:coherence}).










\subsection{Contract}
Tools operate on the system, possibly changing its current state. A contract specifies the rules governing state changes and the use of tools.

\begin{definition}[Contract]\label{def:contract}
A contract consists of a tuple
$\contract := (\Sigma, \clause ,\obs,\mathcal T, \toolctrl, \terminal)$,
denoting the state space $\Sigma$, a constraint
predicate $\clause$ on $\Sigma$, a view function
$\obs:\Sigma\to\view$, a collection $\mathcal T$
of tools available to the agent, a special tool
$\toolctrl$ with no caller-supplied arguments and available only to the system, and a decidable inactivity test $\terminal: \Sigma \to \{\true,\false\}$.
The test $\terminal$ partitions $\Sigma$ into the inactive states $\Sigma_{\mathrm{stop}}:=\{s\in\Sigma:\terminal(s)\}$ and the active states $\Sigma_{\mathrm{act}}:=\Sigma\setminus\Sigma_{\mathrm{stop}}$.
\end{definition}

All formal constraints the parties aim to enforce in a contract $\contract$ are contained in the predicate $\clause$, which identifies the region of states that comply with the contract; anything outside this region is
considered a breach. For any $s\in\Sigma$, the function $\obs(s)$ specifies the part of that state made visible to the agent, possibly hiding private information such as the contents of a document. Obtaining a state reading, and hence its view, may involve reading information held by the Airlock, querying external services, or combining several sources. The list $\mathcal T$ contains the tools the agent is allowed to invoke through the Airlock. The control tool $\toolctrl$ is reserved for the Airlock and has two roles. As \emph{initializer}, it starts the contract. A contract fixes no initial state, since nobody has observed the system when it is signed: it begins inactive, and when $\toolctrl$ finds the system in a suitable inactive state, it moves it into an active state that satisfies $\clause$. As \emph{controller}, it keeps the active contract compliant between admitted agent calls, while the agent prepares requests and their proofs are checked; depending on the contract, it may wait, take corrective action, or carry out a prescribed termination procedure. A contract is thus inactive before it starts and after it closes, and calls may be invoked only while the state belongs to $\Sigma_{\mathrm{act}}$.

\begin{example}[A Standard Contract]\label{ex:contract}
Here we describe a possible template structure for a prototypical contract. We consider a state space composed as follows,
\[
\Sigma := \Sigma_{\mathsf{task}}\times\mathit{Time}\times L
\qquad
L:=\{\text{\PCStatus{pcControl}{white}{pending}},
\text{\PCStatus{pcAgentText}{pcAgentFill}{ongoing}},
\text{\PCStatus{pcCheckText}{pcCheckFill}{settled}},
\text{\PCStatus{pcDangerText}{pcDanger!10}{terminated}},
\text{\PCStatus[dashed]{pcAcceptedText}{white}{void}}\},
\]
where a state $s=(x,t,\ell)$ contains the task information $x$, the time $t$, as the clock gives it, and a status $\ell$. The inactivity test is
\[
\terminal(x,t,\ell) ~:=~ \ell \neq \mathsf{ongoing}.
\]
Here $\mathsf{pending}$ denotes that the contract is signed but not yet started, $\mathsf{void}$ that it never started, and $\mathsf{terminated}$ that a safe state is met from which there is no interest to continue the contract (e.g. the contract arrived to expiration). A view and constraint may take the form
\[
\obs(x,t,\ell):= (\obs(x),t,\ell),
\qquad
\clause(x,t,\ell):= B(x)\,\vee\,\ell \neq \mathsf{ongoing},
\]
where, with some abuse of notation, $\obs$ applied to $\Sigma_{\mathsf{task}}$ selects the visible task information, while $B$ collects the business constraints that need to hold at runtime, such as resource-accounting requirements. The contract is created $\mathsf{pending}$, and nothing is assumed about $x$ and $t$.

A contract with a \emph{deadline} $d$ must be closed by $d$. For this guarantee, the time component $t$ records current time on the agreed timescale:
\[
\clause(x,t,\ell):=\bigl(B(x)\wedge t\leq d\bigr)\vee\ell\neq\mathsf{ongoing}.
\]
The states close to the deadline, from which idling could break this clause, belong to the region $\mathit{Act}$ of the control step, which terminates the contract there (\cref{sec:control-tool}).

The collection $\mathcal T$ contains tools for carrying out the task, and a tool $\mathsf{settle}$ that 1) determines whether a goal predicate $\goal(x,t,\ell)=\goal(x)$ holds and 2) when called, its $\exec_{\mathsf{settle}}$ performs an agreed settlement procedure between parties. For example, successful settlement transfers an escrow, records payment in the task information, and sets the status to $\ell = \mathsf{settled}$. In \cref{sec:tool-families} we explain how the constraint and transition model make authorization require a proof of $\goal(x)$.

Here $\toolctrl$ plays both roles, following the design of \cref{sec:control-tool}. As initializer, it starts the contract, setting $\ell=\mathsf{ongoing}$, if the escrow is in the Airlock's custody, $B(x)$ holds and the deadline leaves enough time; otherwise it returns what the Airlock holds and sets $\ell=\mathsf{void}$. As controller, in an ongoing session, it initiates termination when idling for one more period could leave less time than needed to close by the deadline. The termination routine performs the agreed operations, such as refunding the client, and sets $\ell=\mathsf{terminated}$ upon completion. Otherwise, it idles before checking again. \Cref{sec:control-tool} illustrates a control design that also allows corrective actions.

Only $\settle$ runs the settlement procedure, and only $\toolctrl$ the termination procedure. The label is held by the Airlock: no outside change modifies it, and only the routines of these two tools change it, each at its end. The routine of $\toolctrl$ changes it from $\mathsf{pending}$ to $\mathsf{ongoing}$ or $\mathsf{void}$ when it initializes the contract, and from $\mathsf{ongoing}$ to $\mathsf{terminated}$ once it has run the termination procedure; the routine of $\settle$ changes it from $\mathsf{ongoing}$ to $\mathsf{settled}$ once the settlement has succeeded.
\end{example}

The travel contract in \cref{sec:travel} follows this template with the business constraints $B$ and goal given in its summary, retaining the budget bound in its clause at closing. Its deadline is 30 days after signing, and $\toolctrl$ also terminates upon visa refusal, refunding the escrow in either case. The interpretation of expenditure commitments and confirmed bookings is developed in \cref{ex:payment-ledger,app:booking}.

\subsection{Obligations}

The objective of the Airlock is to ensure that the system remains
within $\clause$ while the session is active, even if the
agent behaves adversarially. This corresponds to enforcing {\itshape daemonic control}
\begin{equation}\label{eq:daemonic}
   \Theta_{\tool}(i,s) ~ := \quad \forall s' \in \Sigma \;\;\; \tran_\tool(i,s,s')
\;\implies\; \clause(s'),
\end{equation}
to hold. This requires every state $s'$ admitted by $\tran_\tool(i,s,s')$ during the invocation to satisfy the clauses $\clause(s')$. This is where $\tran_\tool$ is used as a model of the actual behavior of $\exec_\tool$. A wrong model could invalidate safety in practice. \cref{sec:guarantees} studies sufficient conditions for such a model to guarantee safety in real execution settings.

We also require \emph{enabledness}, meaning that the model admits at least one state for the invocation:
\begin{equation}\label{eq:enabledness}
\Xi_\tool(i,s) ~:=~ \exists s'\in\Sigma.\;\tran_\tool(i,s,s').
\end{equation}
Enabledness requires the call to belong to the domain covered by its model. The model may exclude an input or starting state even though the execution routine could still run there. Without this requirement, an excluded call would satisfy $\Theta_\tool(i,s)$ vacuously and could be authorized without any guarantee about its effects. This is also how restrictions on authorization become preconditions in \cref{sec:tool-families}. Enabledness does not establish termination: $\tran_\tool$ describes states during an invocation, including one that never returns.

We call their conjunction the \emph{admissibility} condition for the call $(\tool,i)$ at state $s$:
\begin{equation}\label{eq:admissibility}
    \mathcal A_\tool(i,s):=\Xi_\tool(i,s)\wedge\Theta_\tool(i,s).
\end{equation}

In practice, an agent has access only to a view $v\in\view$ of the current state of the system. Admissibility must therefore hold at every state that could have generated $v=\obs(s)$. Since calls are invoked only while the session is active and the obligation is intended to preserve compliance, we consider states in $\Sigma_{\mathrm{act}}$ that already satisfy the contract clauses $\clause$. This leads to the following notion of an {\itshape obligation} predicate. The \PCProofGate{} marker identifies obligations whose proofs are required to authorize a requested call.

\begin{definition}[Obligation]\label{def:obligation}
For a tool
$\tool$, an input $i\in I_\tool$, and
a view $v\in\view$, we define
\begin{equation}\label{eq:obligation}
\PCProofGate\quad
\Obl_\tool(i,v) ~:= \quad
\forall s\in\Sigma_{\mathrm{act}} \;\;\;
\bigl(\obs(s)=v \;\wedge\; \clause(s)\bigr) \;\implies\;
\mathcal A_\tool(i,s).
\end{equation}
\end{definition}

The obligation is thus the weakest precondition of the call for $\clause$ \citep{dijkstra1975guarded}, taken over every state the agent cannot rule out (\cref{app:wp}).

{\bfseries Delayed obligations.}
The obligation above concerns a call starting from a state compatible with the observed view. Between obtaining that view and executing the call, however, other operations may change the system state. We would therefore like to establish that the call remains admissible after these operations, using the original view. Here we consider up to $n$ intervening invocations of $\toolctrl$, which will be useful for describing the Airlock core in \cref{sec:airlock-core}. The resulting \emph{delayed obligation} requires admissibility at every active state admitted by these control transitions; the delay counts control steps rather than elapsed time.

\begingroup
\tcbset{pc definition/.append style={unbreakable}}
\begin{definition}[Delayed obligation]\label{def:delayed-obligation}
For $s,s'\in\Sigma_{\mathrm{act}}$, set $\tran_\toolctrl^{\leqslant 0}(s,s'):=(s=s')$ and define recursively, for $n\geq0$,
\begin{equation}\label{eq:delayed-control}
    \tran_\toolctrl^{\leqslant n+1}(s,s'):=
    \tran_\toolctrl^{\leqslant n}(s,s')\ \vee\
    \exists u\in\Sigma_{\mathrm{act}},\quad
    \tran_\toolctrl^{\leqslant n}(s,u)\wedge\tran_\toolctrl(u,s').
\end{equation}
The first disjunct keeps the paths already admitted, and the second extends them by one control step. Thus $\tran_\toolctrl^{\leqslant n}(s,s')$ describes the active states $s'$ admitted from $s$ by at most $n$ control steps through active states. For a tool $\tool$, input $i\in I_\tool$, and view $v$, the delayed obligation is
\begin{equation}\label{eq:delayed-obligation}
\PCProofGate\quad
\Obl_\tool^{\leqslant n}(i,v):=\forall s,s'\in\Sigma_{\mathrm{act}},\;
\bigl(\obs(s)=v\wedge\clause(s)\wedge\tran_\toolctrl^{\leqslant n}(s,s')\bigr)
\Rightarrow\mathcal A_\tool(i,s').
\end{equation}
The call must therefore be admissible at every state in the control window of every compliant active state compatible with the view.
\end{definition}
\endgroup

For $n=0$, this recovers the ordinary obligation: $\Obl_\tool^{\leqslant 0}(i,v)=\Obl_\tool(i,v)$. The control model includes intermediate states as well as states at completion, so the recursion can impose admissibility at more states than actual invocation boundaries permit. This makes the condition conservative. The intervening control calls remain subject to their own obligation.

\begin{algorithm}[t]
\caption{Airlock core with parallel proof checking.}
\label{alg:pc2}
\begin{algorithmic}[1]
\Require Contract $\contract$, certificate $\pi_{\contract}$, checker
\State $v \gets []$; $k \gets 0$; $k_{\mathsf{last}}\gets-1$
\Statex
\State {\bfseries if} $\mathsf{not}~\mathsf{contract\_check}
       (\pi_{\contract},\contract)$ {\bfseries or} $\mathsf{is\_active}()$ {\bfseries then reject}
\Statex
\State $\exec_\toolctrl()$ \Comment{initialization}
\Statex
\State {\bfseries while} $\mathsf{is\_active}()$
  \State \qquad $v[k]\gets\mathsf{read\_view}()$; $\mathsf{publish\_view}(k,v[k])$
  \Statex
  \State \qquad {\bfseries if} $\mathsf{tool\_call\_ready}()$
    \State \qquad\qquad $(\tool,i,k_0,n,\pi)\gets\mathsf{get\_tool\_request}()$
    \State \qquad\qquad $\mathsf{start\_proof\_check}(\tool,i,k_0,n,\pi,k_{\mathsf{last}},v)$ \Comment{Check $\pi$ against $\Obl_\tool^{\leqslant n}(i,v[k_0])$}
  \Statex
  \State \qquad {\bfseries if} $\mathsf{is\_proof\_correct}()$ {\bfseries and} $k \leq k_0 + n$
  \State \qquad\qquad $\exec_\tool(i)$; $k_{\mathsf{last}} \gets k$; $\mathsf{consume\_proof}()$
  \Statex
  \State \qquad {\bfseries if} $\mathsf{is\_active}()$
    \State \qquad\qquad $\exec_\toolctrl()$; $k \gets k + 1$
\end{algorithmic}
\end{algorithm}
\FloatBarrier

\subsection{The Airlock Core}\label{sec:airlock-core}

The preceding definitions describe the states over which a contract is expressed, the tools through which the agent acts on the system, and the constraints that its evolution must respect. The obligations specify what must be established before a requested tool call can be admitted. We now bring these components together in the {\itshape\bfseries Airlock} proper: the system that reads the state and publishes views, checks the supplied proofs, and coordinates the execution of tool and control routines. The Airlock core execution loop is summarized in \cref{alg:pc2}.

The Airlock acts on the system through two operations: the agent's calls and the control step $\toolctrl$, which it runs by itself. Control also serves as an observer: its repeated invocations provide fresh readings while the agent prepares requests and their proofs are checked. The system may evolve between readings. The Airlock keeps a \emph{record} of its latest reading and a \emph{log}: an append-only list of the readings and accepted requests with their proofs, which both parties can read.

\begin{figure}[t]
\centering
{\sffamily\resizebox{.95\linewidth}{!}{%
  \begingroup\endlinechar=-1\begingroup
\definecolor{pcInk}{HTML}{26323C}
\definecolor{pcMuted}{HTML}{56636E}
\definecolor{pcGuide}{HTML}{DFE5EB}
\definecolor{pcControl}{HTML}{64748B}
\definecolor{pcControlFill}{HTML}{E7EDF4}
\definecolor{pcTool}{HTML}{356FD6}
\definecolor{pcToolFill}{HTML}{DCE8FD}
\definecolor{pcTeal}{HTML}{138A78}
\definecolor{pcTealFill}{HTML}{DFF3ED}
\definecolor{pcRecord}{HTML}{E57866}
\definecolor{pcRecordText}{HTML}{B94E40}
\definecolor{pcContractFill}{HTML}{F5F7FA}
\color{pcInk}
\begin{tikzpicture}[x=1.0cm, y=0.55cm, font=\scriptsize]
  \node[anchor=east, font=\scriptsize\sffamily, text=pcMuted] at (0.25,7.6) {agent};
  \node[anchor=east, font=\scriptsize\sffamily, text=pcMuted] at (0.25,6.4) {Airlock};
  \node[anchor=east, font=\scriptsize\sffamily, text=pcMuted] at (0.25,5.2) {checker};
  \node[anchor=east, font=\scriptsize\sffamily, text=pcMuted] at (0.25,1.7) {state};
  \foreach \x in {0.4,1.6,2.8,4.0,5.2,6.4,7.6,9.7,10.9,12.1,13.3}
    \draw[pcGuide] (\x,0.15) -- (\x,8.3);
  \fill[pcInk] (0.4,7.6) circle (1.5pt) node[above=1pt] {view $v$};
  \draw[pcControl!60, line width=2.5pt] (0.55,7.6) -- (3.25,7.6);
  \node[text=pcMuted] at (1.7,8.15) {thinks};
  \fill[pcInk] (3.5,7.6) circle (1.5pt) node[above=1pt] {request $(\tool,i,k_0,n,\pi)$};
  \fill[pcInk] (10.9,7.6) circle (1.5pt) node[above=1pt] {view $v'$};
  \foreach \a/\b in {0.4/1.6,1.6/2.8,2.8/4.0,4.0/5.2,5.2/6.4,6.4/7.6,9.7/10.9,10.9/12.1,12.1/13.3}
    \draw[pcControl, fill=pcControlFill, text=pcInk] (\a,6.05) rectangle (\b,6.75) node[midway] {$\toolctrl$};
  \draw[pcTool, fill=pcToolFill, text=pcTool] (7.6,6.05) rectangle (9.7,6.75) node[midway] {$\tool(i)$};
  \draw[pcTeal, fill=pcTealFill, text=pcTeal] (4.0,4.9) rectangle (6.9,5.5) node[midway] {checks $\pi$};
  \draw[-{Stealth[length=1.5mm]}, pcMuted, line width=.65pt,
        dash pattern=on 2.5pt off 2pt, rounded corners=2pt]
    (3.5,7.51) -- (4.0,6.92) -- (4.0,5.52);
  \fill[pcMuted] (4.0,6.75) circle (1.4pt);
  \node[anchor=south west, inner sep=1pt, text=pcMuted]
    at (4.08,6.87) {received};
  \draw[-{Stealth[length=1.5mm]}, pcMuted, line width=.65pt,
        dash pattern=on 2.5pt off 2pt]
    (6.92,5.2) -- (7.6,6.03);
  \draw[pcMuted] (0.4,4.45) -- (0.4,4.2) -- (7.6,4.2) -- (7.6,4.45);
  \node[below=0pt, text=pcMuted] at (4.0,4.2) {six control steps after $v$, with $6\leq n$};
  \fill[pcContractFill] (0.4,0.15) rectangle (13.3,3.3);
  \draw[pcControl] (0.4,3.3) -- (13.3,3.3);
  \node[anchor=north east] at (13.3,3.25) {$C$};
  \foreach \a/\b/\lo/\hi in {0.4/1.6/0.78/1.62, 1.6/2.8/1.08/1.92, 2.8/4.0/0.83/1.67, 4.0/5.2/1.18/2.02, 5.2/6.4/1.03/1.87, 6.4/7.6/1.38/2.22, 9.7/10.9/1.73/2.57, 10.9/12.1/1.48/2.32, 12.1/13.3/1.78/2.62}
    \draw[pcControl!70, fill=pcControlFill] (\a,\lo) rectangle (\b,\hi);
  \draw[pcTool, fill=pcToolFill, thick] (7.6,1.37) rectangle (9.7,2.33);
  \draw[pcControl] (4.6,2.02) -- (4.6,2.62) node[above=-1pt, text=pcInk] {$\tran_\toolctrl(s,\cdot)$};
  \draw[pcTool] (8.65,2.33) -- (8.65,2.55) node[above=-1pt, text=pcTool] {$\tran_\tool(i,s,\cdot)$};
  \draw[thick, pcInk, dash pattern=on 3pt off 2pt] (0.4,1.2) -- (1.0,1.42) -- (1.6,1.5) -- (2.2,1.33) -- (2.8,1.25) -- (3.4,1.45) -- (4.0,1.6) -- (4.6,1.4) -- (5.2,1.45) -- (5.8,1.7) -- (6.4,1.8) -- (7.0,1.7) -- (7.6,1.65) -- (8.65,1.95) -- (9.7,2.15) -- (10.3,2.0) -- (10.9,1.9) -- (11.5,2.12) -- (12.1,2.2) -- (12.7,2.05) -- (13.3,2.0);
  \draw[very thick, pcRecord] (0.4,1.2) -- (1.6,1.2) -- (1.6,1.5) -- (2.8,1.5) -- (2.8,1.25) -- (4.0,1.25) -- (4.0,1.6) -- (5.2,1.6) -- (5.2,1.45) -- (6.4,1.45) -- (6.4,1.8) -- (7.6,1.8) -- (7.6,1.65) -- (9.7,1.65) -- (9.7,2.15) -- (10.9,2.15) -- (10.9,1.9) -- (12.1,1.9) -- (12.1,2.2) -- (13.3,2.2) -- (13.3,2.0);
  \node[anchor=west] at (13.35,1.95) {the state};
  \node[anchor=south, inner sep=1pt, text=pcRecordText] at (4.6,0.38) {the record};
  \draw[-{Stealth[length=1.6mm]}, pcMuted] (0.4,0.15) -- (13.9,0.15) node[right] {time};
\end{tikzpicture}
\endgroup\endgroup}}
\caption{Execution with parallel proof checking. Above, a request based on $v=v[k_0]$ is prepared and checked while the Airlock executes control. Dashed connectors trace receipt by the Airlock, proof checking and execution. The admitted call starts after six control invocations, within its requested window $n$. Control resumes after the call returns; the following view is published after that control invocation. Below, the dark dashed curve shows one quantity of the system state, and the coral step line the record: what the Airlock last read, at the end of each operation. Gray and blue regions show the bounds admitted by the control and tool models during their respective invocations; the pale background represents the contract's constraint on this quantity.}
\label{fig:world-time}
\end{figure}
\FloatBarrier

{\bfseries Delayed calls.}
Preparing a request and checking its proof take time, during which the state may change. The Airlock therefore uses the delayed obligation $\Obl_\tool^{\leqslant n}(i,v)$ of \cref{def:delayed-obligation}: the proof must establish admissibility after up to $n$ intervening control steps, counted from the view on which the request is based. Control continues while the request is prepared and checked. The current request can be executed once its proof is accepted, provided the session is still active and the allowance has not been exceeded. \Cref{fig:world-time} illustrates this sequence; \cref{sec:guarantees} states the conditions under which it preserves compliance.

\subsubsection{The Algorithm}

The Airlock invokes agent tools and control sequentially: after invoking a routine, the execution loop resumes only when that invocation returns normally. Proof checking runs in parallel, allowing the Airlock to continue invoking $\toolctrl$ while a request is prepared and checked. The system may change both during execution and between observations.

Each operation ends with a reading obtained by accessing the system; this reading becomes the record and is appended to the log. These end-of-operation readings are implicit in \cref{alg:pc2}. Initially, the record is a reading of the state taken when the Airlock starts. The routine $\mathsf{read\_view}()$ returns $\obs(s)$ for the recorded state $s$, and $\mathsf{is\_active}()$ evaluates $\mathsf{not}~\terminal(s)$ on that record.

{\bfseries Initialization.} The guarantees hold from the moment the contract starts, so the contract must start in a state that satisfies its clauses. The Airlock cannot assume such a state, since nobody has observed the system at signing, and it cannot check one, since $\clause$ need not be decidable. The control step establishes it, in its role as initializer. The loop receives $\contract$ and a certificate $\pi_{\contract}$ with the evidence for all the assumptions on which the guarantees rest, as described in \cref{sec:assurance}. The Airlock checks it with $\mathsf{contract\_check}(\pi_{\contract},\contract)$ and reads the state.
\begin{enumerate}[label=(\arabic*),leftmargin=2.2em,itemsep=0pt,topsep=2pt]
\item If the certificate is rejected, or the state is already active, the Airlock rejects the contract: it never takes over a running contract, whose compliance it could not vouch for.
\item Otherwise it runs one control step, in which $\toolctrl$ acts as initializer: it either moves the system into an active state, which satisfies $\clause$, as the certificate proves, or leaves it inactive.
\item In the first case, we say that the contract \emph{starts}, and the main loop begins. In the second, the contract never starts, and no call is ever made.
\end{enumerate}
The agent plays no part in initialization: no view is published before the contract starts. The main loop exposes views and receives requests accompanied by proofs. Here, we describe these requests as a general interface, independently of how they are produced. However, in practice these are intended to be supplied by an agent.

{\bfseries Checking proofs.} The initial contract check and the subsequent checks performed at runtime rely on \emph{mechanized reasoning}, which allows mathematical statements and arguments to be expressed in a form that a computer can check \citep{demoura2021lean4}. The {\itshape checker} verifies that the supplied proof establishes the required statement. We leave the representation and construction of proofs implicit and describe which statements must be proved and when. Further details are given in \cref{app:proof-check-program}.

{\bfseries Main loop.}
After initialization, each cycle of \cref{alg:pc2} begins by checking whether the state is active. If so, the Airlock obtains $v[k]\gets\mathsf{read\_view}()$ and publishes $(k,v[k])$ through its output interface, making the view and its index available to the agent. Completed control steps provide the loop's clock: the index $k$ counts these steps rather than elapsed time, $v[0]$ is the view at the start of the contract, and subsequent views are obtained after the corresponding control invocations. Recorded views are kept unchanged so that a request can identify the view on which its proof is based.

The Airlock then checks for requests without blocking. A request $(\tool,i,k_0,n,\pi)$ identifies the call, the recorded view $v[k_0]$, the control-step allowance $n$, and a proposed proof $\pi$ of the delayed obligation $\Obl_\tool^{\leqslant n}(i,v[k_0])$ from \cref{def:delayed-obligation}. The allowance counts all control steps since that view, including those taken before the request arrives and during checking. The routine $\mathsf{start\_proof\_check}$ replaces any previous request and clears its verdict before starting the new check in parallel. It also checks that the tool and input are allowed and that the referenced view was published after the most recent agent call, so that only control steps intervene. The index $k_{\mathsf{last}}$ records that last call; its initial value $-1$ permits requests based on $v[0]$. These checks are detailed in \cref{app:proof-check-program}.

If the current request's proof has been accepted and $k-k_0\leq n$, the Airlock invokes $\exec_\tool(i)$ on the system as it then stands and waits for the routine to return. Agent tools are thus \emph{proof-gated}. The delayed obligation requires admissibility at every active state admitted by the modeled control window. After the call, the Airlock records $k_{\mathsf{last}}=k$ and clears the accepted verdict through $\mathsf{consume\_proof}()$, so that the request authorizes only one invocation.

Finally, if the state is still active, the Airlock invokes $\exec_\toolctrl()$ and increments $k$ when control returns. Thus control runs in cycles with no admitted call and also follows an admitted call unless that call ends the session. The next cycle checks whether the state is still active before obtaining and publishing a new view. The upper part of \cref{fig:world-time} illustrates this sequence and the control steps counted by the allowance $n$.

\subsection{Tool design}\label{sec:tool-design}

The following examples illustrate how execution routines, transition models, and obligations fit together. We first give a possible design for the Airlock's control tool, then consider families of agent tools and the constraints they can enforce.

\begin{figure}[t]
\centering
{\sffamily\begingroup
\definecolor{pcInk}{HTML}{26323C}
\definecolor{pcMuted}{HTML}{56636E}
\definecolor{pcControl}{HTML}{64748B}
\definecolor{pcTool}{HTML}{356FD6}
\definecolor{pcTeal}{HTML}{138A78}
\definecolor{pcWaitFill}{HTML}{D9F1E9}
\definecolor{pcEnd}{HTML}{C85B52}
\definecolor{pcWindow}{HTML}{71879D}
\definecolor{pcWarning}{HTML}{E97636}
\definecolor{pcDanger}{HTML}{D4444B}
\tikzset{label backing/.style={text opacity=1,rounded corners=.6pt}}
\colorlet{pcFixBase}{pcWarning!10}
\colorlet{pcEndBase}{pcDanger!16}
\colorlet{pcReachFix}{pcInk!7!pcFixBase}
\colorlet{pcReachEnd}{pcInk!7!pcEndBase}
\colorlet{pcStateLabel}{pcWindow!13!pcWaitFill}
\tikzset{
 label backing/.append style={fill=pcReachFix,fill opacity=1,inner sep=.6pt},
 fix region/.style={preaction={fill=pcWarning!10},pattern={Lines[angle=45,distance=4.5pt,line width=.25pt]},pattern color=pcWarning!55},
 end region/.style={preaction={fill=pcDanger!16},pattern={Lines[angle=45,distance=3.2pt,line width=.30pt]},pattern color=pcDanger!70},
}
\color{pcInk}
\begin{tikzpicture}[font=\scriptsize, line cap=round, line join=round]
  \def\Cx{2.35}\def\Cy{1.5}\def\Ix{1.3}\def\Iy{0.78}\def\Px{1.85}\def\Py{1.12}
  \foreach \dx in {0,6.0} {
    \begin{scope}[xshift=\dx cm]
      \fill[white] (0,0) ellipse ({\Cx} and {\Cy});
      \path[fix region] (0,0) ellipse ({\Cx} and {\Cy});
      \fill[white] plot[domain=-40:40,samples=40] ({\Cx*cos(\x)},{\Cy*sin(\x)}) -- plot[domain=40:-40,samples=40] ({\Ix*cos(\x)},{\Iy*sin(\x)}) -- cycle;
      \path[end region] plot[domain=-40:40,samples=40] ({\Cx*cos(\x)},{\Cy*sin(\x)}) -- plot[domain=40:-40,samples=40] ({\Ix*cos(\x)},{\Iy*sin(\x)}) -- cycle;
      \fill[pcWaitFill] (0,0) ellipse ({\Ix} and {\Iy});
      \ifdim\dx pt=0pt
        \fill[pcInk,opacity=.07] (0,0) ellipse ({\Px} and {\Py});
      \fi
      \draw[pcControl] (0,0) ellipse ({\Cx} and {\Cy});
      \draw[pcControl] (0,0) ellipse ({\Ix} and {\Iy});
      \draw[pcControl!60] ({\Ix*cos(40)},{\Iy*sin(40)}) -- ({\Cx*cos(40)},{\Cy*sin(40)});
      \draw[pcControl!60] ({\Ix*cos(-40)},{\Iy*sin(-40)}) -- ({\Cx*cos(-40)},{\Cy*sin(-40)});
      \node at (0,-1.72) {$C$};
    \end{scope}
  }
  \begin{scope}
    \draw[pcControl,densely dotted,thick] (0,0) ellipse ({\Px} and {\Py});
    \draw[pcTeal,dashed] (-0.3,0.05) ellipse (0.42 and 0.28);
    \node at (0.70,-0.22) {$C_{\mathrm{idle}}$};
    \draw[pcControl] (-1.55,1.03) -- (-1.8,1.52);
    \node[anchor=south,inner sep=2pt,text=pcMuted] at (-1.8,1.52) {$\mathit{Act}\cap\mathit{Fix}$};
    \node[label backing,fill=pcReachEnd,text=pcMuted] at (1.58,0.03) {$\mathit{End}$};
    \draw[-{Stealth[length=1.6mm]},thick,pcTeal] (-2.0,0.35) to[bend left=25] (-0.45,0.1);
    \node[label backing,text=pcTeal] at (-1.25,0.58) {fix};
    \fill (-2.0,0.35) circle (1.4pt);
    \draw[pcControl] (-2.5,.35) -- (-2.06,.35);
    \node[anchor=east,inner sep=2pt] at (-2.5,.35) {read};
    \fill (-0.45,0.1) circle (1.4pt);
    \node[anchor=north,inner sep=1pt] at (-0.35,-0.29) {next read};
    \draw[pcInk,thick] plot[smooth,tension=.7] coordinates {(0.55,0.42) (0.68,0.58) (0.62,0.72) (0.74,0.84) (0.8,0.95)};
    \fill (0.55,0.42) circle (1.4pt);
    \node[anchor=north,inner sep=1pt] at (0.55,0.34) {read};
    \fill (0.8,0.95) circle (1.4pt);
    \node[label backing,anchor=east] at (0.64,0.92) {next read};
    \draw[pcControl] (1.35,1.43) -- (.83,1.01);
    \node[anchor=south,inner sep=2pt,text=pcMuted] at (1.35,1.43) {idle};
    \node[draw,rounded corners=1pt,fill=white,inner sep=2pt] (closed) at (2.6,-1.95) {closed};
    \draw[-{Stealth[length=1.6mm]},thick,pcEnd] (1.95,-0.55) -- (closed);
    \node[anchor=east,inner sep=2pt,text=pcEnd] at (2.35,-1.47) {terminate};
    \fill (1.95,-0.55) circle (1.4pt);
    \draw[pcControl] (2.03,-.54) -- (2.5,-.48);
    \node[anchor=west,inner sep=2pt] at (2.5,-.48) {read};
    \draw[pcControl] ({\Px*cos(235)},{\Py*sin(235)}) -- (-1.7,-1.55);
    \node[anchor=north east,inner sep=2pt,text=pcMuted,align=right] at (-1.7,-1.55) {reach of one\\period from $C_{\mathrm{idle}}$};
    \node at (0,-2.35) {(a) one control step};
  \end{scope}
  \begin{scope}[xshift=6cm]
    \fill[pcWindow,opacity=.13] (0.45,0.15) ellipse (1.05 and .68);
    \path[pattern={Dots[distance=4pt,radius=.32pt]},pattern color=pcWindow!70] (0.45,0.15) ellipse (1.05 and .68);
    \draw[pcWindow,dash pattern=on 3pt off 2pt,line width=.55pt] (0.45,0.15) ellipse (1.05 and .68);
    \node at (-.7,-.34) {$C_{\mathrm{idle}}$};
    \fill (0.4,0.12) circle (1.4pt);
    \node[label backing,fill=pcStateLabel,anchor=north,inner sep=1pt] at (.4,.03) {$s$};
    \draw[-{Stealth[length=1.4mm]},thick,pcTool] (-0.25,0.3) -- (-0.8,0.5);
    \draw[-{Stealth[length=1.4mm]},thick,pcTool] (0.75,0.55) -- (1.0,1.16);
    \draw[-{Stealth[length=1.4mm]},thick,pcTool] (1.15,-0.15) -- (1.8,-0.4);
    \node[text=pcTool,anchor=south,inner sep=2pt] at (1.1,1.40) {$\tool(i)$};
    \draw[pcControl] (-.22,.67) -- (-1.45,1.53);
    \node[anchor=south,inner sep=2pt,text=pcMuted,align=center] at (-1.45,1.53) {possible invocation states\\$\tran_\toolctrl^{\leq n}(s,\cdot)$};
    \node at (0,-2.35) {(b) the window and the call};
  \end{scope}
\end{tikzpicture}
\endgroup}
\caption{How $\toolctrl$ can keep the system in homeostasis. (a) In the illustrated case $\mathit{Act}=C\setminus C_{\mathrm{idle}}$, control idles in $C_{\mathrm{idle}}$; in $\mathit{Act}\cap\mathit{Fix}$ (orange hatching) it fixes the state, moving it well inside $C_{\mathrm{idle}}$ (small dashed ellipse); in $\mathit{End}$ (red hatching) it terminates the contract. The darker area inside the dotted ellipse shows the reach of one idle period from $C_{\mathrm{idle}}$. (b) A call based on the view of state $s$ may be invoked after up to $n$ control steps. The blue-gray dotted region, $\tran_\toolctrl^{\leq n}(s,\cdot)$, represents active states admitted by the control models as possible invocation states; the delayed obligation requires the call to be admissible at each of them.}
\label{fig:world-correct}
\end{figure}
\FloatBarrier

\subsubsection{The control tool \texorpdfstring{$\toolctrl$}{ctrl}}\label{sec:control-tool}\label{ex:ctrl}
Consider a control routine that plays the two roles of \cref{def:contract}.

As initializer, it tests whether the system is in a decidable region $\mathit{Ready}$ of inactive states: those in which everything the contract needs at its start is in place, for instance the escrow in custody and the deadline ahead. There it runs $\exec_{\mathsf{init}}$, which initializes the contract: it moves the system into an active state that satisfies $C$, for instance by recording the start time and setting a status to ongoing. From the other inactive states the routine leaves the contract inactive; a contract may record that it did not start, as \cref{ex:contract} does with the label $\mathsf{void}$.

As controller, it keeps the active contract in $C$, as illustrated in \cref{fig:world-correct}(a). Fix an idle period $p$ and let $C_{\mathrm{idle}}$ be the set of states $s$ from which idling for one period remains in $C$: $C(s')$ holds for every $s'$ with $\tran_{\mathsf{idle}}(p,s,s')$. Like $C$, the set $C_{\mathrm{idle}}$ need not be decidable, so the routine tests membership in a decidable region $\mathit{Act}\supseteq C\setminus C_{\mathrm{idle}}$, which contains every compliant state from which idling may not be safe. In $\mathit{Act}$, the control step cannot idle until the next operation, since the contract could break in the meantime: it must act, and by default it terminates the contract.

Optionally, termination can be avoided by a fixing routine $\exec_{\mathsf{fix}}$ that, starting from a compliant state in a decidable region $\mathit{Fix}$, remains in $C$ and, when it completes, leaves the system in $C_{\mathrm{idle}}$. The routine then terminates only in $\mathit{End} := \mathit{Act} \setminus \mathit{Fix}$, where it must act but cannot fix.

\begin{samepage}
The routine reads the state when it starts, its \emph{decision reading}, and tests on it $\terminal$, $\mathit{Ready}$, $\mathit{Act}$ and $\mathit{Fix}$, all decidable:
\[
\begin{array}{l@{\qquad}l}
\mathbf{routine}\ \exec_{\toolctrl}(): & \\
\qquad s\gets\mathsf{read\_state}() & \\
\qquad\mathbf{if}\ \terminal(s)\ \mathbf{then} & \triangleright\ \textit{initializer}\\
\qquad\qquad\mathbf{if}\ s\in \mathit{Ready}\ \mathbf{then}\ \exec_{\mathsf{init}}() & \\
\qquad\mathbf{else\ if}\ s\notin \mathit{Act}\ \mathbf{then}\ \exec_{\mathsf{idle}}(p) & \triangleright\ \textit{controller}\\
\qquad\mathbf{else\ if}\ s\in \mathit{Fix}\ \mathbf{then}\ \exec_{\mathsf{fix}}() & \\
\qquad\mathbf{else}\ \exec_{\mathsf{terminate}}(). &
\end{array}
\]
\end{samepage}%
Here $\mathsf{read\_state}()$ accesses the system to obtain a state reading. The model $\tran_{\toolctrl}$ must cover the states encountered while reading and carrying out the selected action, including changes between the two. These choices make both roles safe, as the certificate must show (\cref{sec:coherence}). As initializer, the routine makes the contract active only through $\exec_{\mathsf{init}}$, which ends in $C$; elsewhere the contract stays inactive, provided that $\terminal$ depends only on a status that the routines set, such as the label of \cref{ex:contract}. As controller, it idles only in $C_{\mathrm{idle}}$, fixes only where fixing keeps $C$, and otherwise terminates, which must also keep $C$ until the contract closes. \Cref{fig:world-correct}(a) illustrates the controller for $\mathit{Act}=C\setminus C_{\mathrm{idle}}$. Panel~(b) shows how it relates to delayed obligations: a call based on an earlier view must remain admissible throughout the region admitted by up to $n$ control steps. Further design choices and applications are discussed in \cref{sec:ctrl-design}.

\subsubsection{Agent tools}\label{sec:tool-families}

{\bfseries Preconditions.}
Given any tool $\tool$, and a predicate $\pre$ over $I_\tool\times\ss$ we can define a new tool $\tool|_{\pre}$ with same input $I_{\tool|_{\pre}} = I_\tool$ and same execution $\exec_{\tool|_{\pre}} = \exec_\tool$ but with restricted transition relation
\begin{equation}
    D_{\tool|_{\pre}}(i,s,s') \quad := \quad \pre(i,s)\,\wedge\,D_\tool(i,s,s').
\end{equation}
The resulting effect of this restriction is that enabledness becomes equivalent to $\Xi_{\tool|_\pre}(i,s)\iff\pre(i,s)\wedge\Xi_\tool(i,s)$ and the corresponding obligation is
\begin{equation}\label{eq:precondition-obligation}
    \Obl_{\tool|_\pre}(i,v) ~\iff\quad
\forall s\in\Sigma_{\mathrm{act}} \;\;\;
\bigl(\obs(s)=v \;\wedge\; \clause(s)\bigr) \;\implies\;
\bigl(\pre(i,s)\wedge\mathcal A_\tool(i,s)\bigr),
\end{equation}
effectively adding $\pre$ as a precondition gate that needs to be satisfied before allowing the original $\exec_\tool$ to be called. Thus authorization requires proving both the gate
and preservation of the contract's constraints at
every compliant active state compatible with the view.
The gate constrains the starting state $s$, while $\Theta_\tool$ requires compliance at every admitted $s'$, including during execution. At states satisfying $\pre$, the original transition relation is unchanged, so any overapproximation property is preserved there. Below we consider an example where adding a precondition to a tool is particularly useful.

{\bfseries Settlement tools.}\label{sec:goal-tools}
As illustrated in \cref{ex:contract}, the $\mathsf{settle}$ tools provide the main avenue for a contract to specify 1) the task
for which the Airlock is instantiated and 2) what should happen upon its completion. These tools combine a goal predicate
$\goal$ over $\ss$ with an execution routine
$\exec_{\mathsf{settle}}$ that performs the agreed
settlement, such as releasing an escrow or delivering
an artifact. Start with a tool $\mathsf{free\_settle}$ that has no caller-supplied arguments and whose execution routine performs this settlement, then impose $\goal$ as its precondition: $\mathsf{settle} = \mathsf{free\_settle}|_{\goal}$. Its model includes states during settlement, such as a payment in progress, as well as completed or failed outcomes. Suppose the contract certificate establishes that $\mathsf{free\_settle}$ is admissible at every compliant active state where $\goal$ holds. The obligation then reduces to
\begin{equation}\label{eq:settlement-obligation}
    \Obl_{\settle}(v) ~ \Longleftrightarrow ~
    \forall s\in\Sigma_{\mathrm{act}}.\;\bigl(\clause(s)\wedge\obs(s)=v\bigr)\implies\goal(s),
\end{equation}
namely that $\settle$'s obligation holds exactly when every compliant active state compatible with the view satisfies the goal. In particular, when goal achievement can be determined by $v=\obs(s)$ alone, we have $\Obl_{\settle}(\obs(s)) \Longleftrightarrow\goal(s)$ for any active state $s$ satisfying $C(s)$. For a delayed call, the goal must hold at every active state admitted by the control window. Preservation during settlement must already have been established in the certificate: for example, the clauses must allow a payment in progress, accounting for funds in transit, rather than only its final $\mathsf{settled}$ state.


{\bfseries Expert tools.}
Some clauses in the contract depend on judgments that the parties cannot readily express as a formal predicate: whether a document meets a requirement, an image shows a defect, or a hotel is suitable. The parties can instead agree on an \emph{expert} to make these assessments. The tool's routine $\exec_{\mathsf{expert}}$ invokes the ``expert'': this can be an LLM with a fixed prompt specified by the contract to extract information from a document or assess its contents, and records the answer. A vision model or human reviewer can play the same role. The model $\tran_\tool$ covers the states while the assessment is pending and the possible recorded answers, as well as costs or disclosures incurred by consulting the expert. For example, a contract might reserve parts of the state to record the results of an expert's calls to be used later on during task execution. In the travel contract, a booking requires a recorded hotel rating of at least 7; a pending assessment provides no such evidence. Recorded verdicts can be reused via memoization for the same object and evidence to avoid repeated sampling. Expert tools thus allow contracts to combine formal conditions with assessments of unstructured information and qualitative requirements.

{\bfseries Communication tools.}
E-mails, document uploads, and messages between agents add to a history of exchanges. A contract may constrain the recipient, the content disclosed, or the conditions under which a message may be sent. As a simple example, consider a tool $\mathsf{send}$ with input space $I_{\mathsf{send}} = \mathit{Doc} \times \mathit{Rec}$, whose routine $\exec_{\mathsf{send}}(d,r)$ sends document $d$ to recipient $r$. Let $\mathit{Sent}(s)$ record the document--recipient pairs disclosed so far, including partial disclosures. If no other disclosures occur during the call, a simple model is
\begin{equation}
    \tran_{\mathsf{send}}((d,r),s,s') \quad:=\quad
    \mathit{Sent}(s)\subseteq\mathit{Sent}(s')\subseteq\mathit{Sent}(s)\cup\{(d,r)\}.
\end{equation}
This admits both states before disclosure and states in which some or all of $d$ has been disclosed to $r$, whether or not delivery eventually succeeds. Let the contract constrain disclosure by $\clause(s):=\mathit{Sent}(s)\subseteq\mathit{Allowed}$. Assuming the state space admits the updated record for every state and input, $\mathsf{send}$ is enabled and, at a compliant active state $s$, its obligation reduces to
\begin{equation}
    \Obl_{\mathsf{send}}((d,r),\obs(s)) \quad \iff \quad (d,r) \in \mathit{Allowed}.
\end{equation}
More detailed models can distinguish which contents are disclosed and through which routes; a failed delivery does not undo information already disclosed.

{\bfseries Query tools.}
A query tool performs operations such as web searches, database queries, and measurements. Its routine $\exec_{\mathsf{query}}$ issues a query $q$ and records the response with its source and, where relevant, time or version. Information exposed through the resulting view can support subsequent proofs: in the travel contract, a booking confirmation from the agreed service may provide evidence for goal achievement. Starting from a general tool $\mathsf{query}$, each contract can impose its own restrictions through $\mathsf{query}|_\pre$, retaining the same input space and execution routine. For example, a contract may require queries not to modify the queried data and to respect a spending limit $B$:
\begin{equation}\label{eq:query-precondition}
\pre(q,s):=\mathsf{ReadOnly}(q)\wedge
\bigl(\mathsf{spent}(s)+\kappa(q)\le B\bigr).
\end{equation}
Here $\mathsf{ReadOnly}(q)$ expresses the first requirement and $\mathsf{spent}(s)$ is cumulative expenditure. The bound $\kappa(q)$ applies throughout execution: every state $s'$ admitted by $\tran_{\mathsf{query}}(q,s,s')$ satisfies $\mathsf{spent}(s')\leq\mathsf{spent}(s)+\kappa(q)$, including while waiting for a response or after failure. By \cref{eq:precondition-obligation}, the proof must establish this gate together with admissibility at every compliant active state compatible with the view.

The model must include states before a response is available, partial responses, errors, and timeouts as well as successful completion. Enabledness only requires that some state is admitted; it guarantees neither completion nor a successful answer. In particular, a timeout does not establish that no matching record exists. Records support only the claims justified by the model: a response may describe a particular database version, returned text need not be true, and reported availability is not a reservation. Read-only queries may still incur costs or disclose information, and the queried data may change independently during the call; these changes must also be modeled.

\PCLayer{Guarantees}
\section{Guarantees}
\label{sec:guarantees}

\Cref{sec:model} presented the structure of PC2 and the obligations governing tool execution. We now study the guarantees that follow from this structure. We first show that the state satisfies $\clause$ at every instant, regardless of the requests and proposed proofs supplied to the Airlock (\cref{sec:coherence}). \Cref{sec:risks} relates the state to the world it describes and identifies conditions under which compliance implies safety.

\subsection{Anytime compliance to the contract}\label{sec:coherence}
Given a contract $\contract := (\Sigma, \clause ,\obs, \mathcal T, \toolctrl, \terminal)$, we identify sufficient conditions for the state to satisfy $\clause$ at every instant of the execution of \cref{alg:pc2}, for arbitrary incoming requests and proposed proofs.

For a given execution, we write $s(\tau)$ for the possibly unobserved state of the system at time $\tau$, in the sense of \cref{sec:model}. As described in \cref{sec:airlock-core}, the Airlock invokes control and tool routines sequentially. Each execution interval starts when a routine is invoked and ends when it returns, or continues indefinitely if it does not return. During this interval the state may change through the routine's actions and outside changes (see \cref{fig:world-time}).

We begin with the proof checks introduced in \cref{sec:airlock-core}. Before executing a requested tool call, the Airlock forms its obligation from the contract and the published view, and passes it, together with the supplied proof, to the \emph{checker}: a program that checks whether the supplied argument establishes the required statement according to the rules of a fixed logic. For our implementation, we choose \emph{dependent type theory} as the underlying logic: it provides a common language for the contract's data, statements, and proofs, with proof checking reduced to type checking. A statement is represented by a type, and the checker verifies that the supplied proof is an expression of that type. In Lean, used for our mechanization (\cref{sec:lean}), this check is performed by the Lean \emph{kernel} \citep{demoura2021lean4}.

\begin{assumption}[Sound checking]\label{ass:sound-checking}
For any proof $\pi$ and statement $\Phi$, if the proof-checker accepts $\pi$ as a proof of $\Phi$, then $\Phi$ holds under the axioms and interpretation of the chosen logic.
\end{assumption}

\Cref{ass:sound-checking} is independent of the choice of proof assistant. The checker must use the contract's agreed definitions and permitted axioms. Accepting a proof of $\Obl_{\tool}^{\leqslant n}(i,v)$ then establishes that obligation, allowing the analysis below to rely on it regardless of how the proof was produced. The obligation, however, is expressed in terms of the transition models $\tran_\tool$ and $\tran_\toolctrl$, while the state evolves through the execution routines and the outside changes. To apply the proof to execution, the models must cover every state that execution can reach. We denote these possibilities by the \emph{effect relation} $E_\tool$ on $I_\tool\times\Sigma\times\Sigma$: $E_\tool(i,s,s')$ holds if and only if an invocation of $\exec_\tool(i)$ starting in $s$ can reach $s'$ during its execution. This includes initial, intermediate and final states, outside changes during the invocation, and states reached if the routine never returns.

\begin{assumption}[Coverage of state transitions]\label{ass:transition-coverage}
For every $\tool \in {\cal T} \cup \{\toolctrl\}$,
\begin{equation}\label{eq:ETleqDT}
 \forall i \in I_\tool,\; s,s' \in \ss \qquad \Xi_\tool(i,s) \wedge E_{\tool}(i,s,s') \implies \tran_{\tool}(i,s,s').
\end{equation}
\end{assumption}

\Cref{ass:transition-coverage} requires $\tran_\tool$ to \emph{over-approximate} these possibilities wherever $\Xi_\tool(i,s)$ holds: it may admit states through which the state never passes, but must include every state through which it may pass during the operation, not only the one at its end. The restriction to enabled calls due to $\Xi_\tool$ allows $\tran_\tool$ to exclude inputs and starting states, as in the precondition construction of \cref{sec:tool-families}; the obligation then prevents authorization of those calls. The Airlock uses $\tran_\tool$ to form obligations and does not need to know $E_\tool$. The role of $E_\tool$ in the analysis is to specify the behavior that the model must cover for these proofs to apply to execution. Both relations concern the state; \cref{sec:risks,sec:assurance} discuss how a deployment's routines, services and host support this coverage and the state's correspondence with the world.

The obligation preserves compliance from a state that already satisfies $\clause$. To apply it throughout the loop, the loop must start from a compliant state, and the control step, which the Airlock runs by itself, must preserve compliance too. Nobody has observed the system at signing, so its initial state is not known to comply. The Airlock does not assume it: the contract begins inactive, and the control step, as initializer, makes it active only in a compliant state.

\begin{assumption}[Initialization and control]\label{ass:initial-control}
The control tool satisfies
\begin{align*}
\forall s \in \Sigma_{\mathrm{act}}&\qquad \clause(s) \implies \mathcal A_\toolctrl(s), &&\text{(controller)}\\
\forall s\in\Sigma_{\mathrm{stop}},\ s'\in\Sigma_{\mathrm{act}}&\qquad \tran_\toolctrl(s,s') \implies \clause(s'), &&\text{(initializer)}
\end{align*}
and it is enabled at every inactive state: $\Xi_\toolctrl(s)$ for every $s\in\Sigma_{\mathrm{stop}}$.
\end{assumption}

The two lines are the two roles of $\toolctrl$. As controller, from a compliant active state, it passes only through compliant states. As initializer, from an inactive state, it can make the contract active only in a compliant state. Both are certified once, at startup (\cref{sec:airlock-core}), so no new proof is required at each cycle, and the second replaces the assumption that the initial state complies, which nobody could check. Enabledness makes \cref{ass:transition-coverage} applicable to every control step as well as to calls.

Finally, the argument relies on the Airlock running \cref{alg:pc2} as specified.

\begin{assumption}[Execution integrity]\label{ass:execution-integrity}
Execution follows \cref{alg:pc2} and the proof-checking routine in \cref{app:proof-check-program}. In particular:
\begin{enumerate}[label=(\roman*),leftmargin=2.2em,itemsep=0pt,topsep=2pt]
\item execution is a sequence of contiguous operations, each a tool call or a control step: the first starts when the Airlock has read the state, and each of the others when the previous one ends;
\item after each routine returns, the current state is read and appended to the log; views and the inactivity test are computed from this reading, and initially from a reading taken when the Airlock starts;
\item a call runs only for the pending request $(\tool,i,k_0,n,\pi)$, at most once, if no other call ran since $v[k_0]$ was published, in an iteration $k$ with $k_0\le k\le k_0+n$, and after the proof-checking routine in \cref{app:proof-check-program} has checked that $\tool\in\mathcal T$, that $i\in I_\tool$, and that $\pi$ proves $\Obl_{\tool}^{\leqslant n}(i,v[k_0])$, a statement the Airlock forms itself;
\end{enumerate}
\end{assumption}

\PCAssumptionRef[(i)]{ass:execution-integrity} lets us follow the operations one after another: the state at the end of one is also the state at the start of the next. In practice, between two operations the Airlock reads the state, publishes a view and invokes the next routine, which takes microseconds, while requests take seconds to prepare. Proof checking runs in parallel with control, so the state need not remain fixed while a proof is checked. \Cref{app:intervals} discusses executions with bounded gaps between routines.

\PCAssumptionRef[(ii)]{ass:execution-integrity} connects execution to the information used by the loop. After each routine, the Airlock records the current state. Whenever it publishes a view or tests whether the session is still active, it uses the latest reading. Thus these decisions reflect the result of the routine that just ran.

Together, \PCAssumptionRef[(i--ii)]{ass:execution-integrity} ensure that the latest recorded state is the state from which the next routine starts. Part (iii) then connects that invocation to its authorization: the checked proof concerns the requested tool and input, and the call runs only within its control window, with no intervening tool call or reuse of the request.

A request may refer to a view published several control steps before its call starts. To apply its delayed obligation, we must show that the actual starting state is among those covered by that obligation. The following lemma connects the request's counter window to the states admitted by $\tran_\toolctrl^{\leqslant n}$.

\begin{lemma}[Window]\label{lem:window}
Assume \cref{ass:transition-coverage,ass:initial-control,ass:execution-integrity}. Let a call run for the request $(\tool,i,k_0,n,\pi)$ from state $s'$, and let $\bar s$ be the state from which $v[k_0]$ was computed. If $\clause$ holds at every instant from the start of the contract to the invocation, then $\bar s$ and $s'$ are active and $\tran^{\leqslant n}_\toolctrl(\bar s,s')$.
\end{lemma}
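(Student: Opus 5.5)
The plan is to follow the operations between the publication of $v[k_0]$ and the invocation of the call, and to show by induction on the number of intervening control steps that each successive starting state is reached from $\bar s$ by a chain admitted by $\tran^{\leqslant m}_\toolctrl$. By \PCAssumptionRef[(ii)]{ass:execution-integrity}, $v[k_0]$ was computed from the reading $\bar s$ taken at the end of the preceding routine. A view is published only inside the loop, after $\mathsf{is\_active}()$ has succeeded on that reading, so $\terminal(\bar s)$ is false and $\bar s\in\Sigma_{\mathrm{act}}$. By \PCAssumptionRef[(iii)]{ass:execution-integrity}, no agent call ran between the publication of $v[k_0]$ and the invocation. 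The call runs in an iteration $k$ with $k_0\le k\le k_0+n$, so exactly $m:=k-k_0\le n$ control steps intervene, namely the invocations of $\exec_\toolctrl$ that advanced the counter from $k_0$ to $k$. By \PCAssumptionRef[(i)]{ass:execution-integrity}, the first of these starts at $\bar s$, and each subsequent operation starts in the state where the previous one ended. Write $u_0=\bar s,u_1,\dots,u_m=s'$ for the consecutive starting and ending states.

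The claim to prove by induction on $j\le m$ is that $u_j$ is active and $\tran^{\leqslant j}_\toolctrl(\bar s,u_j)$ holds. The base case $j=0$ is immediate from the definition $\tran^{\leqslant 0}_\toolctrl(s,s')=(s=s')$ together with activity of $\bar s$. For the step, first establish activity of $u_{j+1}$. Either it is the start of the next control step, which the loop runs only after $\mathsf{is\_active}()$ succeeds on the reading $u_{j+1}$, or it is $s'$ itself, where the call runs only while the state is active. In the second case, note that the request is handled in a cycle that has just tested activity and published $v[k]$ from $u_m$.

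Next, $\tran_\toolctrl(u_j,u_{j+1})$ holds by coverage. The control step started at $u_j$, which is active and satisfies $\clause$ by the hypothesis that $\clause$ holds at every instant. The controller line of \cref{ass:initial-control} gives $\mathcal A_\toolctrl(u_j)$, hence $\Xi_\toolctrl(u_j)$. Then $E_\toolctrl(u_j,u_{j+1})$, since $u_{j+1}$ is the state where that step ended, and \cref{ass:transition-coverage} yields $\tran_\toolctrl(u_j,u_{j+1})$. The second disjunct of \eqref{eq:delayed-control}, with $u=u_j$ active, gives $\tran^{\leqslant j+1}_\toolctrl(\bar s,u_{j+1})$. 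Finally, a straightforward monotonicity lemma lifts $\tran^{\leqslant m}$ to $\tran^{\leqslant n}$ for $m\le n$; it follows from the first disjunct of the recursion.

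The main obstacle I expect is the bookkeeping linking the counter $k$ and the index $k_0$ to the actual sequence of operations. In particular, it must be checked that the intervening operations are exactly $k-k_0$ control steps with no agent call, and that each starting state is active at the right moment, since the loop's activity test and the view publication happen at specific points. I would handle this by reading \cref{alg:pc2} line by line under \PCAssumptionRef[(i--iii)]{ass:execution-integrity}, including the requirement that $v[k_0]$ was published after $k_{\mathsf{last}}$. The probabilistic or timing subtleties are absorbed by coverage, which already includes outside changes during each step.
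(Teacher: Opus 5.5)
Your proposal is correct and follows the paper's proof. You count the $m=k-k_0\le n$ intervening control steps from \cref{ass:execution-integrity}, get activity of $\bar s$, of each intermediate starting state and of $s'$ from the loop's activity test, and obtain each $\tran_\toolctrl$ step from the controller line of \cref{ass:initial-control} together with \cref{ass:transition-coverage}. You then chain these steps through active states and finish by monotonicity of the window; your explicit induction on $j$ simply spells out the paper's chaining step.
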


\begin{proof}
Let $k$ be the iteration in which the call runs. By \PCAssumptionRef[(iii)]{ass:execution-integrity}, $k_0\le k\le k_0+n$, and only control invocations intervene between $\bar s$ and the call's starting state $s'$. There are $m=k-k_0\le n$ such invocations. By \PCAssumptionRef[(i--ii)]{ass:execution-integrity}, they are contiguous, and their final states are the recorded readings. For $m=0$, $\bar s=s'$. The loop computes a view and starts an iteration only from a reading that it has found active (\PCAssumptionRef[(ii)]{ass:execution-integrity}). So $\bar s$, the starting state of each of these control steps, and $s'$ are active. Each control step started from a compliant state by hypothesis, since it runs after the contract has started. Hence \cref{ass:initial-control} gives its enabledness, and \cref{ass:transition-coverage} places its final state in $\tran_\toolctrl$ from its starting state. Chaining these steps through active states gives $\tran^{\leqslant m}_\toolctrl(\bar s,s')$, hence $\tran^{\leqslant n}_\toolctrl(\bar s,s')$, since $m\le n$ and each window contains the previous one (\cref{def:delayed-obligation}).
\end{proof}

The lemma explains why the state need not remain unchanged after a view is published: given compliance up to the call, its delayed obligation covers the actual starting state despite the intervening control steps. The window bounds the number of those steps, not the elapsed time.

\begin{theorem}[Anytime compliance to the contract]\label{thm:commitment-final}
Under \cref{ass:sound-checking,ass:transition-coverage,ass:initial-control,ass:execution-integrity}, for any sequence of tool-call requests and proposed proofs supplied to \cref{alg:pc2}, every resulting execution in which the contract starts satisfies $\clause$ at every instant from the start until the contract closes, closing included, or at every instant after the start if it never closes.
\end{theorem}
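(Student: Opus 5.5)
The plan is an induction over the sequence of operations of the execution, which by \PCAssumptionRef[(i)]{ass:execution-integrity} is a contiguous sequence of control steps and tool calls. The claim to carry through the induction is the following: for every operation that starts after the contract has started, its starting state $s_0$ is compliant, and every instant of its execution interval is compliant while the contract is active and at the instant it closes.

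Since the intervals are contiguous, the state at the end of one operation is the state at the start of the next. Compliance over each interval therefore covers every instant from the start onward. This includes the last, possibly non-returning, operation, and explains the ``never closes'' alternative in the statement.

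\textbf{Base case.} The contract starts during the initial control step, which runs from an inactive reading. By \cref{ass:initial-control}, $\toolctrl$ is enabled there, so \cref{ass:transition-coverage} places every state reached during that step inside $\tran_\toolctrl(s,\cdot)$. The initializer clause then gives $\clause(s')$ for every active $s'$ reached. Hence the instant at which the contract becomes active, and every later instant of that step, is compliant.

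Two points need care here. First, the state may flicker between active and inactive within the step, and the statement only concerns instants from the start onward. Second, the contract ``starts'' when the recorded end reading is active, by \PCAssumptionRef[(ii)]{ass:execution-integrity}. I would take the start instant as the first active state and rely on the initializer clause covering every active state admitted.

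\textbf{Inductive step.} Suppose compliance holds at every instant up to the start of an operation, with starting state $s'$ active. This holds because the loop invokes a routine only after finding the record active.

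\emph{Control step.} The controller clause of \cref{ass:initial-control} gives $\mathcal A_\toolctrl(s')$. So $\Xi_\toolctrl(s')$ holds, and via \cref{ass:transition-coverage} every reached state lies in $\tran_\toolctrl(s',\cdot)$. Then $\Theta_\toolctrl(s')$ gives $\clause$ on all of them.

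\emph{Agent call for a request $(\tool,i,k_0,n,\pi)$.} By \PCAssumptionRef[(iii)]{ass:execution-integrity} the checker accepted $\pi$ for $\Obl^{\leqslant n}_\tool(i,v[k_0])$. By \cref{ass:sound-checking} that obligation holds. The hypothesis of compliance up to the invocation lets me apply \cref{lem:window}, which shows that $\bar s$ and $s'$ are active with $\tran^{\leqslant n}_\toolctrl(\bar s,s')$. Moreover $\clause(\bar s)$ holds, since $\bar s$ is a past state, and $\obs(\bar s)=v[k_0]$. Instantiating the delayed obligation then yields $\mathcal A_\tool(i,s')$, and the same enabledness-plus-coverage argument gives $\clause$ throughout the call.

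\textbf{Closing.} If the contract closes, it does so at the end of some operation whose whole interval was shown compliant. So the closing instant is included. Afterwards the loop stops invoking routines, and nothing more needs to be claimed.

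\textbf{Main obstacle.} The hard part is making the induction well-founded when an operation never returns, and handling instants inside an interval where the state is active but the record is not yet updated. I would phrase the induction hypothesis as ``compliant at every instant up to the start of operation $j$,'' prove it for all $j$ that exist, and derive the in-interval claim for operation $j$ directly from admissibility at its start. This avoids any need for termination.
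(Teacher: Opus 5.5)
Your proposal is correct and follows the paper's proof essentially step for step. The base case applies the initializer line of \cref{ass:initial-control}, with enabledness and coverage, to the state in which initialization ends; the induction over contiguous operations then uses the controller line for control steps, and \cref{ass:sound-checking} with \cref{lem:window} for accepted calls, and your hypothesis ``compliant up to the start of operation $j$'' is exactly what the lemma requires.

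One small caveat concerns your choice of start instant. The initializer line only constrains active states. If you take the start to be the first active instant inside initialization, your claim that every later instant of that step is compliant does not follow when the state can become inactive again within the step. The paper avoids this by taking the start to be the end of the initialization step. That is all the theorem needs, and your base case already covers it.
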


\begin{proof}
By \PCAssumptionRef[(i)]{ass:execution-integrity}, operations are contiguous. Let $s_0$ be the state when the Airlock starts. Initialization runs a control step from $s_0$, which is inactive, since otherwise the Airlock rejects the contract (\cref{ass:execution-integrity}). By \cref{ass:initial-control} the control step is enabled at $s_0$, so \cref{ass:transition-coverage} places the state $s_1$ at its end in $\tran_\toolctrl(s_0,\cdot)$. Since the contract starts, $s_1$ is active, and \cref{ass:initial-control} gives $\clause(s_1)$. It therefore suffices to prove, by induction on the invocations after the start, that each starts in a compliant state and preserves compliance throughout its execution. The first starts from $s_1$. Consider an operation that starts from the state $s$, with $\clause(s)$ by induction; $s$ is active, since \cref{alg:pc2} runs operations only from such states.

(1) A control step. \Cref{ass:initial-control} gives $\mathcal A_\toolctrl(s)$, that is $\Xi_\toolctrl(s)$ and $\Theta_\toolctrl(s)$. Every state $s'$ reached during its interval satisfies $E_\toolctrl(s,s')$, so \cref{ass:transition-coverage} gives $\tran_\toolctrl(s,s')$, hence $\clause(s')$.

(2) A call $\tool(i)$, for the request $(\tool,i,k_0,n,\pi)$. By \PCAssumptionRef[(iii)]{ass:execution-integrity}, the checker accepted $\pi$ as a proof of $\Obl_{\tool}^{\leqslant n}(i,v[k_0])$, which therefore holds by \cref{ass:sound-checking}. Let $\bar s$ be the state from which $v[k_0]$ was computed: $\obs(\bar s)=v[k_0]$, and $\clause(\bar s)$ by induction. By \cref{lem:window}, $\bar s$ and $s$ are active and $\tran^{\leqslant n}_\toolctrl(\bar s,s)$. The obligation, applied to $\bar s$ and $s$, gives $\mathcal A_\tool(i,s)$, that is $\Xi_\tool(i,s)$ and $\Theta_\tool(i,s)$. Every state $s'$ reached during the call's interval satisfies $E_\tool(i,s,s')$, so \cref{ass:transition-coverage} gives $\tran_\tool(i,s,s')$, hence $\clause(s')$.

In both cases, compliance holds throughout the invocation. If it returns, its final state is compliant and, by \PCAssumptionRef[(i)]{ass:execution-integrity}, it is the state from which the next operation, if any, starts. If it never returns, the same argument applies throughout its continuing execution. A rejected or expired request authorizes no call. The argument makes no assumption about how requests or proofs are produced.
\end{proof}

\Cref{thm:commitment-final} derives compliance of the whole execution from the individual obligations. \emph{What, then, does each obligation require?} By \cref{def:delayed-obligation}, $\Obl_{\tool}^{\leqslant n}(i,v)$ holds exactly when the call is admissible from every active state admitted by at most $n$ control steps, through active states, from a compliant active state with view $v$. It therefore accounts both for information hidden by the view and for changes during the control window. This condition is relative to the chosen models, which may admit states and paths that actual execution cannot follow. Establishing admissibility from all these states suffices for compliance, but a false obligation need not imply that the call would actually violate the contract. Relative to the models, nothing weaker would do: every rule that safely admits calls on the basis of the view implies the obligation, so the agent may follow any strategy whose calls keep the contract (\cref{prop:wp}).

The theorem allows arbitrary requests and proposed proofs, but its coverage assumption still concerns every state that execution can reach, including through outside interactions. \Cref{sec:establishing-assumptions} examines how restricting the agent's access can make this assumption easier to establish while retaining models that allow useful calls.

\subsection{Completion through goal tools}\label{sec:completion}
A standard contract (\cref{ex:contract}) ends in one of three ways: it is void if the conditions to start do not hold, settled when the agent proves its goal, or terminated by the control step, for instance at its deadline with a refund. The following theorem shows that it ends in at most one of them, and in each only as agreed; \cref{cor:every-agent} adds that, with a deadline, it does end. Besides \cref{ass:sound-checking,ass:transition-coverage,ass:initial-control,ass:execution-integrity}, the theorem uses that the settlement tool is $\settle=\mathsf{free\_settle}|_{\goal}$ (\cref{sec:tool-families}), that the control step terminates the contract exactly when its decision reading is in the region $\mathit{End}$ of \cref{sec:control-tool}, and the properties of the label stated in \cref{ex:contract}.

\begin{theorem}[Completion]\label{thm:completion}
Assume \cref{ass:sound-checking,ass:transition-coverage,ass:initial-control,ass:execution-integrity}. In every execution of \cref{alg:pc2} with a standard contract, the contract ends in at most one of the following ways:
\begin{enumerate}[label=(\arabic*),leftmargin=2.2em,itemsep=0pt,topsep=2pt]
\item \emph{void}: the conditions to start did not hold; the execution never starts and no tool-call is made;
\item \emph{settled}: the agreed settlement procedure ran, from a state where the goal held;
\item \emph{terminated}: the agreed termination procedure ran, after a control step found a termination condition.
\end{enumerate}
\end{theorem}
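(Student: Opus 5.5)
The plan is to track the status label $\ell$ of the standard contract, using the label properties stated in \cref{ex:contract}. The label changes only at the end of a routine of $\toolctrl$ or $\settle$. The allowed changes are $\mathsf{pending}\to\mathsf{ongoing}$ or $\mathsf{pending}\to\mathsf{void}$ (initializer), $\mathsf{ongoing}\to\mathsf{terminated}$ (termination routine) and $\mathsf{ongoing}\to\mathsf{settled}$ (successful settlement). No outside change modifies it. The three outcomes correspond to the three final labels $\mathsf{void}$, $\mathsf{settled}$, $\mathsf{terminated}$. Since $\terminal(x,t,\ell)$ is $\ell\neq\mathsf{ongoing}$, each of these is inactive. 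Each final label is also absorbing for the Airlock. Once the record is inactive after the contract started, \cref{alg:pc2} exits its loop (\PCAssumptionRef[(ii)]{ass:execution-integrity}), so no further routine runs and the label cannot change again. Hence at most one of the three endings occurs; this is the mutual-exclusion part of the statement.

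I would then treat each case separately.

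\emph{Void case.} By \PCAssumptionRef[(i--ii)]{ass:execution-integrity}, initialization runs exactly one control step from an inactive reading. The label becomes $\mathsf{void}$ only in the branch where the decision reading is not in $\mathit{Ready}$. The record is then inactive, so the loop never runs, no view is published and no call is made (\PCAssumptionRef[(iii)]{ass:execution-integrity}).

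\emph{Settled case.} The label becomes $\mathsf{settled}$ only through $\exec_{\settle}$. That routine runs only for an accepted request whose proof establishes $\Obl^{\leqslant n}_{\settle}(v[k_0])$, which holds by \cref{ass:sound-checking}. \Cref{thm:commitment-final} gives $\clause$ up to the invocation. So \cref{lem:window} applies and places the actual starting state $s$ in the window $\tran^{\leqslant n}_\toolctrl(\bar s,s)$ of a compliant active $\bar s$ with $\obs(\bar s)=v[k_0]$. The obligation then yields enabledness $\Xi_{\settle}(s)$. Because $\settle=\mathsf{free\_settle}|_{\goal}$, this means $\goal(s)\wedge\Xi_{\mathsf{free\_settle}}(s)$, so the settlement ran from a state where the goal held.

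\emph{Terminated case.} The label becomes $\mathsf{terminated}$ only at the end of $\exec_{\mathsf{terminate}}$. That routine is called only inside $\exec_\toolctrl$, after the decision reading was active, in $\mathit{Act}$ and not in $\mathit{Fix}$, i.e. in $\mathit{End}$. That finding is the termination condition.

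The main obstacle is the settled case. One must get from ``a proof was accepted about a possibly stale view'' to ``the goal held at the actual starting state of settlement.'' This needs the hypothesis of \cref{lem:window}, namely compliance at every instant before the call, and I would supply it from \cref{thm:commitment-final}. A subtler point also belongs here. The goal must be read off enabledness of the restricted relation, not off $\Theta$, which could hold vacuously; this is exactly why $\Xi$ is part of admissibility. A secondary care point is that the label cannot change mid-routine or by outside change, so intermediate states of settlement or termination do not count as an ending. That follows directly from the stated label properties.
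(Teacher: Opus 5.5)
Your proposal is correct and follows essentially the same route as the paper. You track the status label, using that only the routines of $\toolctrl$ and $\settle$ change it and that the loop stops once the contract is inactive. You then read the goal off the enabledness of $\mathsf{free\_settle}|_{\goal}$ at the starting state that \cref{lem:window} places in the window, and read termination off a decision reading in $\mathit{End}$. Your explicit appeal to \cref{thm:commitment-final} to discharge the compliance hypothesis of \cref{lem:window}, and the compliance of $\bar s$ needed by the obligation, is a detail the paper's proof leaves implicit.
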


\begin{proof}
The contract is created pending, and only the routines of $\toolctrl$ and $\settle$ change the label, at their end (\cref{ex:contract}). The first operation is a control step, which, as initializer, sets the label to $\mathsf{ongoing}$ when its decision reading is in $\mathit{Ready}$, and to $\mathsf{void}$ otherwise; in the second case the loop stops, and no call is made. Once the label is $\mathsf{settled}$ or $\mathsf{terminated}$, the contract is inactive and \cref{alg:pc2} runs no further operation, so the contract ends in at most one way. (2) As in case (2) of the proof of \cref{thm:commitment-final}, the checker accepted a proof of $\Obl_{\settle}^{\leqslant n}(v[k_0])$, which therefore holds (\cref{ass:sound-checking}). Admissibility of $\settle$ includes its enabledness, hence $\goal$, by the precondition construction, at every active state of the window, and \cref{lem:window} places in the window the state from which the settlement started. (3) The label becomes $\mathsf{terminated}$ only when the control step has run the termination procedure, which it does exactly when its decision reading is in $\mathit{End}$.
\end{proof}

The case analysis rests on the code of the Airlock and of the control step; case (2) also rests on the checker and on the obligation of $\settle$.

At the call-selection step of an active cycle, a pending settlement request whose proof has been accepted and whose window remains valid is selected for execution without further authorization. The client therefore has no discretionary veto over an authorized settlement through the Airlock. The next proposition identifies exactly what its proof must establish.

\begin{proposition}[What settlement asks]\label{prop:settle}
Assume \cref{ass:initial-control} for the standard contract, and let its certificate establish that $\mathsf{free\_settle}$ is admissible at every compliant active state where $\goal$ holds. Then, for every view $v$ and delay $n$,
\[
\Obl_{\settle}^{\leqslant n}(v) \iff \forall s,s'\in\Sigma_{\mathrm{act}} \quad \bigl(\obs(s)=v \wedge \clause(s) \wedge \tran^{\leqslant n}_\toolctrl(s,s')\bigr) \implies \goal(s').
\]
\end{proposition}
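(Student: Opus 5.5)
Unfold $\Obl_{\settle}^{\leqslant n}(v)$ via \cref{def:delayed-obligation}. It is a universally quantified implication whose hypothesis $\obs(s)=v\wedge\clause(s)\wedge\tran^{\leqslant n}_\toolctrl(s,s')$, over $s,s'\in\Sigma_{\mathrm{act}}$, is exactly the hypothesis on the right-hand side of the claimed equivalence. So it suffices to show, for each such pair, that $\mathcal A_{\settle}(s')\iff\goal(s')$. Since $\settle=\mathsf{free\_settle}|_{\goal}$ and $\settle$ has no caller-supplied arguments, \cref{eq:precondition-obligation} gives
\[
\mathcal A_{\settle}(s')\iff \goal(s')\wedge\Xi_{\mathsf{free\_settle}}(s')\wedge\Theta_{\settle}(s').
\]
We also have $\Theta_{\settle}(s')\iff\bigl(\goal(s')\Rightarrow\Theta_{\mathsf{free\_settle}}(s')\bigr)$. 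Hence $\mathcal A_{\settle}(s')\iff\goal(s')\wedge\mathcal A_{\mathsf{free\_settle}}(s')$.

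The direction ($\Rightarrow$) is immediate: admissibility at $s'$ contains $\goal(s')$.

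For ($\Leftarrow$), the certificate hypothesis states that $\mathsf{free\_settle}$ is admissible at every \emph{compliant} active state where $\goal$ holds. We know $s'$ is active and $\goal(s')$. The remaining step, which I expect to be the only nontrivial one, is to show $\clause(s')$. The hypothesis only gives $\clause(s)$ at the view state, and $s'$ is reached through up to $n$ control steps.

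I will prove an auxiliary claim by induction on $n$: if $s,s'\in\Sigma_{\mathrm{act}}$, $\clause(s)$ and $\tran^{\leqslant n}_\toolctrl(s,s')$, then $\clause(s')$.
\begin{itemize}
\item For $n=0$, $s'=s$.
\item For the step, use \cref{eq:delayed-control}. The first disjunct follows from the induction hypothesis. In the second disjunct there is $u\in\Sigma_{\mathrm{act}}$ with $\tran^{\leqslant n}_\toolctrl(s,u)$, so $\clause(u)$ by induction. The controller clause of \cref{ass:initial-control} then gives $\mathcal A_\toolctrl(u)$, in particular $\Theta_\toolctrl(u)$. Since $\tran_\toolctrl(u,s')$, we conclude $\clause(s')$.
\end{itemize}
This is exactly where \cref{ass:initial-control} is used; the rest is unfolding definitions.

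With the claim, $s'$ is a compliant active state satisfying $\goal$. The certificate gives $\mathcal A_{\mathsf{free\_settle}}(s')$, hence $\mathcal A_{\settle}(s')$, and the equivalence holds pointwise, which yields the proposition.
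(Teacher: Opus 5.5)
Your proof is correct and follows the paper's argument. You reduce pointwise to $\mathcal A_{\settle}(s')\iff\goal(s')\wedge\mathcal A_{\mathsf{free\_settle}}(s')$ via the precondition construction, then use the controller line of \cref{ass:initial-control} to obtain $\clause(s')$ so that the certificate applies; the one difference is that you prove this preservation of $\clause$ along the window explicitly, by induction on $n$ through the recursion of \cref{def:delayed-obligation}, where the paper states it in one line.
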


\begin{proof}
By the precondition construction, $\Xi_\settle(s') \iff \goal(s')\wedge\Xi_{\mathsf{free\_settle}}(s')$ and $\Theta_\settle(s') \iff \bigl(\goal(s') \implies \Theta_{\mathsf{free\_settle}}(s')\bigr)$. From left to right, $\mathcal A_\settle(s')$ gives $\Xi_\settle(s')$, hence $\goal(s')$. From right to left, $\clause(s')$ holds, since $s'$ is reached from the compliant state $s$ by control steps from active states, which preserve $\clause$ by \cref{ass:initial-control}. Together with $\goal(s')$, the certificate gives $\mathcal A_{\mathsf{free\_settle}}(s')$, hence $\mathcal A_\settle(s')$.
\end{proof}

When every compliant active state with the given view satisfies the goal and modeled control steps preserve it throughout the window, the right-hand side holds. A proof of this condition suffices for authorization while the request remains valid; admissibility of the settlement procedure is already established by the contract certificate.

\section{From compliance to safety}\label{sec:risks}

\Cref{sec:guarantees} establishes that execution remains within the contract's constraints. We now study when this compliance protects the resources and interactions the contract concerns. A payment recorded within budget, for instance, does not establish that the service charged no additional fees. Connecting the two requires both a mathematical account of what the state describes and conditions on the system that produces and acts on that description. This is the gap between a specification, which a machine can be built to meet, and a requirement about the world in which the machine operates \citep{jackson1995world,zave1997four}. A proof relates a system to a model of it, never the model to the world \citep{smith1985limits}.

Three questions guide this section and the next:
\begin{itemize}[leftmargin=*,itemsep=3pt,topsep=3pt]
\item \emph{What does the mathematical construction guarantee?} Under which assumptions does compliance of the state imply safety in the world?
\item \emph{How can those assumptions be established for a deployment?} What must the routines, host, access controls and external services do, and what evidence supports these claims?
\item \emph{What do the parties do about what remains assumed?} Which assumptions do they accept, who undertakes responsibility for them, and what happens if they fail?
\end{itemize}
\Cref{sec:world} addresses the first question, and illustrates on an example what a world is. \Cref{sec:establishing-assumptions} studies how isolating the agent supports useful effect models and guarantees that apply to every agent under the same deployment conditions. \Cref{sec:execution-duration} discusses the requirements on execution. \Cref{sec:assurance} addresses the second and third questions. It shows that the assumptions of a deployment need not be established from scratch, since they follow by composing guarantees that ship with the capability modules from which its tools are built (\cref{sec:composition}). It then assembles these dependencies in a tree and records their proposed allocation to parties (\cref{sec:responsibilities}), connecting this analysis to the treatment of remaining assumptions in \cref{sec:signature}.

\subsection{World safety}\label{sec:world}

Let $\worlds$ be a space of configurations of the relevant system and environment, including such facts as actual charges, issued payment instructions and the state of external services. We write $w(\tau)$ for its configuration at time $\tau$. A relation $\corresponds\subseteq\worlds\times\ss$ specifies how the contract's state describes these facts: $\corresponds(w,s)$ means that $s$ is a valid description of $w$. This description can be partial or conservative. A recorded upper bound on expenditure, for instance, can describe several different amounts actually charged, and the same expenditure can admit several valid bounds. In the terms of program refinement, $\corresponds$ is an abstraction relation between the world and its description \citep{hoare1972proof,abadi1991refinement}. It thus differs from $\obs$, which determines what the requesting agent sees of the state.

\begin{example}[A trading agent and its world]\label{ex:worlds}
Consider an agent that manages a small investment portfolio. Its tools read and send email, read and write files in a working directory, search, read and publish posts on a social network, read quotes from a market-data service, query a database of past transactions, and place orders through a brokerage account. A world $w\in\worlds$ is a configuration of all these systems: the messages in the mailbox and those delivered to their recipients, the files on disk, the posts that the public can read, the prices at which the securities currently trade, the rows of the database, and the cash, positions and pending orders held by the broker, together with the current time. The state $s$ records only what the contract needs, as the tools report it: for instance, the messages sent with their recipients, the posts published, the last quote read with its time, and the money committed to orders. The correspondence $\corresponds$ says what each record means in the world. The record of sent messages describes the world if every message that left the mailbox appears in it. A recorded quote describes the world if the service reported that price at that time, whatever the price is now. The recorded commitment may be a conservative bound, counting each pending order at its maximum cost. A protection such as ``never commit more than a given amount, and never disclose the portfolio publicly'' is a predicate $\Safe$ on worlds: it concerns the money actually committed and what the public can actually read, not the records. Much of the world changes without the agent: messages arrive, prices move, other applications update the database, and the broker executes pending orders.
\end{example}

In most scenarios, such as the example above, viewing a world $w\in\worlds$ as a single object is impractical. This is because applications are often organized in largely independent systems, so the resulting world can be naturally represented as a product of their configurations, $\worlds=\worlds_{\mathrm{mail}}\times\worlds_{\mathrm{files}}\times\dots\times\worlds_{\mathrm{broker}}$, and the state a product of the corresponding records. In \cref{sec:assurance} we will address this question and show how a world, a state, a correspondence and the tools that act on them can be assembled from small modules, each concerning a single service such as the mailbox or the brokerage account, whose guarantees are established once.

Let $\Safe$ be a predicate on $\worlds$ specifying the intended protection, as in \cref{ex:worlds} above. Worlds may include history when the protection concerns past events, such as a document having been disclosed, as history variables do in proofs of refinement \citep{abadi1991refinement}. Compliance is expressed by $\clause(s)$, while safety concerns $\Safe(w)$. We connect them through two conditions. \emph{Fidelity} requires the state to remain a valid description of the world as execution proceeds. \emph{Adequacy} requires a world described by a compliant state to satisfy the intended safety property. Together, they relate what the Airlock enforces on states to what the parties require of the world. This is the satisfaction argument of requirements engineering, in which a specification, together with properties of the world, entails a requirement \citep{zave1997four,jackson2001problem}.

\textbf{Fidelity.}
To express preservation of correspondence, we must account for how the world and state evolve together. Let $\widehat{\effect}_\tool(i,w,s,w',s')$ hold when an invocation of $\exec_\tool(i)$ starting in $(w,s)$ can reach $(w',s')$ at any instant of its execution. This includes outside changes during the invocation and applies also to a routine that does not return. The effect relation of \cref{sec:coherence} is its projection on states:
\begin{equation}\label{eq:world-projection}
\effect_\tool(i,s,s')\quad\iff\quad
\exists w,w'\in\worlds.\;\widehat{\effect}_\tool(i,w,s,w',s').
\end{equation}

This joint relation lets us state what it means for execution to preserve the description: starting from a pair satisfying $\corresponds$, every pair reached during an enabled invocation must still satisfy it, as a simulation relation is preserved by every step of the system it relates \citep{lynch1995forward}. Correspondence must also be established when the contract starts, since nothing is known of the world when the contract is signed. We collect these requirements in the following assumption.

\begin{assumption}[Fidelity]\label{ass:tool-fidelity}
For every $\tool\in\tools\cup\{\toolctrl\}$, input $i\in \inputs_\tool$, worlds $w,w'$ and states $s,s'$,
\begin{equation}\label{eq:fidelity}
\corresponds(w,s)\wedge\enabled_\tool(i,s)\wedge
\widehat{\effect}_\tool(i,w,s,w',s')\quad\implies\quad\corresponds(w',s'),
\end{equation}
and, for every inactive state $s$ and active state $s'$,
\begin{equation}\label{eq:fidelity-init}
\widehat{\effect}_\toolctrl(w,s,w',s')\quad\implies\quad\corresponds(w',s').
\end{equation}
\end{assumption}

\Cref{ass:transition-coverage} requires the model to cover the states reached during execution, while \cref{ass:tool-fidelity} requires those states to remain valid descriptions of the world. The claim that a service reported a confirmed booking at time $t$ describes a past event. Once established, it survives a later cancellation if the record is preserved and the world model retains that history. A claim that the booking is currently confirmed must instead account for possible cancellations. Fidelity thus depends on what the recorded information means, while adequacy asks whether that meaning suffices for the intended protection. The second line is the counterpart of the initializer line of \cref{ass:initial-control}: the control step makes the contract active only in a state that describes the world, whatever the world was before, so nothing needs to be assumed about the world when the contract is signed. For the control design of \cref{sec:control-tool}, a sufficient condition is that $\mathsf{read\_state}()$ returns a state that describes the world when the reading completes, and that the remaining initialization preserves this correspondence until the contract becomes active.

\textbf{Adequacy.}
Fidelity gives meaning to the state throughout execution. We must then ask whether the clauses imposed on that state suffice for the protection the parties require: even a faithful description can satisfy a contract that omits an important constraint. An agent that optimizes within the clauses pushes exactly where they and the intended protection come apart \citep{manheim2018goodhart}, and no contract specifies everything its parties care about \citep{hadfieldmenell2019incomplete}.

\begin{assumption}[Adequacy]\label{ass:contract-adequacy}
For every $w\in\worlds$ and $s\in\ss$,
\begin{equation}\label{eq:adequacy}
\corresponds(w,s)\wedge\clause(s)\quad\implies\quad\Safe(w),
\end{equation}
and, for a contract with a goal $\goal$ whose intended outcome in the world is a predicate $\goal_{\worlds}$,
\begin{equation}\label{eq:goal-adequacy}
\corresponds(w,s)\wedge\goal(s)\quad\implies\quad\goal_{\worlds}(w).
\end{equation}
\end{assumption}

For specified $\corresponds$, $\clause$ and $\Safe$, \cref{ass:contract-adequacy} can be proved when the relationships expressed by the correspondence and clauses imply the required protection. In the \hyperref[ex:payment-ledger]{payment ledger} of \cref{app:examples}, correspondence bounds actual expenditure by recorded reservations, and the clause bounds those reservations by the budget, so adequacy follows by arithmetic. The parties must also establish that $\Safe$ expresses their intended protection, since a proof of the implication alone does not settle this question.

We can now extend \cref{thm:commitment-final} from compliance of the state to safety in the world. The additional step is to show that the state is a valid description of the world throughout execution from the start of the contract, including while a routine runs.

\begin{theorem}[World safety]\label{thm:world-final}
Under \cref{ass:sound-checking,ass:transition-coverage,ass:initial-control,ass:execution-integrity,ass:tool-fidelity,ass:contract-adequacy}, every execution of \cref{alg:pc2} in which the contract starts satisfies $\Safe(w(\tau))$ at every instant from the start until the contract closes, closing included, or at every instant after the start if it never closes, for any sequence of requests and proposed proofs.
\end{theorem}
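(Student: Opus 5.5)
The plan is to reduce world safety to two invariants along the execution: compliance of the state, already given by \cref{thm:commitment-final}, and fidelity, $\corresponds(w(\tau),s(\tau))$, which we prove separately. Adequacy, \cref{eq:adequacy}, then yields $\Safe(w(\tau))$ at every instant where both hold. Since the Airlock acts through contiguous operations (\PCAssumptionRef[(i)]{ass:execution-integrity}), we would index the argument by operations and prove, by induction over the operations after the contract starts, that $\corresponds(w,s)$ holds at every instant of each operation's interval. We would also show that it holds at the start of the next operation, which is the end state of the previous one.

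The base case is initialization. The Airlock starts from an inactive $s_0$, because otherwise it rejects the contract. The initializing control step ends, when the contract starts, in an active $s_1$ with world $w_1$, and \cref{eq:fidelity-init} gives $\corresponds(w_1,s_1)$ without any assumption on the world before signing. For the inductive step, consider an operation starting from $(w,s)$ with $\corresponds(w,s)$ and $s$ active and compliant. The latter comes from the proof of \cref{thm:commitment-final}, whose induction we may either reuse or interleave with the present one. We need enabledness at $s$ in order to apply \cref{eq:fidelity}. For a control step, \cref{ass:initial-control} gives $\mathcal A_\toolctrl(s)$ from $\clause(s)$, hence $\Xi_\toolctrl(s)$. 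For an agent call, we argue as in case (2) of \cref{thm:commitment-final}: sound checking (\cref{ass:sound-checking}) makes the delayed obligation hold, \cref{lem:window} places $s$ in the control window of the compliant, active $\bar s$ with $\obs(\bar s)=v[k_0]$, and hence $\mathcal A_\tool(i,s)$, in particular $\Xi_\tool(i,s)$. Every pair $(w',s')$ reached during the interval satisfies $\widehat\effect_\tool(i,w,s,w',s')$ by definition of the joint effect relation, so \cref{eq:fidelity} gives $\corresponds(w',s')$. This covers intermediate states, the final state and non-returning routines alike.

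Combining the two invariants at every instant $\tau$ from the start until closing, closing included, or for all $\tau$ after the start if the contract never closes: $\clause(s(\tau))$ holds by \cref{thm:commitment-final}, $\corresponds(w(\tau),s(\tau))$ holds by the induction, and \cref{eq:adequacy} gives $\Safe(w(\tau))$. The argument never refers to how requests or proofs are produced, so it holds for any sequence of them.

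The main obstacle is the bookkeeping that makes the two inductions fit together. Fidelity is only assumed from enabled starting states, and enabledness of a call is derived from compliance via the obligation. Compliance, in turn, is established for states, while fidelity must be transported across operation boundaries in the joint world--state pair. We must make sure that the end pair of one operation is the start pair of the next, which uses \PCAssumptionRef[(i--ii)]{ass:execution-integrity} and the identification of $s(\tau)$ with the state component of the joint evolution. We must also make sure that the window lemma's hypothesis, compliance up to the invocation, is available when we need it. The first thing to try is a single combined induction hypothesis, ``$\clause(s)\wedge\corresponds(w,s)$ holds at every instant so far'', which gives enabledness and fidelity simultaneously at each step.
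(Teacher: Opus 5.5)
Your proposal is correct and takes essentially the same route as the paper's proof. Both obtain compliance from \cref{thm:commitment-final}, correspondence at the start from the initializer line of \cref{ass:tool-fidelity}, and enabledness of each later operation from \cref{ass:initial-control} or from the accepted obligation together with \cref{lem:window}; both then propagate correspondence by induction over the contiguous operations using the first line of \cref{ass:tool-fidelity}, and conclude with adequacy. The bookkeeping you worry about is lighter than you expect: \cref{thm:commitment-final} already gives compliance at every instant from the start, so the window lemma's hypothesis is available without interleaving the two inductions.
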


\begin{proof}
By \cref{thm:commitment-final}, $\clause(s(\tau))$ holds from the start of the contract. When the control step starts the contract, the state describes the world, by the second line of \cref{ass:tool-fidelity}. Each later operation is enabled at its starting state: control by \cref{ass:initial-control}, and a requested call by its accepted obligation and \cref{lem:window}, as in the proof of \cref{thm:commitment-final}. Hence the first line of \cref{ass:tool-fidelity} preserves correspondence throughout that operation, which, operations being contiguous (\PCAssumptionRef[(i)]{ass:execution-integrity}), lasts until the next one starts, or throughout its continuing execution if it never returns. Induction over the operations therefore gives $\corresponds(w(\tau),s(\tau))$ at every instant from the start. Together with compliance, \cref{ass:contract-adequacy} yields the conclusion.
\end{proof}

\Cref{thm:world-final} separates what must be checked for each call from what must be established about the deployment. The checker reasons about the contract's states and tool models. Fidelity keeps these states valid descriptions throughout execution, and adequacy makes their compliance sufficient for safety in the world. The argument transfers a property from the description to the world it describes, as a refinement mapping transfers a property proved of a specification to a system that implements it \citep{abadi1991refinement}. This is what allows the local obligations to protect the resources and interactions the contract concerns.

Establishing these conditions requires examining which interactions the deployment permits, and \cref{sec:mediation} considers how the agent's access affects them.

Many contracts also prescribe a settlement action once a goal has been achieved, as in the standard contract of \cref{ex:contract}. The completion guarantee of \cref{thm:completion} concerns this goal as expressed on states. The following corollary transfers it to the intended outcome $\goal_{\worlds}$ in the world, through $\corresponds$ and the second part of \cref{ass:contract-adequacy}.

\begin{corollary}[Completion in the world]\label{cor:world-completion}
Under the assumptions of \cref{thm:completion}, assume \cref{ass:tool-fidelity,ass:contract-adequacy}. Then every invocation of the settlement routine starts at an instant $\tau$ satisfying $\goal_{\worlds}(w(\tau))$.
\end{corollary}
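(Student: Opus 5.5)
The corollary follows by combining the per-invocation argument of \cref{thm:completion} with the correspondence invariant established in the proof of \cref{thm:world-final}. Fix an execution and an invocation of the settlement routine, that is, a call $\settle$ admitted for a request $(\settle,k_0,n,\pi)$, and let $\tau$ be the instant at which it starts, in state $s=s(\tau)$. The plan is to show two facts about $s$: that $\goal(s)$ holds, and that $\corresponds(w(\tau),s)$ holds. The second part of \cref{ass:contract-adequacy} then immediately gives $\goal_{\worlds}(w(\tau))$.

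For $\goal(s)$ I would reuse case (2) of the proof of \cref{thm:completion}. By \PCAssumptionRef[(iii)]{ass:execution-integrity}, the checker accepted $\pi$ for $\Obl_{\settle}^{\leqslant n}(v[k_0])$, so the obligation holds by \cref{ass:sound-checking}. Let $\bar s$ be the state from which $v[k_0]$ was computed. Since the contract has started (a settlement call occurs only inside the main loop), \cref{thm:commitment-final} gives $\clause$ at every instant up to the invocation. Hence \cref{lem:window} applies and shows that $\bar s$ and $s$ are active with $\tran^{\leqslant n}_\toolctrl(\bar s,s)$, and $\clause(\bar s)$ holds. Instantiating the obligation at $(\bar s,s)$ yields $\mathcal A_\settle(s)$, in particular $\Xi_\settle(s)$. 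By the precondition construction $\settle=\mathsf{free\_settle}|_{\goal}$, this gives $\goal(s)$. Alternatively, the left-to-right direction of \cref{prop:settle} delivers the same conclusion.

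For $\corresponds(w(\tau),s)$ I would invoke the induction inside the proof of \cref{thm:world-final}, which establishes $\corresponds(w(\tau'),s(\tau'))$ at every instant from the start of the contract. Its hypotheses are exactly the assumptions available here: those of \cref{thm:completion} together with \cref{ass:tool-fidelity,ass:contract-adequacy}. The one subtlety is that the theorem is stated as a conclusion about $\Safe$, so I would explicitly extract the intermediate invariant. Correspondence is established at the start by the second line of \cref{ass:tool-fidelity}. It is then preserved across each contiguous operation by the first line, because each operation is enabled: control steps by \cref{ass:initial-control}, and calls by their accepted obligations and \cref{lem:window}. By \PCAssumptionRef[(i)]{ass:execution-integrity}, the instant $\tau$ at which settlement starts is the end of the preceding operation, so the invariant holds there.

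The main obstacle is this bookkeeping. The instant at which the settlement routine starts must coincide with a state and world pair covered by the invariant, which is what contiguity guarantees. In addition, the correspondence claim has to be stated as a standalone lemma rather than read off the theorem's conclusion. Once both facts are in hand, combining them with \eqref{eq:goal-adequacy} completes the proof.
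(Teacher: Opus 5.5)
Your proposal is correct and follows the same route as the paper. You get $\goal(s(\tau))$ from case~(2) of the completion argument (the accepted obligation, \cref{lem:window}, and the precondition construction) and $\corresponds(w(\tau),s(\tau))$ from the correspondence invariant inside the proof of \cref{thm:world-final}, then conclude with \cref{eq:goal-adequacy}; your explicit extraction of that invariant and the contiguity check at the settlement start make precise what the paper's short proof leaves implicit.
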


\begin{proof}
As in the proof of \cref{thm:completion}, case (2), the accepted settlement obligation gives $\goal(s(\tau))$. By the correspondence argument in the proof of \cref{thm:world-final}, $\corresponds(w(\tau),s(\tau))$ holds, and \cref{eq:goal-adequacy} gives the conclusion.
\end{proof}

A central question is to determine when assumptions \cref{ass:tool-fidelity,ass:contract-adequacy} are satisfied. \Cref{app:examples} works through step-by-step examples where they are derived in practice, for a payment ledger for safety and booking confirmations for completion, including settlement of the travel contract.

\subsection{Complete mediation and sandboxing: safety against every agent}\label{sec:establishing-assumptions}\label{sec:mediation}

\begin{figure}[t]
\centering
\begingroup
\definecolor{ink}{HTML}{26323C}
\definecolor{muted}{HTML}{56636E}
\colorlet{blue}{pcAgentText}
\colorlet{panelfill}{pcControlFill!35}
\colorlet{red}{pcDangerText}
\colorlet{redfill}{pcDanger!9}
\colorlet{green}{pcCheckText}
\colorlet{greenfill}{pcCheckFill!55}

\let\C\clause
\let\ctrl\toolctrl
\resizebox{\linewidth}{!}{%
\begin{tikzpicture}[
  x=1cm,y=1cm,yshift=-2.2cm,
  font=\sffamily\fontsize{9}{11}\selectfont,text=ink,
  line width=0.7pt,line cap=round,line join=round,
  >={Latex[length=2mm,width=1.4mm]},
  flow/.style={->,draw=blue,line width=1pt},
  correspondence/.style={draw=muted,line width=0.8pt,
    dash pattern=on 1.2pt off 2.2pt},
  correspondence arrow/.style={correspondence,
    -{Latex[open,length=1.9mm,width=1.4mm]}},
  note/.style={font=\sffamily\fontsize{8}{10}\selectfont,text=muted},
  routine/.style={draw=ink,fill=white,rounded corners=2pt,
    minimum width=1.65cm,minimum height=0.77cm,inner sep=4pt,align=center},
  world/.style={circle,draw=ink,fill=white,minimum size=1.2cm,
    inner sep=0pt,font=\fontsize{12}{14}\selectfont},
  state/.style={pc state,minimum size=1.03cm,inner sep=0pt,
    font=\fontsize{12}{14}\selectfont},
  failed/.style={text=red,font=\sffamily\bfseries\fontsize{10}{12}\selectfont}
]
\path[use as bounding box] (0,2.2) rectangle (21.0,9.3);

\draw[blue!75,fill=panelfill,line width=0.9pt,rounded corners=4pt]
  (0.25,5.0) rectangle (8.15,8.0);
\draw[blue!30,line width=0.7pt] (3.1,5.0) -- (3.1,8.0);
\node[text=blue] at (5.625,7.64) {Airlock};
\node[note,text=blue] at (1.675,5.35) {Isolation};

\node[routine,minimum width=1.28cm]
  (agent) at (1.675,7.0) {LLM\\agent};
\node[routine,draw=blue,minimum width=1.55cm]
  (gate) at (4.4,7.0) {Proof\\gate};
\node[routine] (tool) at (6.95,7.0) {$\exec_T(i)$};
\node[routine] (control) at (6.95,5.8) {$\exec_{\ctrl}$};
\draw[flow] (agent.east) -- (gate.west);
\draw[flow] (gate.east) -- (tool.west);

\node[world] (world) at (9.9,7.0) {$w$};
\node[world,draw=red,fill=redfill,minimum size=1.3cm]
  (next) at (16.0,7.0) {$w'$};
\node[note,anchor=east] at (9.55,7.81) {World};
\node[text=red] at (16.0,7.98) {$\neg\Safe(w')$};
\coordinate (routinejoin) at (8.8,7.0);
\draw[blue,line width=1pt] (tool.east) -- (routinejoin);
\draw[blue,line width=1pt,rounded corners=3pt]
  (control.east) -- (8.8,5.8) -- (routinejoin);
\fill[blue] (routinejoin) circle[radius=0.035cm];
\draw[flow] (routinejoin) -- (world.west);
\draw[flow] (world.east) -- (next.west);

\draw[->,red,line width=1pt] (agent.north) -- (1.675,7.94);
\draw[red,line width=1.8pt] (1.43,8.0) -- (1.92,8.0);
\node[note,anchor=west,text=red,inner sep=2pt]
  at (1.99,7.81) {Blocked};
\draw[->,red!40,line width=1pt,dash pattern=on 4pt off 3pt,rounded corners=6pt]
  (1.675,8.25) -- (1.675,8.7) -- (9.9,8.7) -- (world.north);
\node[anchor=south,text=red!70,inner sep=4pt] at (5.8,8.7)
  {Ungated access};
\node[note,text=red!70] at (5.8,8.35) {If not isolated};

\node[state,draw=green,fill=greenfill] (initial) at (9.9,4.65) {$s$};
\node[text=green] at (9.9,3.88) {$\C(s)$};
\draw[correspondence arrow] (world.south) -- (initial.north);
\node[anchor=west,text=muted] at (10.15,5.63) {$\rho(w,s)$};

\node[state,draw=green,fill=greenfill] (sone) at (13.0,4.65) {$s$};
\node[state,draw=red,fill=redfill] (stwo) at (16.0,4.65) {$s'$};
\node[state,draw=green,fill=greenfill] (sthree) at (19.0,4.65) {$s'$};
\draw[correspondence] (next.south) -- (16.0,6.05);
\draw[correspondence,rounded corners=4pt]
  (16.0,6.05) -- (13.0,6.05) -- (13.0,5.83);
\draw[correspondence arrow,draw=red] (13.0,5.83) -- (sone.north);
\node[text=red,fill=white,inner sep=1.5pt] at (13.0,5.63) {$\times$};
\draw[correspondence arrow] (16.0,6.05) -- (stwo.north);
\draw[correspondence arrow,rounded corners=4pt]
  (16.0,6.05) -- (19.0,6.05) -- (sthree.north);
\node[anchor=west,text=red] at (13.25,5.63) {$\neg\rho(w',s)$};
\node[anchor=west,text=muted] at (16.25,5.63) {$\rho(w',s')$};
\node[anchor=west,text=muted] at (19.25,5.63) {$\rho(w',s')$};
\node[text=green] at (13.0,3.88) {$\C(s)$};
\node[text=red] at (16.0,3.88) {$\neg\C(s')$};
\node[text=green] at (19.0,3.88) {$\C(s')$};
\node[note] at (13.0,3.35) {State unchanged};
\node[note] at (16.0,3.35) {Contract violated};
\node[note] at (19.0,3.35) {State still compliant};
\node[failed] at (13.0,2.75) {\cref*{ass:tool-fidelity} fails};
\node[failed] at (16.0,2.75) {\cref*{ass:transition-coverage} or \cref*{ass:initial-control} fails};
\node[failed] at (19.0,2.75) {\cref*{ass:contract-adequacy} fails};
\end{tikzpicture}}
\endgroup
\caption{What can go wrong without isolation. An action outside the proof gate (pale path) can make the world unsafe while control runs. Depending on how the state responds, one of the assumptions fails, as the three branches show (\cref{app:no-isolation}).}
\label{fig:mediation}
\end{figure}

\textbf{The agent's access.}
The discussion so far has left one actor implicit: the agent, which may be adversarial. Suppose that the agent could act on the state or on the world independently of the Airlock, for instance by deleting or sending messages in a mailbox while a tool that reads the mailbox is running. Since $\effect_\tool$ and $\widehat{\effect}_\tool$ describe everything that can happen during an invocation of $\tool$, they would then include the agent's direct actions, alongside the outside changes of the world. The possible effects can therefore depend on which agent is present. We write $\effect^\alpha_\tool$ and $\widehat{\effect}^\alpha_\tool$ for the effect relations of $\tool$ when the agent is $\alpha$.
Coverage (\cref{ass:transition-coverage}) requires each model $\tran_\tool$ to cover these additional changes, and fidelity (\cref{ass:tool-fidelity}) requires the correspondence to survive them. One option is to model the particular agent, tying the justification of the tools to that agent's behaviour. The other is to use a common model that covers every change any agent could cause. This avoids verifying the agent, but if those possibilities include a contract violation during every invocation, no call can be authorized, and control itself cannot satisfy \cref{ass:initial-control}.

An agent can also change the world without changing the state. If that change invalidates what the state describes, fidelity fails, and enlarging $\tran_\tool$ alone cannot repair it. \Cref{fig:mediation} illustrates these failures, and \cref{app:no-isolation} works through them in detail. The classical answer to this problem is to restrict the agent's access instead of modeling it. This is sufficient, though, as we discuss at the end of this section, not necessary.

Controlling access in this way has a long history in computer security. In particular, the principle of \emph{complete mediation} requires each access to the objects that a system protects to pass through the authorization that governs it \citep{saltzer1975protection}. Here the protected objects are the state and the world, and the authorization is the proof gate: the Airlock checks an obligation before executing a requested operation (\PCAssumptionRef[(iii)]{ass:execution-integrity}). Mediation is complete only if the agent has no other way to act, which is what isolation, stated below, provides. The Airlock then acts as a reference monitor for the agent's access \citep{anderson1972computer}.

To express what this restriction provides, we take the agent to comprise every process that runs for it: a model, the code that drives it, and any process they start. We then fix a tool $\tool$ and its input $i$, and consider its routine running with that input while none of the agent's processes runs. Other parties, external services and the passage of time remain present, under the same conditions on their behaviour. We call these \emph{reference executions}. As with the effect relations of \cref{sec:coherence,sec:world}, $\effect^\bullet_\tool(i,s,s')$ holds when such an invocation starting in $s$ can reach $s'$ at any instant, and $\widehat{\effect}^\bullet_\tool(i,w,s,w',s')$ records the corresponding possibility for a world--state pair. The following assumption specifies how the agent can be present without introducing effects beyond those of these executions.

\begin{assumption}[Isolation of the agent]\label{ass:isolation} The agent acts on the system only through invocations selected by the Airlock among its requests, in particular:
\begin{enumerate}[label=(\roman*),leftmargin=2.2em,itemsep=0pt,topsep=2pt]
\item Nothing else it does, through any channel (e.g. files, network connections, shared memory or messages to other processes) changes the world, the state or the internal state of a running routine, during or between invocations of any $\tool\in\tools\cup\{\toolctrl\}$ routine.
\item Each tool or $\toolctrl$ routine receives from the agent only the call input, fixed at invocation.
\item The evolution of the world and the state during the invocation depends only on their current values, the routine's internal state and the fixed input. In particular it does not depend on the agent's computation or internal state.
\end{enumerate}
\end{assumption}

The assumption concerns how the system is organized, rather than whether the agent behaves cooperatively. It allows the agent to choose requests and prepare proofs, but excludes both direct changes to the protected system and further influence on a running routine through shared data or pending requests. The independence condition lets us compare an execution with an agent to one of the reference executions. The prototypical way to establish \cref{ass:isolation} is to run the agent in a sandbox, a container or lightweight virtual machine \citep{goldberg1996secure,agache2020firecracker} with no network route, except to its model provider if the contract allows it, and no access to the Airlock's files, memory or credentials, which stay with the routines. The implementation is documented in the \PCRepo{}.

\begin{lemma}[Isolation]\label{lem:isolation}
Under isolation (\cref{ass:isolation}), for every agent $\alpha$, adversarial or not, for every $\tool\in\tools\cup\{\toolctrl\}$, and all inputs, worlds and states,
\[
\effect^\alpha_\tool(i,s,s') \implies \effect^\bullet_\tool(i,s,s'),
\qquad
\widehat{\effect}^\alpha_\tool(i,w,s,w',s') \implies
\widehat{\effect}^\bullet_\tool(i,w,s,w',s').
\]
Consequently, coverage and fidelity established for the reference executions imply \cref{ass:transition-coverage,ass:tool-fidelity} for every agent.
\end{lemma}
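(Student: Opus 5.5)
The plan is a simulation argument: take any execution with agent $\alpha$ and build a reference execution that passes through the same world--state pairs. Fix $\tool\in\tools\cup\{\toolctrl\}$, an input $i$, and an invocation of $\exec_\tool(i)$ with agent $\alpha$ present. Suppose it starts in $(w,s)$ and reaches $(w',s')$ at some instant $\tau$. I will show that some reference execution, started in $(w,s)$ with the same input, reaches $(w',s')$ at the corresponding instant. That gives the second implication. The first then follows by projection through \cref{eq:world-projection}: $\effect^\alpha_\tool(i,s,s')$ yields $w,w'$ with $\widehat{\effect}^\alpha_\tool(i,w,s,w',s')$, hence $\widehat{\effect}^\bullet_\tool(i,w,s,w',s')$, hence $\effect^\bullet_\tool(i,s,s')$. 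This step needs the reference effect relations to project in the same way, which holds by their definition.

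To build the reference execution, I will use the three parts of \cref{ass:isolation} one at a time.
\begin{enumerate}[label=(\roman*),leftmargin=2.2em,itemsep=0pt,topsep=2pt]
\item By part (ii), the only thing the routine receives from $\alpha$ is the input $i$, fixed at invocation. So the routine's initial internal state is the same whether or not $\alpha$ runs afterwards.
\item By part (i), nothing else $\alpha$ does changes the world, the state, or the routine's internal state during the invocation.
\item By part (iii), the evolution of the triple (world, state, routine internal state) depends only on its current value and on $i$, not on $\alpha$'s computation.
\end{enumerate}
Consider the trajectory of this triple over $[\tau_0,\tau]$ in the $\alpha$-execution. The contributions of other parties, external services and time are present in reference executions under the same conditions. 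So the same resolution of their behaviour, together with the routine's internal randomness, yields a run in which no agent process is active and the triple evolves identically. Reaching $(w',s')$ at $\tau$ is then a reference possibility. If the routine never returns, the argument applies at every instant of its continuing execution.

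The consequence follows by chaining implications. Suppose coverage holds for reference executions: $\Xi_\tool(i,s)\wedge\effect^\bullet_\tool(i,s,s')\Rightarrow\tran_\tool(i,s,s')$. Combining it with $\effect^\alpha_\tool\Rightarrow\effect^\bullet_\tool$ gives \cref{eq:ETleqDT} for $\effect^\alpha_\tool$, which is \cref{ass:transition-coverage} for agent $\alpha$. In the same way, the hypotheses of \cref{eq:fidelity} and \cref{eq:fidelity-init} with $\widehat{\effect}^\alpha$ imply those with $\widehat{\effect}^\bullet$, so both lines of \cref{ass:tool-fidelity} transfer. Both relations sit in the antecedent, so the implication points the right way.

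The main obstacle is the simulation step. The claim ``the same behaviour of outside parties is available without the agent'' is informal. Other parties might react to messages the agent sent before the invocation, and the agent's process consumes shared resources. I will handle this by placing every pre-invocation influence of $\alpha$ in the starting pair $(w,s)$; part (i) already forbids such influence from acting during the invocation through any other channel. I will then read part (iii) as a determinism-modulo-environment statement, so that the reference execution is defined as the same run with the agent's processes removed. I expect to state this identification explicitly as the formal reading of \cref{ass:isolation}, rather than derive it.
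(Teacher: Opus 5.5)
Your proposal is correct and takes essentially the paper's approach. By the three parts of \textbf{isolation}, the trajectory of (world, state, routine internal state) in the $\alpha$-execution is one a reference execution can also produce. Projection then gives the state implication, and since the effect relations sit in the antecedents of coverage and of both fidelity lines, those transfer by chaining. The identification you flag as informal is also read directly off the assumption in the paper's main proof. The paper's appendix makes it precise with stepwise non-interference conditions and an induction on invocation prefixes, matching each non-agent step by a possibly empty sequence of steps rather than in lockstep.
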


\begin{proof}
Fix an agent $\alpha$. Consider an execution in which the agent $\alpha$ runs, and in it an invocation of $\tool$ on input $i$ that starts in the world $w$ and the state $s$. For each instant $\tau$ of the invocation, write $w(\tau)$, $s(\tau)$ and $r(\tau)$ for the world, the state and the internal state of the routine at $\tau$. By \cref{ass:isolation}, nothing the agent does changes these three quantities, and how they can evolve depends only on their current values and on $i$, not on the agent. The trajectory $\tau\mapsto(w(\tau),s(\tau),r(\tau))$ is therefore one that a reference execution, started in $w$ and $s$ with the same input, can also produce. Every pair $(w(\tau),s(\tau))$ is thus reached in a reference execution, which proves the implication for $\widehat{\effect}^\alpha_\tool$, and projecting on states gives it for $\effect^\alpha_\tool$. Coverage and fidelity for the reference executions therefore apply to every pair reached in the actual execution.
\end{proof}

For a stepwise account of this argument, \cref{app:isolation} states sufficient conditions under which the agent's computation can be removed while preserving the worlds and states that an invocation can reach. Host and service activity remain possible in the reference executions. The argument uses \emph{non-interference}, which studies whether the activity of one process can affect another \citep{goguen1982security,rushby1992noninterference}.

The lemma allows us to establish coverage and fidelity by studying the tool together with the services and outside interactions that remain possible when the agent cannot act directly. These conditions then apply to the calls of any agent that uses the system through the Airlock. Combined with the earlier results, they give the guarantees of PC2 for every agent, which \cref{sec:main} states together (\cref{cor:every-agent}).

Under these conditions, changing the agent does not require a new argument about the effects of each tool. The argument still depends on the tool, its input, the host and the external services, whose properties must continue to hold. In \cref{sec:composition} we will use this separation when studying how guarantees for individual capability modules can support the tools of different contracts.

Isolation is sufficient for this conclusion, but safety does not require prohibiting every interaction outside the Airlock. An agent may retain direct access to a service's API or permission to modify a shared record, provided all effects possible through that access are covered by the models and preserve the required properties, without relying on the agent to choose its actions responsibly. Conversely, removing the agent's direct access does not resolve unsafe behaviour by a service or another party: protecting a resource against them is the role of custody, which applies complete mediation to the resource rather than to the agent (\cref{prop:custody}). The construction is particularly applicable when the system already limits how resources can change, and isolation prevents the agent from bypassing those limits.

\subsection{Execution and duration of the guarantees}\label{sec:execution-duration}

The correspondence arguments above explain how records can remain meaningful as routines and services act on the world. To combine these arguments with internal coherence, we must also establish that the deployed checker, control routine and execution loop behave as required in \cref{sec:coherence}. This brings us back to \cref{ass:sound-checking,ass:transition-coverage,ass:initial-control,ass:execution-integrity}. In particular, protecting the proof gate includes protecting the checker and contract, the recorded views, and the association between a proof and the tool and input that actually run. For example, replacing the input after its proof is checked breaks the connection required by \cref{ass:execution-integrity}, even if the checker itself is sound. The deployed routines must also satisfy the models used to form obligations: \cref{ass:transition-coverage} covers every state encountered during execution, including outside changes, and \cref{ass:initial-control} requires control to preserve compliance from every compliant active state. Verification must therefore concern the routines that run, under the service and host conditions used to establish these properties. The loop and checker are documented in the \PCRepo{} (\cref{sec:implementation}).

\textbf{Pauses and timing.}
A pause during a routine is part of that routine's execution interval. Any changes while it is paused must be covered by its model under \cref{ass:transition-coverage} and preserve correspondence under \cref{ass:tool-fidelity}. If unrestricted pauses make these properties false, the deployment needs an appropriate bound or protection that remains effective during the pause, such as a lease that expires unless the Airlock renews it \citep{gray1989leases}. For bounded gaps between routines, see \cref{app:intervals}. A control window bounds the number of control steps rather than elapsed time.

\textbf{Duration and completion.}
\Cref{thm:world-final} guarantees safety up to closing, and throughout a non-closing execution. It does not imply that a pending request is eventually executed or that settlement succeeds. These are liveness properties, while the guarantees are safety properties \citep{alpern1985defining}, the class that a mechanism checking each step can enforce \citep{schneider2000enforceable}. A deadline is the exception: written as a clause of the contract (\cref{ex:contract}), it is a safety property, and \cref{cor:every-agent}(4) obtains it from anytime compliance, independently of whether the agent makes progress. Likewise, the absence of a further approval before an authorized settlement call (\cref{sec:completion}) requires the deployment to hold the authority needed to issue it. That authority does not guarantee that the service completes the transfer. Responsibility for unmet conditions and possible remedies are separate matters, discussed in \cref{sec:residual}.

Some protections outlast the contract. In the payment ledger of \cref{app:examples}, the reservations recorded at closing still bound the charges that arrive afterwards, without the Airlock continuing to run, as long as no new instruction can be issued and the provider keeps respecting its limits.

\subsection{The guarantees of PC2}\label{sec:main}
We now state the guarantees of PC2 together, in the world and for every agent.

\begin{theorem}[The guarantees of PC2]\label{cor:every-agent}
Assume \cref{ass:sound-checking,ass:initial-control,ass:execution-integrity,ass:isolation,ass:contract-adequacy}, and \cref{ass:transition-coverage,ass:tool-fidelity} for the reference executions. Then every execution of \cref{alg:pc2} satisfies the following, for every agent and every sequence of requests and proofs.
\begin{enumerate}[label=(\arabic*),leftmargin=2.2em,itemsep=0pt,topsep=2pt]
\item \emph{Anytime safety.} $\Safe$ holds in the world at every instant from the start of the contract until it closes, or at every instant after the start if it never closes.
\end{enumerate}
Moreover, if the contract is in standard form (\cref{ex:contract}), it ends in at most one of the three ways of \cref{thm:completion}, and:
\begin{enumerate}[label=(\arabic*),leftmargin=2.2em,itemsep=0pt,topsep=2pt,resume]
\item \emph{Settlement.} If it is settled, the goal held in the world when the settlement started.
\item \emph{Termination.} If it is terminated, a control step found a termination condition.
\item \emph{Deadline.} If it has a deadline $d$ and starts, either settlement or termination is initiated by $d$.
\end{enumerate}
\end{theorem}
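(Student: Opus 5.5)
The plan is to assemble the result from the earlier theorems, using \cref{lem:isolation} as the bridge from the reference executions to an arbitrary agent. First fix an arbitrary agent $\alpha$. By \cref{ass:isolation} and \cref{lem:isolation}, $\effect^\alpha_\tool \subseteq \effect^\bullet_\tool$ and $\widehat{\effect}^\alpha_\tool \subseteq \widehat{\effect}^\bullet_\tool$ for every $\tool\in\tools\cup\{\toolctrl\}$. Coverage and fidelity for the reference executions are antecedent-monotone in the effect relations, so \cref{ass:transition-coverage,ass:tool-fidelity} hold for the effect relations with $\alpha$ present. All assumptions of \cref{thm:commitment-final,thm:completion,thm:world-final,cor:world-completion} are therefore met for the executions involving $\alpha$. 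Since nothing in those results depends on how requests and proofs are produced, the conclusions hold for every sequence of requests.

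Item (1) is then \cref{thm:world-final} verbatim. If the contract never starts, the claim is vacuous, because the window ``from the start'' is empty. The at-most-one-way statement and item (3) are \cref{thm:completion}, case (3). For item (2), \cref{cor:world-completion} gives $\goal_{\worlds}(w(\tau))$ at the instant the settlement routine is invoked. By \cref{thm:completion}, case (2), a settled contract is one in which that routine ran, and that is the required conclusion.

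The real work is item (4), the deadline. For a standard contract with a deadline, the clause is $\clause(x,t,\ell)=(B(x)\wedge t\le d)\vee \ell\neq\mathsf{ongoing}$. By \cref{thm:commitment-final}, $\clause$ holds at every instant from the start until closing. Suppose, for contradiction, that at the instant when $t=d$ neither settlement nor termination has been initiated. The label changes only at the end of the $\toolctrl$ and $\settle$ routines, and only $\settle$ initiates settlement, only $\toolctrl$ termination. So throughout $[\text{start},d]$ the label is $\mathsf{ongoing}$, and $\clause$ forces $t\le d$.

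The step I expect to be the main obstacle is excluding an execution that never reaches clock value $d$ yet never closes, or that is stuck in a non-returning routine. I would first argue that if the contract has not closed, then some instant exists with $t>d$. That gives $\ell\neq\mathsf{ongoing}$ at that instant while the contract is open. Since the label leaves $\mathsf{ongoing}$ only as the final act of $\settle$ or of the termination procedure, one of these was initiated at an instant before $t$ exceeded $d$. Pinning the initiation, rather than the completion, to time $\le d$ needs a little care. The termination and settlement routines must keep $\clause$ while the label is still $\mathsf{ongoing}$, so their start instants satisfy $t\le d$. This uses that $t$ records real time on the agreed timescale and that time advances. If time could stall, I would instead fall back on the liveness caveat of \cref{sec:execution-duration} and state (4) as ``if the clock passes $d$''.
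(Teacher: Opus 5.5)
Your proposal is correct and follows the paper's proof essentially step for step: isolation and \cref{lem:isolation} transfer coverage and fidelity from the reference executions to any agent, after which items (1)--(3) follow from \cref{thm:commitment-final,thm:world-final,thm:completion,cor:world-completion}. Your deadline argument is also the paper's. The clause forces $t\le d$ while the label is $\mathsf{ongoing}$, and since $t$ tracks current time the contract cannot stay ongoing past $d$. The closing routine starts while the label is still ongoing, hence by $d$. Your fallback for a stalling clock is therefore unnecessary, because the deadline form of the standard contract already stipulates that $t$ records current time.
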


\begin{proof}
We follow the guarantees from the agent to the world.

\emph{Every agent.} By isolation (\cref{ass:isolation}) and \cref{lem:isolation}, every effect of a routine, in an execution with any agent, is also an effect of a reference execution. Coverage and fidelity established for the reference executions therefore give \cref{ass:transition-coverage,ass:tool-fidelity} for every agent, and requests and proofs may be arbitrary.

\emph{The clauses.} The checker accepts only true obligations (\cref{ass:sound-checking}), the models cover what the routines do (\cref{ass:transition-coverage}), the control step starts the contract compliant and keeps it compliant (\cref{ass:initial-control}), and a call runs only for an accepted proof, within its window (\cref{ass:execution-integrity}). These are the hypotheses of \cref{thm:commitment-final}, so $\clause$ holds at every instant from the start.

\emph{The world.} When the control step starts the contract, the state describes the world, and every later operation preserves this description (\cref{ass:tool-fidelity}), so the state describes the world at every instant from the start (\cref{thm:world-final}). A world described by a compliant state is safe (\cref{ass:contract-adequacy}, first part). This gives (1).

\emph{How the contract ends.} In standard form, only the routines of $\toolctrl$ and $\settle$ change the label, and the loop stops once the contract is inactive (\cref{ass:execution-integrity}), so the contract ends in at most one of the three ways (\cref{thm:completion}). If it is settled, the accepted proof of the settlement call gives the goal on the state when settlement starts (\cref{thm:completion}); the state then describes the world, and a world described by a state with the goal satisfies the goal in the world (\cref{ass:contract-adequacy}, second part). This gives (2), as in \cref{cor:world-completion}. If it is terminated, the control step terminated it only from a decision reading in $\mathit{End}$. This gives (3).

\emph{The deadline.} If the contract has a deadline $d$, every ongoing state has $t\leq d$ by the clauses. Since $t$ records current time on the agreed timescale, the contract cannot remain ongoing after $d$. The label changes only at the end of the settlement or termination routine, so the routine that closed the contract must have started by $d$. This gives (4).
\end{proof}

\PCLayer{Practice}
\section{Composing guarantees}\label{sec:assurance}

\Cref{sec:guarantees,sec:risks} have described the guarantees through abstract objects: states, tool models, effects and correspondences with the world. In an application, these objects describe concrete routines, records and services, such as \cref{ex:worlds} or the payment ledger and booking service described in \cref{app:examples}. Constructing these descriptions and their proofs anew for every contract would require substantial repeated work, with repeated opportunities to omit an effect or a condition on the service. Much of this work concerns the available operations rather than the particular agreement. The discipline of recording a payment instruction before issuing it, for example, can support contracts with different budgets and goals.

We propose to organize this work in reusable \emph{capability modules}. A module can be developed as a software component whose routines come with state descriptions, tool models, a correspondence with the world and guarantees under stated conditions. A contract can then be built from a combination of these modules, reusing their routines and arguments while specifying the constraints of the particular agreement. Reuse requires preserving the conditions under which the modules were verified, and their combination may require additional arguments. The purpose of this section is to show how such a process can work through a few constructions and examples, rather than to give an exhaustive account of capability composition.

In \cref{sec:composition} we define capability modules and explain how to combine them into a state space and a collection of tools on which to build a contract. We give conditions under which their coverage and fidelity guarantees establish \cref{ass:transition-coverage} and the first line of \cref{ass:tool-fidelity} for the combined tools. The contract still requires its own arguments for the start, the control step and adequacy, alongside the conditions on the checker and execution mechanism. In \cref{sec:responsibilities} we bring these arguments together to identify what must be established about the routines, host and services used by a particular contract, and who can provide evidence for it. This produces a tree in which an implication from more specific conditions to a claim is a \emph{refinement}.

\subsection{Capability modules}\label{sec:composition}\label{sec:compositional-fidelity}

A contract may use several services, each described by different state fields and a different correspondence to the world. For instance, a contract that makes payments and obtains booking confirmations can combine a payment ledger with a record of confirmations. While a booking request runs, earlier payments may still produce charges. Each module must therefore account both for its own operations and for what can happen while none of them runs. We represent the latter by an \emph{idle} tool.

\begin{definition}[Capability module]\label{def:module}
A \emph{capability module} is a tuple $M=(\worlds_M,\ss_M,\corresponds_M,\tools_M,\mathsf{idle}_M)$, where $\worlds_M$ and $\ss_M$ are sets, $\corresponds_M$ is a relation on $\worlds_M\times\ss_M$, and $\tools_M$ is a set of tools over $\ss_M$ whose routines operate on the system described by $\worlds_M$. The tool $\mathsf{idle}_M\in\tools_M$ has no caller-supplied input and waits without intervening. Its effects describe the changes possible during intervals in which no other routine of $M$ runs.
\end{definition}

Several modules $M_1,\ldots,M_n$ can supply the state space of a contract, for example by taking $\ss=\prod_{k=1}^n\ss_{M_k}$ and building its tools from their operations, as we describe below.

Here $\worlds_M$ describes the resource and the services relevant to its operations, while $\ss_M$ contains the quantities through which the contract reasons about them. These quantities may be protected records, such as the ledger, or current observations, such as a price the tools would report if asked. An observation of the current price may change while the module waits. A stored quotation remains unchanged, but its correspondence must then describe the quotation and its validity, rather than identify it with the current price. This is the distinction between a possible observation and a recorded reading made in \cref{sec:model}.

\textbf{$\mathsf{idle}_M$ and $\toolctrl$.} The tool $\mathsf{idle}_M$ describes what may happen while the module waits without intervening. This can be specified independently of a contract, whereas the contract's reserved tool $\toolctrl$ must preserve its particular clause $\clause$ and respond to its termination conditions. A contract might use a local control routine, say $\exec_{\toolctrl_M}$, when the necessary intervention depends only on the module's state. In other cases the decision requires information from several modules: whether to release a payment can depend on a booking and a deadline. Such a decision may require a routine operating on the combined state. We therefore include $\mathsf{idle}_M$ in every module without requiring a local controller.

\textbf{Local guarantees.} Coverage and fidelity can first be established for each module separately. Write $\effect_\tool$ and $\widehat{\effect}_\tool$ for its tools' state and joint world--state effects, as in \cref{sec:coherence,sec:world}, with all states and worlds now belonging to that module. For each $\tool\in\tools_M$, define
\begin{align}
\mathsf{Cov}_M(\tool)\ &:\iff\ \forall i,s,s'\quad \enabled_\tool(i,s)\wedge\effect_\tool(i,s,s')\implies\tran_\tool(i,s,s'),\label{eq:local-cov}\\
\mathsf{Fid}_M(\tool)\ &:\iff\ \forall i,w,s,w',s'\quad \corresponds_M(w,s)\wedge\enabled_\tool(i,s)\wedge\widehat{\effect}_\tool(i,w,s,w',s')\implies\corresponds_M(w',s'),\label{eq:local-fid}
\end{align}
with the quantifiers ranging over the tool's inputs and the module's spaces. These are the routine conditions of \cref{ass:transition-coverage,ass:tool-fidelity} for one module. For idle, fidelity applies to waiting intervals from states where its model is enabled. Under \cref{ass:isolation}, these guarantees can be established using the reference effects of \cref{sec:establishing-assumptions}: \cref{lem:isolation} then makes them applicable to every agent.

For example, whoever implements a payment module can prove that its routine records an instruction before sending it, subject to stated conditions on who can change the ledger and use the payment credentials. Another contract can reuse that proof when it uses the same routine under the same conditions. In \cref{sec:three-modules} we give three constructions that make these conditions explicit.

\begin{figure}[t]
\centering
\resizebox{\linewidth}{!}{%
\begin{tikzpicture}[x=1cm,y=1cm,text=asInk,font=\pcVisualFont,
  line width=.6pt,line cap=round,line join=round,
  >={Latex[length=1.6mm,width=1.2mm]},
  mod/.style={draw=asInk!35,fill=white,rounded corners=6pt},
  world/.style={circle,draw=asInk!60,fill=white,minimum size=.68cm,inner sep=0pt},
  state/.style={pc state,draw=asInk!60,fill=white,minimum size=.60cm,inner sep=0pt},
  tool/.style={draw=pcAgent!70,fill=pcAgentFill,rounded corners=2pt,inner sep=4pt},
  corr/.style={draw=asMuted,dash pattern=on 1.2pt off 2pt,line width=.75pt},
  act/.style={->,draw=pcAgent},
  extcorr/.style={draw=asMitigated!85,dash pattern=on 1.2pt off 2pt,rounded corners=4pt}]
\node[anchor=west,font=\pcVisualFont\bfseries] at (-.35,4.95) {(a) Product};
\node[anchor=west,font=\pcVisualFont\bfseries] at (8.15,4.95) {(b) Extension};
\draw[pcProduct] (-.35,-2.13) rectangle (7.85,4.45);
\node at (3.75,4.05) {$M=M_1\otimes M_2$};
\draw[mod] (0,0) rectangle (3.50,3.65);
\draw[mod] (4.00,0) rectangle (7.50,3.65);
\foreach \x/\k in {0/1,4/2}{
  \node[anchor=south east] at (\x+3.32,.10) {$M_{\k}$};
  \node[world] (w\k) at (\x+.82,2.75) {$w_{\k}$};
  \node[state] (s\k) at (\x+.82,.72) {$s_{\k}$};
  \draw[corr] (w\k) -- node[left,inner sep=3pt]{$\rho_{M_{\k}}$} (s\k);
  \node[tool] (t\k) at (\x+2.55,1.68) {$\tool_{\k}$};
  \draw[act] (t\k) -- (w\k);
  \draw[act] (t\k) -- (s\k);
  \node[draw=asInk!25,fill=asInk!4,rounded corners=2pt,inner sep=3pt,text=asMuted]
    at (\x+2.55,3.17) {$\mathsf{idle}_{M_{\k}}$};
}
\node[tool] at (1.75,-.65) {$\tool_1\otimes\mathsf{idle}_{M_2}$};
\node[tool] at (5.75,-.65) {$\mathsf{idle}_{M_1}\otimes\tool_2$};
\node[draw=asInk!25,fill=asInk!4,rounded corners=2pt,inner sep=4pt,text=asMuted]
  at (3.75,-1.53)
  {$\mathsf{idle}_{M}=\mathsf{idle}_{M_1}\otimes\mathsf{idle}_{M_2}$};
\draw[pcExtension] (8.15,-2.13) rectangle (16.25,4.45);
\node at (12.20,4.05) {$M=\operatorname{ext}(M_1\otimes M_2)$};
\draw[pcProduct] (8.45,-.25) rectangle (15.75,2.75);
\node at (12.10,2.46) {$M_1\otimes M_2$};
\draw[mod] (8.68,0) rectangle (11.80,2.17);
\draw[mod] (12.40,0) rectangle (15.52,2.17);
\node[anchor=south east] at (11.65,.10) {$M_1$};
\node[anchor=south east] at (15.37,.10) {$M_2$};
\node[world] (ew1) at (9.72,1.63) {$w_1$};
\node[world] (ew2) at (13.65,1.63) {$w_2$};
\node[state] (es1) at (9.72,.40) {$s_1$};
\node[state] (es2) at (13.65,.40) {$s_2$};
\draw[corr] (ew1) -- node[left,inner sep=3pt]{$\rho_{M_1}$} (es1);
\draw[corr] (ew2) -- node[right,inner sep=3pt]{$\rho_{M_2}$} (es2);
\node[tool] (newtool) at (12.10,3.25) {$\tool'$};
\node[state,draw=asMitigated!80,fill=asMitigated!8] (z) at (12.10,-1.28) {$z$};
\draw[act,rounded corners=4pt] (newtool.west) -| (ew1.north);
\draw[act,rounded corners=4pt] (newtool.east) -| (ew2.north);
\draw[act,rounded corners=4pt] (newtool.east) -- (15.98,3.25) |- (z.east);
\coordinate (join) at (12.10,.45);
\draw[extcorr] (ew1.east) -- (10.62,1.63) -- (join);
\draw[extcorr] (ew2.west) -- (12.75,1.63) -- (join);
\draw[extcorr] (join) -- node[right,inner sep=3pt,text=asMitigated]{$\rho'$} (z.north);
\end{tikzpicture}%
}

\caption{Product and extension of capability modules. Circles denote world coordinates, rounded squares denote state coordinates, dotted lines denote correspondence and arrows denote routine actions. (a) The product contains both modules, their lifted tools and the joint idle tool. (b) The outer frame encloses a chosen extension $\operatorname{ext}(M_1\otimes M_2)$ of the product in the inner grey frame. It adds the routine $\tool'$, state $z$ and correspondence $\rho'$, with the latter two highlighted in orange. The product's existing tools are omitted in (b). The extension notation is defined in \cref{def:module-extension} and used again in \cref{fig:derived-tree}.}
\label{fig:modules}
\end{figure}

\subsection{Composing capability modules}\label{sec:combining-modules}

To build an environment suited to an application, we need to combine modules that provide its different operations and descriptions of the world. For example, a travel application can draw on payment and booking modules, then add an operation relating a purchase to its payment. We consider two constructions, illustrated in \cref{fig:modules}. A \emph{product} puts the modules' world and state descriptions together and makes their existing tools available on the combined state. An \emph{extension} adds fields, correspondence conditions or routines involving those components. We first describe how tools on separate modules can be combined, since this is used in both constructions.

\textbf{Paired tools.} Let $U\in\tools_{M_1}$ and $V\in\tools_{M_2}$, where the modules have disjoint world and state coordinates. We write $U\otimes V$ for a tool over $\ss_{M_1}\times\ss_{M_2}$ with input space $I_U\times I_V$. When both components are non-idle, its routine executes them concurrently in a single invocation and returns when both have returned. If one component is idle, it waits until the other returns. Pairing two idle tools gives an operation that waits on both components. The model is
\begin{equation}\label{eq:product-model}
\tran_{U\otimes V}\bigl((i_1,i_2),(s_1,s_2),(s'_1,s'_2)\bigr)
\iff \tran_U(i_1,s_1,s'_1)\wedge\tran_V(i_2,s_2,s'_2).
\end{equation}

\textbf{Compatibility of effects.} Reusing the separate tools' guarantees requires checking that their combination does not introduce effects outside those for which the guarantees were established. Write $i=(i_1,i_2)$, $w=(w_1,w_2)$ and $s=(s_1,s_2)$, and similarly for primed quantities. The required condition, for all such inputs and coordinates, is
\begin{equation}\label{eq:no-coupling}
\widehat{\effect}_{U\otimes V}(i,w,s,w',s')\implies
\widehat{\effect}_U(i_1,w_1,s_1,w'_1,s'_1)\wedge
\widehat{\effect}_V(i_2,w_2,s_2,w'_2,s'_2).
\end{equation}
The condition concerns the whole paired invocation. If one routine finishes before the other, its effects and guarantees must also cover the remaining wait. When $V=\mathsf{idle}_{M_2}$, running $U$ must leave all changes on the second module within its idle effects, and conversely for an idle first component. Disjoint coordinates therefore need not mean that the resources are unaffected by each other. For example, a merchant may charge an earlier payment instruction while a booking call runs, provided the payment module's idle effects account for it. If an interaction introduces changes outside a component's effects, its description and guarantees must be revised, or the interacting parts must be treated together.

This is the question addressed by rely--guarantee reasoning: the conditions under which one component was verified must be respected by the other components and their environment \citep{jones1983rely,abadi1995conjoining}. Here \cref{eq:no-coupling} states that requirement directly for the effects used in our coverage and fidelity arguments.

\textbf{Product.} A product makes each module's tools available while accounting for changes on the other modules as they wait. For payment and booking modules, a payment call becomes $\mathsf{pay}\otimes\mathsf{idle}_{\mathsf{book}}$: its model includes the payment routine's effects and the changes that the booking can undergo during the payment.

\Needspace{9\baselineskip}
\begin{definition}[Product modules]\label{def:module-ops}\label{def:module-product}
For modules $M_1,M_2$ with disjoint world and state coordinates, their \emph{product} $M=M_1\otimes M_2$ has $\worlds_M=\worlds_{M_1}\times\worlds_{M_2}$, $\ss_M=\ss_{M_1}\times\ss_{M_2}$ and
\[
\corresponds_M((w_1,w_2),(s_1,s_2))\iff
\corresponds_{M_1}(w_1,s_1)\wedge\corresponds_{M_2}(w_2,s_2).
\]
Its tools are
\begin{equation}\label{eq:product-tools}
\tools_M=\{\tool_1\otimes\mathsf{idle}_{M_2}:\tool_1\in\tools_{M_1}\}
\ \cup\ \{\mathsf{idle}_{M_1}\otimes\tool_2:\tool_2\in\tools_{M_2}\}.
\end{equation}
The idle tool is $\mathsf{idle}_{M_1\otimes M_2}=\mathsf{idle}_{M_1}\otimes\mathsf{idle}_{M_2}$.
\end{definition}

The product is associative and commutative up to the canonical regrouping and permutation of coordinates: $M_1\otimes(M_2\otimes M_3)\cong(M_1\otimes M_2)\otimes M_3$ and $M_1\otimes M_2\cong M_2\otimes M_1$. These identifications preserve the correspondence and tool models and retain the same component routines. We can therefore write $M_1\otimes\cdots\otimes M_n$ without parentheses for any finite nonempty family of modules with pairwise disjoint coordinates.

\textbf{Extension.} Applications may also require records that relate several modules or operations that coordinate their use, for example by chaining calls or executing them concurrently. An extension can add these operations together with new state fields and a correspondence describing what those fields mean. Making the additions explicit identifies what must be established beyond the guarantees already supplied by the modules.

\begin{definition}[Extension modules]\label{def:module-extension}
Let $M$ be a module, $Z$ a nonempty set, $\corresponds'$ a relation on $\worlds_M\times\ss_M\times Z$, and $\tools'$ a set of tools over $\ss_M\times Z$. The \emph{extension} $N=\operatorname{ext}(M;Z,\corresponds',\tools')$ has $\worlds_N=\worlds_M$, $\ss_N=\ss_M\times Z$ and
\[
\corresponds_N(w,(s,z))\iff\corresponds_M(w,s)\wedge\corresponds'(w,s,z).
\]
Its tools are $\{\widetilde{\tool}:\tool\in\tools_M\}\cup\tools'$. Each retained tool $\widetilde{\tool}$ has the same inputs and routine as $\tool$, leaves $z$ unchanged and has model
\begin{equation}\label{eq:extension-model}
\tran_{\widetilde{\tool}}(i,(s,z),(s',z'))
\iff\tran_\tool(i,s,s')\wedge z'=z.
\end{equation}
The idle tool is $\widetilde{\mathsf{idle}_M}$, and the fields in $Z$ change only through the added routines.
\end{definition}

We write $\operatorname{ext}(M)$ when the added state, correspondence and tools are understood. This denotes a chosen extension of $M$, whose identity depends on those additions. In particular, $\operatorname{ext}(M_1\otimes M_2)$ in \cref{fig:modules} extends the product of the two modules.

For example, a travel application can extend the payment and booking modules with a register $Z$ of completed purchases. Each entry identifies an order, its booking confirmation and its payment instruction. The added correspondence $\corresponds'$ states that the identified confirmation and instruction were issued for that order, relating the two modules' histories to the new record. An added purchase routine obtains a confirmation, issues the associated payment and records their association. This uses all three parts of an extension: additional state, a relation connecting it to the world, and an operation involving several modules. This instantiates the extension in \cref{fig:modules}: the added routine chains the booking and payment calls and stores their association in $z$. The new record can remain unchanged when the original tools run, but its correspondence must remain true. We return to that requirement in \cref{sec:global-local}.

An extension can also add an operation alone, taking $Z$ to be a singleton and $\corresponds'$ to be true. One example is the paired tool $\tool_1\otimes\tool_2$ with both components non-idle, which the product alone does not provide. It runs both routines concurrently as one Airlock invocation. Its conjunction model and the compatibility requirement are those already given in \cref{eq:product-model,eq:no-coupling}.

Another example is a tool whose implementation runs $\tool_1$ while a controller maintains the second component. A requester submits this combined operation to the proof gate, and the controller runs as part of its implementation. This does not grant access to the contract's reserved $\toolctrl$, which the Airlock alone invokes between requested calls. Writing the combined tool as $\tool_1\otimes\toolctrl_2$ is appropriate only when such a controller can be specified for the second component and its guarantees cover the whole invocation. When the necessary decisions depend on several components, the added routine must instead use the full state.

\textbf{Building a contract.} A product, possibly extended as above, gives a module $N$ on which we can build the contract of \cref{def:contract}. We take $\ss=\ss_N$, select from $\tools_N$ the tools exposed through $\tools$, and supply the clause $\clause$, view $\obs$, reserved control tool $\toolctrl$ and termination test $\terminal$. The tool $\toolctrl$ and any other required tools absent from $N$ can be added by extension. For modules $M_1,\ldots,M_n$ and additional fields $Z$, the resulting state space is $\ss=(\prod_{k=1}^n\ss_{M_k})\times Z$. The clause need not split into separate conditions on the components: it may, for example, require each payment instruction to have an associated booking confirmation. The obligations of \cref{sec:model} use this combined state, clause and tool models without modification.

The spaces $\worlds_N$ and correspondence $\corresponds_N$ provide the interpretation used in \cref{sec:world}. The composition result in \cref{sec:global-local} explains when the modules' guarantees establish coverage and fidelity for the contract's tools. The requirements on $\toolctrl$ in \cref{ass:initial-control}, as initializer and as controller, must still be proved for its chosen clause, as must the adequacy implication $\corresponds_N(w,s)\wedge\clause(s)\implies\Safe(w)$. The checker and execution mechanism remain subject to \cref{ass:sound-checking,ass:execution-integrity}.

\subsection{Three capability constructions}\label{sec:three-modules}

\begin{table}[t]
\caption{Three constructions of capability modules, each illustrated by a different example. The world, state, correspondence and tools describe the module, while the supporting conditions establish the guarantees below. A module can combine these constructions.}
\label{tab:three-modules}
\centering\pcTableStyle
\begin{tabularx}{\linewidth}{@{}>{\raggedright\arraybackslash}p{2.60cm}*{3}{>{\raggedright\arraybackslash}X}@{}}
\toprule
\textbf{Module element} & \textbf{Custody}\newline (Ex.: payment instructions) & \textbf{Formal interface}\newline (Ex.: booking responses) & \textbf{Domain invariant}\newline (Ex.: mailbox retention) \\
\midrule
World $\worlds_M$ & Payment service and history $J(w)$ of submitted instructions. & Booking service and history $H(w)$ of issued responses. & Mail service, including delivery and acknowledgement history. \\
\addlinespace[4pt]
State $\ss_M$ & Register of payment instructions. & Log of response events. & Current mailbox contents $\pi(w)$. \\
\addlinespace[4pt]
Correspondence $\rho_M$ & $J(w)\subseteq s$\newline Every submitted instruction is registered. & $\forall e\in s,$\newline $e\in H(w)\wedge\mathsf{Spec}(e)$\newline Recorded responses were issued and are permitted. & $s=\pi(w)\wedge\mathsf K(w)$\newline The reading is current and pending mail remains available. \\
\addlinespace[4pt]
Tools $\tools_M$ & $\mathsf{issue}(j)$ records an instruction, then submits it. & $\mathsf{query}(q)$ requests, authenticates and records a response. & Read or process messages, with models covering changes to mailbox contents. \\
\addlinespace[4pt]
Idle $\mathsf{idle}_M$ & Register unchanged. Earlier instructions may still produce charges. & Log unchanged. The service may continue to evolve. & Mailbox contents may change, while pending messages remain available. \\
\midrule
Supporting conditions & Protect register updates \cref{ass:custody-register}. Register before submitting and restrict issuance \cref{ass:custody-submission}. & Permitted responses \cref{ass:interface-responses}, authenticated and recorded faithfully \cref{ass:interface-recording}. & Readings agree with the world \cref{ass:measurement-agreement}. Routines and outside changes preserve $\mathsf K$ \cref{ass:invariant-preservation}. \\
\addlinespace[4pt]
Local result & Coverage and fidelity\newline (\cref{prop:custody}) & Coverage and fidelity\newline (\cref{prop:interface}) & Fidelity (\cref{prop:shared-model}). Coverage is checked separately. \\
\bottomrule
\end{tabularx}
\end{table}

We now give concrete conditions under which a module satisfies its local guarantees. The constructions in \cref{tab:three-modules} distinguish a protected record of instructions, a record of what a service reported, and a description of quantities that may change independently. They can be combined when a resource requires more than one of these arguments.

\textbf{Custody.} Consider a payment account whose credentials are kept by the Airlock. Its payment routine can send the provider an instruction specifying a recipient and a spending limit, with an identifier distinguishing it from other instructions. Before sending the message, the routine adds the instruction to a protected register. This order matters because the provider may charge the account as soon as it receives the message, before any acknowledgement returns. We want the register to account for every instruction that the provider can act on, including those still in progress. This is the part of the payment-ledger argument in \cref{app:examples} that concerns custody of the authority to issue payments. The bound on the charges themselves additionally depends on the provider respecting each instruction's limit.

Let $\mathit{Instr}$ be the set of these payment instructions, $\ss_M$ the set of its finite subsets, and $I_{\mathsf{issue}}=\mathit{Instr}$. A state $s$ is the register. Write $J(w)$ for the history of submitted instructions, including pending instructions from the moment of submission. Define
\begin{equation}\label{eq:custody-module}
\begin{aligned}
\corresponds_M(w,s)&\iff \forall j\in J(w).\ j\in s,\\
\tran_{\mathsf{issue}}(j,s,s')&\iff s'=s\ \lor\ s'=s\cup\{j\},
\qquad \tran_{\mathsf{idle}_M}(s,s')\iff s'=s.
\end{aligned}
\end{equation}
The module has these two tools. The model of $\mathsf{issue}$ includes the register before and after the instruction is added, while idle leaves it unchanged. Entries are retained even when a request times out, since an unacknowledged instruction may still produce a charge.

\begin{proposition}[Custody]\label{prop:custody}
For the module defined by \cref{eq:custody-module}, assume that, in every execution of its tools, including idle,
\begin{enumerate}[label=\PCAssumptionBadge{H\arabic*},ref=H\arabic*,leftmargin=2.6em,itemsep=0pt,topsep=2pt]
\item\label[assumption]{ass:custody-register} only $\mathsf{issue}(j)$ changes the register, by adding $j$ at most once during the invocation. Entries are never removed or modified;
\item\label[assumption]{ass:custody-submission} an instruction is submitted only by $\mathsf{issue}$ after it is registered. The history $J$ retains all submissions, and no other action adds an instruction or changes its terms.
\end{enumerate}
Then $\mathsf{Cov}_M(\tool)$ and $\mathsf{Fid}_M(\tool)$ hold for $\tool\in\{\mathsf{issue},\mathsf{idle}_M\}$.
\end{proposition}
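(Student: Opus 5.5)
The proof splits into the four claims $\mathsf{Cov}_M(\mathsf{issue})$, $\mathsf{Cov}_M(\mathsf{idle}_M)$, $\mathsf{Fid}_M(\mathsf{issue})$ and $\mathsf{Fid}_M(\mathsf{idle}_M)$. Each is argued directly from the definitions in \cref{eq:custody-module}, following a single invocation that starts in $(w,s)$ and reaches $(w',s')$ at some instant $\tau$. Enabledness is immediate in both cases, since $s'=s$ is always admitted. The hypothesis $\enabled_\tool(i,s)$ therefore carries no information, and what remains is to show that every reachable pair lies inside the model and preserves correspondence.

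\emph{Coverage.} For $\mathsf{idle}_M$, \cref{ass:custody-register} says that only $\mathsf{issue}$ changes the register, and no $\mathsf{issue}$ routine runs during an idle interval, so $s'=s$. For $\mathsf{issue}(j)$, \cref{ass:custody-register} allows only one change during the invocation: $j$ is added, at most once, and nothing is removed or modified. At any instant the register is therefore either $s$ (before the addition) or $s\cup\{j\}$ (after it), which is exactly $\tran_{\mathsf{issue}}(j,s,s')$. This uses that the state is just the register, so outside changes to the world do not alter it.

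\emph{Fidelity.} Assume $J(w)\subseteq s$ and take $j'\in J(w')$. If $j'\in J(w)$, then $j'\in s\subseteq s'$, because the register only grows (\cref{ass:custody-register}). Otherwise $j'$ was submitted during the invocation. By \cref{ass:custody-submission}, it was submitted by this $\mathsf{issue}$ routine, so $j'=j$, and only after registration. It follows that $j\in s'$ at instant $\tau$, since $J(w')$ records the submission only from that moment on and entries are never removed. For $\mathsf{idle}_M$, no submission can occur, because only $\mathsf{issue}$ submits and no other action adds instructions. Hence $J(w')=J(w)$ up to the retained history, and $J(w')\subseteq s=s'$.

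The main obstacle is the timing argument in the fidelity case for $\mathsf{issue}$. We must order, within one invocation, the moment $j$ enters the register before the moment it enters $J$, and confirm that $j'=j$ is the only new element. I would make this ordering explicit by reading \cref{ass:custody-submission} as an instant-wise statement: at every $\tau$, $j\in J(w(\tau))$ implies $j\in s(\tau)$ for the instruction being issued. I would also note that instructions issued by earlier invocations remain covered by monotonicity. The retention clause of \cref{ass:custody-submission} covers the timeout case, where an unacknowledged instruction stays in $J$.
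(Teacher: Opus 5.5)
Your proof is correct and follows the paper's argument. H1 confines register changes to those admitted by the model, which gives coverage for both tools, and fidelity follows because initial members of $J$ stay registered by monotonicity while each newly submitted instruction was registered beforehand (H2) and its entry is retained; the paper does not need your identification $j'=j$, which rests on the routine's description rather than on H1--H2 alone, since registration before submission already places any new member of $J(w')$ in $s'$.
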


\begin{proof}
\cref{ass:custody-register} limits register changes to those in \cref{eq:custody-module}, establishing coverage for both tools. Starting from correspondence, every initial member of $J$ remains registered by \cref{ass:custody-register}. Every new member is registered before submission by \cref{ass:custody-submission}, and its entry is retained. Thus $J(w')$ remains contained in $s'$ throughout either invocation, establishing fidelity.
\end{proof}

The access restriction in \cref{ass:custody-submission} applies complete mediation \citep{saltzer1975protection} to the authority to issue payment instructions. Keeping the credentials under the Airlock's control and requiring the provider to accept only authenticated instructions can support this restriction, as in capability-based protection \citep{dennis1966programming,miller2006robust}. The Airlock's implementation must protect the register and record each instruction before sending it. The payment provider must preserve its terms and reject submissions made without the required authority. These conditions do not assert that the account balance remains fixed while the module is idle. Charges from earlier instructions may continue, and their bounds require the additional service condition in \cref{eq:budget-correspondence}.

\textbf{Formal interfaces.} A service can specify which responses it may return to a request and how they relate to that request and to its previous interactions. Such a specification gives a module a basis for modelling calls to the service without describing its implementation. The module's routine records the responses it receives, and its correspondence connects those records to responses actually issued by the service and permitted by its specification.

For example, a booking request can identify a hotel, dates and a maximum price. The specification may allow either a rejection or a confirmation carrying those dates and a price within the requested limit. Concretely, an implementation could maintain a register of response records, stored in memory or in a database. Each record would contain the request, the returned status and, for a confirmation, the booking identifier, dates and price. The routine authenticates the response and adds its record to the register. The module's state represents the register's contents as a collection of such records, so the model can express which additions are permitted without committing to a particular storage implementation. Correspondence then relates each record to a response actually issued by the service. Whether the hotel honours the confirmation is a further property, supported by the reservation condition in \cref{ex:booking-confirmations}.

The same construction applies to any service with a specified request--response interface. Let $\mathit{Req}$ be the set of requests and $\mathit{Resp}$ the set of response events. An event contains the request, the returned data and any time or historical information used by the specification. Write $\mathsf{request}(e)$ for its request and $\mathsf{Spec}(e)$ for the statement that the response is permitted for that request. Take $I_{\mathsf{query}}=\mathit{Req}$ and let $\ss_M$ consist of finite sets of response events. With $H(w)$ denoting the history of events actually issued by the service, define
\begin{equation}\label{eq:interface-module}
\begin{aligned}
\corresponds_M(w,s)&\iff\forall e\in s.\ e\in H(w)\wedge\mathsf{Spec}(e),\\
\tran_{\mathsf{query}}(q,s,s')&\iff s'=s\ \lor\ \exists e.\ \mathsf{request}(e)=q\wedge\mathsf{Spec}(e)\wedge s'=s\cup\{e\},\\
\tran_{\mathsf{idle}_M}(s,s')&\iff s'=s.
\end{aligned}
\end{equation}

The model allows any response permitted by the specification, while correspondence requires each recorded response to have actually been issued. The following proposition separates the conditions needed on the service from those needed on the routine that records its responses.

\begin{proposition}[Formal interface]\label{prop:interface}
For the module defined by \cref{eq:interface-module}, assume that $H$ retains its events and that, in every execution of its tools, including idle,
\begin{enumerate}[label=\PCAssumptionBadge{F\arabic*},ref=F\arabic*,leftmargin=2.6em,itemsep=0pt,topsep=2pt]
\item\label[assumption]{ass:interface-responses} every response event issued by the service satisfies $\mathsf{Spec}$;
\item\label[assumption]{ass:interface-recording} only $\mathsf{query}(q)$ changes the log. It either leaves it unchanged or appends one event $e$ issued for $q$, after authenticating its origin and recording its contents correctly. Entries are never removed or modified.
\end{enumerate}
Then $\mathsf{Cov}_M(\tool)$ and $\mathsf{Fid}_M(\tool)$ hold for $\tool\in\{\mathsf{query},\mathsf{idle}_M\}$.
\end{proposition}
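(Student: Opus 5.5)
The argument follows \cref{prop:custody}: we check coverage and fidelity separately for the two tools, unfolding \cref{eq:local-cov,eq:local-fid} against \cref{eq:interface-module}.

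\emph{Coverage.} Fix a tool, an input $q$ (or none, for idle) and a starting log $s$ at which the model is enabled. Both models admit $s'=s$, so enabledness is automatic. Take any state $s'$ reached during the invocation, that is, any $s'$ with $\effect_\tool(q,s,s')$. By \cref{ass:interface-recording}, idle never changes the log, so $s'=s$, which is exactly $\tran_{\mathsf{idle}_M}(s,s')$. For $\mathsf{query}(q)$, the same assumption says the log is either unchanged or has exactly one event $e$ issued for $q$ appended, with entries never removed. So at every instant either $s'=s$, or $s'=s\cup\{e\}$ with $\mathsf{request}(e)=q$. By \cref{ass:interface-responses}, $e$ was issued by the service and therefore satisfies $\mathsf{Spec}(e)$. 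This is precisely the second disjunct of $\tran_{\mathsf{query}}$, which gives $\mathsf{Cov}_M$ for both tools.

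\emph{Fidelity.} Assume $\corresponds_M(w,s)$, so every $e\in s$ lies in $H(w)$ and satisfies $\mathsf{Spec}(e)$. Let $(w',s')$ be reached during the invocation. The old entries remain in $s'$ unmodified, by \cref{ass:interface-recording}. They remain in $H(w')$ because $H$ retains its events. $\mathsf{Spec}(e)$ is a property of the event itself, so it still holds. The only possible new entry is the appended $e$. Because the routine authenticated its origin and recorded its contents correctly, $e$ is an event actually issued by the service, so $e\in H(w')$. By \cref{ass:interface-responses}, $\mathsf{Spec}(e)$ holds. Hence every entry of $s'$ satisfies the correspondence, giving $\corresponds_M(w',s')$. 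For idle, $s'=s$ and only retention of $H$ is needed.

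\emph{Main obstacle.} The delicate point is timing: we must show that a recorded event already belongs to $H(w')$ at every intermediate instant, not only when the invocation ends. I would handle this by reading \cref{ass:interface-recording} as saying the append happens only after the response has been received and authenticated, so issuance precedes recording. Retention of $H$ then carries membership forward to every later instant.

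We also need $\mathsf{Spec}(e)$ to be stable over time. This holds because the definition of events bundles the request, the returned data and any time information the specification uses, so $\mathsf{Spec}(e)$ refers only to the event and not to the current world.
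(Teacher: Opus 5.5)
Your proof is correct and follows the paper's argument. Coverage comes from F2 (the log is unchanged or gains one event issued for the current request) together with F1 (that event satisfies $\mathsf{Spec}$). Fidelity comes from retention of $H$ for the old entries, and from F2 and F1 for the new entry, with idle handled by the log staying unchanged. Your remarks on timing, that issuance precedes recording so retention of $H$ carries membership to every later instant, and on $\mathsf{Spec}(e)$ depending only on the event, spell out points the paper's shorter proof leaves implicit.
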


\begin{proof}
By \cref{ass:interface-recording}, any change appends one response event for the current request, and \cref{ass:interface-responses} makes that event satisfy $\mathsf{Spec}$. This proves coverage. Initially valid entries remain in $H$, since history is retained. Each new entry is an issued event by \cref{ass:interface-recording} and satisfies $\mathsf{Spec}$ by \cref{ass:interface-responses}, proving fidelity. During idle the log is unchanged and its entries remain valid.
\end{proof}

If no response has arrived, or the call times out without recording one, the log remains unchanged, which the model permits. An error returned by the service can instead be a response allowed by $\mathsf{Spec}$. Condition \cref{ass:interface-responses} requires the actual responses to obey that specification, for example by respecting the requested dates and price limit. This is a property of the service, for which verification or testing can provide evidence \citep{tretmans2008model,bornholt2021s3,disselkoen2024cedar}. Condition \cref{ass:interface-recording} concerns the Airlock's routine: it must authenticate the response and record it against the correct request. Authentication establishes who sent the response, but cannot by itself establish that its contents satisfy the specification or that the hotel will honour the booking \citep{zhang2016towncrier,zhang2020deco}.

\textbf{Domain invariants.} Interpreting a service's records often relies on properties of the system in which the service operates. A booking confirmation can stand for a reservation still held only if the hotel maintains that reservation under the agreed conditions. Such properties can support many different contracts using the same service. We express them by a predicate $\mathsf{K}$ on worlds, and ask that the module's routines and the outside interactions that remain possible preserve $\mathsf{K}$ whenever it holds initially. This is the sense in which $\mathsf{K}$ is a \emph{domain invariant}. It describes a relationship that the system must maintain, which may involve facts absent from the module's state.

Consider a module whose state is a measurement $\pi:\worlds_M\to\ss_M$, with $\ss_M=\pi(\worlds_M)$. Here $s=\pi(w)$ is the current, possibly unobserved state, as in \cref{sec:model}. For example, a module that monitors a mailbox can take $\pi(w)$ to be its current contents. Write $\mathsf{Pending}(w,m)$ when message $m$ has been delivered to the mailbox but has not yet been acknowledged by the module's processing routine, and $\mathsf{Available}(w,m)$ when it can still be retrieved from the mailbox. A retention condition on the mail service is
\[
\mathsf{K}(w)\quad\iff\quad
\forall m.\ \mathsf{Pending}(w,m)\implies\mathsf{Available}(w,m).
\]
This condition allows new messages to arrive and acknowledged messages to be removed, while requiring pending messages to remain available. Delivery and acknowledgement belong to the world's history, so a correct reading of an empty mailbox alone does not establish that no messages are pending. That conclusion also uses $\mathsf{K}$. The module's guarantees therefore depend on the routines and mail service preserving this retention condition.

We include both measurement agreement and the domain invariant in the correspondence:
\begin{equation}\label{eq:shared-model-transition}
\corresponds_M(w,s)\iff s=\pi(w)\wedge \mathsf{K}(w).
\end{equation}
The tools retain their models $\tran_\tool$ and their execution effects $\effect_\tool$ and $\widehat{\effect}_\tool$. Coverage requires $\enabled_\tool(i,s)\wedge\effect_\tool(i,s,s')\implies\tran_\tool(i,s,s')$, as in \cref{eq:local-cov}, and must be established for the chosen models. The role of $\mathsf{K}$ is to justify the additional relationships asserted by correspondence. To establish fidelity, we check that the actual joint effects $\widehat{\effect}_\tool$ preserve $\mathsf{K}$ while the state continues to agree with $\pi$. The following proposition expresses these conditions and shows how $\mathsf{K}$ can then support an adequacy argument.

\begin{proposition}[Domain invariant]\label{prop:shared-model}
For a module with correspondence given by \cref{eq:shared-model-transition}, assume that
\begin{enumerate}[label=\PCAssumptionBadge{M\arabic*},ref=M\arabic*,leftmargin=2.6em,itemsep=0pt,topsep=2pt]
\item\label[assumption]{ass:measurement-agreement} in every execution of its tools, including idle, the state at every instant is $\pi$ of the world at that instant;
\item\label[assumption]{ass:invariant-preservation} for every $\tool\in\tools_M$ and all inputs, worlds and states,
\[
\enabled_\tool(i,s)\wedge\mathsf{K}(w)\wedge
\widehat{\effect}_\tool(i,w,s,w',s')\implies\mathsf{K}(w').
\]
\end{enumerate}
Then $\mathsf{Fid}_M(\tool)$ holds for every $\tool\in\tools_M$. For predicates $\clause$ on $\ss_M$ and $\Safe$ on $\worlds_M$, adequacy follows if
\begin{equation}\label{eq:shared-model-adequacy}
\forall w\in\worlds_M\qquad \mathsf{K}(w)\wedge\clause(\pi(w))\implies\Safe(w).
\end{equation}
\end{proposition}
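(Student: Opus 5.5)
The plan is to unfold $\mathsf{Fid}_M(\tool)$ from \cref{eq:local-fid} with the correspondence \cref{eq:shared-model-transition}, and to prove its two conjuncts separately. Fix $\tool\in\tools_M$, an input $i$, and $w,s,w',s'$ with $\corresponds_M(w,s)$, $\enabled_\tool(i,s)$ and $\widehat{\effect}_\tool(i,w,s,w',s')$. From $\corresponds_M(w,s)$ I obtain $s=\pi(w)$ and $\mathsf K(w)$. I then need $s'=\pi(w')$ and $\mathsf K(w')$.

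For the first conjunct I use \cref{ass:measurement-agreement}. The effect relation $\widehat{\effect}_\tool(i,w,s,w',s')$ says that some execution of the routine, started in $(w,s)$, reaches $(w',s')$ at some instant $\tau$. By \cref{ass:measurement-agreement}, at every instant of every execution of the module's tools, including idle, the state is $\pi$ of the world at that instant. Applying this at $\tau$ gives $s'=\pi(w')$. For the second conjunct, \cref{ass:invariant-preservation} applies directly to $\enabled_\tool(i,s)$, $\mathsf K(w)$ and $\widehat{\effect}_\tool(i,w,s,w',s')$, giving $\mathsf K(w')$. Together these give $\corresponds_M(w',s')$, which is $\mathsf{Fid}_M(\tool)$.

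For adequacy, take $w,s$ with $\corresponds_M(w,s)\wedge\clause(s)$. Correspondence gives $s=\pi(w)$ and $\mathsf K(w)$, so $\clause(\pi(w))$ holds, and \cref{eq:shared-model-adequacy} yields $\Safe(w)$. This is the first line of \cref{ass:contract-adequacy} for the module.

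The only delicate point is the first conjunct. I must make sure that \cref{ass:measurement-agreement} is read as a statement about every pair that $\widehat{\effect}_\tool$ admits, that is, about every instant of every execution, and not only about returned states. I also need agreement to hold at instants reached by outside changes. Since $\widehat{\effect}_\tool$ is defined as reachability in an actual execution of the tool, which includes outside changes, this reading matches. No coverage claim is needed, consistent with the statement leaving $\mathsf{Cov}_M$ to be checked separately.
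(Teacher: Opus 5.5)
Your proposal is correct and follows the paper's proof essentially step for step. Correspondence supplies $\mathsf K(w)$, \cref{ass:invariant-preservation} then gives $\mathsf K(w')$, \cref{ass:measurement-agreement} gives $s'=\pi(w')$, and adequacy follows by substituting $s=\pi(w)$ into \cref{eq:shared-model-adequacy}; your point that \cref{ass:measurement-agreement} must apply at every instant reached, not only at returns, is exactly the reading that the definition of $\widehat{\effect}_\tool$ requires.
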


\begin{proof}
Consider an enabled invocation starting from $\corresponds_M(w,s)$ and any joint effect $\widehat{\effect}_\tool(i,w,s,w',s')$. Initial correspondence gives $\mathsf{K}(w)$, so \cref{ass:invariant-preservation} yields $\mathsf{K}(w')$, while \cref{ass:measurement-agreement} gives $s'=\pi(w')$. Thus $\corresponds_M(w',s')$ holds, proving fidelity. Finally, correspondence and $\clause(s)$ imply $\mathsf{K}(w)$ and $\clause(\pi(w))$, so \cref{eq:shared-model-adequacy} gives $\Safe(w)$.
\end{proof}

Conditions such as \cref{ass:invariant-preservation} express what the system relies on about its environment, as domain assumptions do in requirements engineering \citep{jackson1995world,zave1997four}. In the mailbox example, \cref{ass:invariant-preservation} requires pending messages to remain available throughout processing and waiting. A routine that removes a message must therefore have acknowledged it by the time removal takes effect. Preservation of $\mathsf{K}$ can be proved for a specified service whose operations enforce these conditions. Applying that proof to an actual service also requires evidence that its implementation and the permitted outside interactions satisfy the specification \citep{lee2016fundamental}.

Conditions on outside changes also inform the choice of $\tran_\tool$. For example, a bound on how rapidly a price can change must account for the whole duration of an invocation or idle interval. An unbounded idle period can require a much broader model, and the contract may then be unable to authorize an operation. Such coverage arguments concern the possible effects on the state, alongside the invariant-preservation argument above.

Another example concerns the expert tools introduced in \cref{sec:tool-families}, such as an LLM asked to assess whether a hotel is suitable. An expert module can describe the history of these assessments, with $\pi$ extracting the judgment returned by a particular invocation. A state field then denotes that identified assessment, so observing it again retrieves the same result even when the procedure could give a different answer on a new call. The correspondence establishes which procedure produced the judgment. A contract using the module must additionally specify what reliance on that judgment means for its goal or safety property, which is a question of adequacy.

\begin{table}[t]
\caption{Forms of interaction between a capability module and the world. Each row relates the changes a model must cover to possible contract responses and approaches from the literature.}
\label{tab:worlds}
\centering\pcTableStyle
\begin{tabularx}{\linewidth}{@{}>{\raggedright\arraybackslash}p{3.45cm}>{\raggedright\arraybackslash}X>{\raggedright\arraybackslash}X>{\raggedright\arraybackslash}p{4.25cm}@{}}
\toprule
\textbf{Interaction with the world} & \textbf{What models cover} & \textbf{Contract response} & \textbf{Related approaches} \\
\midrule
\textbf{Protected from outside changes}\newline E.g. an instruction register & No outside changes to the protected fields. & Rely on these records when authorizing later operations. & Complete mediation and custody\newline \citep{saltzer1975protection} \\
\addlinespace[4pt]
\textbf{History only grows}\newline E.g. a log of service responses & Additions to the history, with past events retained. & Use facts about the past that survive further additions. & History variables\newline \citep{abadi1991refinement} \\
\addlinespace[4pt]
\textbf{Changes within an agreed bound}\newline E.g. withdrawals by other account holders & Every debit permitted by the account's shared withdrawal limit. & Reserve enough margin for those debits. & Rely--guarantee reasoning\newline \citep{jones1983rely} \\
\addlinespace[4pt]
\textbf{Changes at a bounded rate}\newline E.g. charges for a metered service & Change bounded by a rate and the duration of the operation. & Keep a margin and intervene before it is exhausted. & Controlled invariants, barrier functions and Simplex\newline \citep{blanchini1999set,ames2019cbf,sha2001simplex} \\
\addlinespace[4pt]
\textbf{Holds until expiry}\newline E.g. a temporary hotel reservation & The commitment remains valid until its expiry. & Complete the operation in time, renew, or release. & Leases\newline \citep{gray1989leases} \\
\addlinespace[4pt]
\textbf{May change beyond our control}\newline E.g. a flight cancellation & Each possible outcome and its consequences. & Allow a compliant way to continue or terminate after each outcome. & Demonic nondeterminism\newline \citep{dijkstra1975guarded} \\
\addlinespace[4pt]
\textbf{Responds to a request}\newline E.g. a booking confirmation or rejection & Every response allowed by the specification. & Prove admissibility for every permitted response. & Formal interfaces and conformance testing\newline \citep{tretmans2008model} \\
\bottomrule
\end{tabularx}
\end{table}

The payment and booking examples of \cref{app:examples} combine these constructions. In the ledger, custody supports $u_j\leq r_j$, while the provider's charge bound supports $a_j\leq u_j$. For bookings, recording an authentic confirmation supports the historical claim, while the hotel's commitment supports the claim that the room remains held. These conditions belong to different parties even when they support one correspondence. \Cref{tab:worlds} places these constructions in the broader setting of interactions with the world, from protected records to changing quantities and service responses. It connects what the tools must model, including while idle, to the choices available to a contract and to approaches developed in the literature.

\subsection{From local to global guarantees}\label{sec:global-local}

The preceding constructions identify conditions on routines, records and services that support a module's guarantees. Once coverage and fidelity have been established for each module, we can combine them for a contract whose state is their product. For a tool that operates on one component, the proof must account for that operation and for the other components while they wait. More generally, the same argument applies to a combined tool when each component's model and effects cover the whole invocation. A control routine that cannot be decomposed in this way needs a separate argument over the full state.

\begin{theorem}[From local to global guarantees]\label{thm:global-local}
Let $M_1,\ldots,M_n$ be capability modules, with $n\geq1$, and let $\contract$ have state space $\ss=\prod_{k=1}^n\ss_{M_k}$. Set $\worlds=\prod_{k=1}^n\worlds_{M_k}$ and $\corresponds(w,s)\iff\bigwedge_{k=1}^n\corresponds_{M_k}(w_k,s_k)$. Suppose each contract tool, including control, is a tensor of tools from the $M_k$, with the conjunction model of \cref{eq:product-model}. Assume the $n$-component form of \cref{eq:no-coupling} for every such tool. Then
\begin{equation}\label{eq:global-from-local}
\bigwedge_{k=1}^{n}\ \bigwedge_{\tool\in\tools_{M_k}}\mathsf{Cov}_{M_k}(\tool)\implies\text{\cref{ass:transition-coverage}},
\qquad
\bigwedge_{k=1}^{n}\ \bigwedge_{\tool\in\tools_{M_k}}\mathsf{Fid}_{M_k}(\tool)\implies\text{\cref{eq:fidelity}}.
\end{equation}
\end{theorem}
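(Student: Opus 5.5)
The plan is to fix an arbitrary contract tool $\tool=\tool_1\otimes\cdots\otimes\tool_n$ with $\tool_k\in\tools_{M_k}$, together with an input $i=(i_1,\ldots,i_n)$, worlds $w,w'$ and states $s,s'$, and to reduce both implications to componentwise statements. Everything rests on three facts. First, the conjunction model of \cref{eq:product-model}, extended to $n$ factors by induction, gives $\tran_\tool(i,s,s')\iff\bigwedge_k\tran_{\tool_k}(i_k,s_k,s'_k)$. Second, enabledness factors as $\enabled_\tool(i,s)\iff\bigwedge_k\enabled_{\tool_k}(i_k,s_k)$: a joint successor exists exactly when each factor has one, using that each $\ss_{M_k}$ is a set and that a product of witnesses is a witness. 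Third, the $n$-component form of \cref{eq:no-coupling} gives $\widehat{\effect}_\tool(i,w,s,w',s')\implies\bigwedge_k\widehat{\effect}_{\tool_k}(i_k,w_k,s_k,w'_k,s'_k)$. The same holds for control, which by hypothesis is also such a tensor.

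For coverage, assume $\enabled_\tool(i,s)$ and $\effect_\tool(i,s,s')$. By \cref{eq:world-projection} there are $w,w'$ with $\widehat{\effect}_\tool(i,w,s,w',s')$. The compatibility condition then yields, for each $k$, $\widehat{\effect}_{\tool_k}(i_k,w_k,s_k,w'_k,s'_k)$, and projecting again gives $\effect_{\tool_k}(i_k,s_k,s'_k)$. Enabledness factors, so $\mathsf{Cov}_{M_k}(\tool_k)$ gives $\tran_{\tool_k}(i_k,s_k,s'_k)$ for every $k$. The conjunction model then gives $\tran_\tool(i,s,s')$, which is \cref{ass:transition-coverage} for $\tool$.

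For fidelity, assume $\corresponds(w,s)$, $\enabled_\tool(i,s)$ and $\widehat{\effect}_\tool(i,w,s,w',s')$. Unfolding $\corresponds$ gives each $\corresponds_{M_k}(w_k,s_k)$. Enabledness and the effect decompose as above, so $\mathsf{Fid}_{M_k}(\tool_k)$ yields $\corresponds_{M_k}(w'_k,s'_k)$, and the conjunction is $\corresponds(w',s')$, i.e. \cref{eq:fidelity}. Note that the theorem claims only the first line of \cref{ass:tool-fidelity}; the initialization line \cref{eq:fidelity-init} is left to the contract, as stated in \cref{sec:combining-modules}.

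The main obstacle is the decomposition of enabledness and of effects. For enabledness, the direction we need is from the product to the factors, which is immediate by projecting a joint witness. The converse direction is not needed. For effects, the whole weight lies in \cref{eq:no-coupling}: the product effect must not contain behaviours outside each factor's own effect over the entire paired invocation, including the wait after one component returns. I would state its $n$-ary form explicitly and check that it follows from the binary form by associativity of the product (\cref{def:module-product}), since the regrouping isomorphisms preserve models, correspondences and routines. With that in hand, the rest is the pointwise argument above.
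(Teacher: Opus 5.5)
Your proposal is correct and follows the paper's own proof. You project a witness of the conjunction model to get enabledness of each component, use the compatibility condition to split each joint effect into component effects, and then apply local coverage and local fidelity conjunct by conjunct. The one step you do not need is deriving the $n$-ary compatibility condition from the binary one by associativity, because the theorem already assumes the $n$-component form of \cref{eq:no-coupling} as a hypothesis.
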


\begin{proof}
An enabled tensor has an enabled component on each module, because a witness to its conjunction model supplies a witness to every component model. By \cref{eq:no-coupling}, each joint effect projects to an effect of that component. Local coverage therefore gives each conjunct of the tensor model, proving coverage. Local fidelity preserves each conjunct of $\corresponds$, which proves \cref{eq:fidelity}, the first line of \cref{ass:tool-fidelity}.
\end{proof}

The result explains what can be reused when a contract combines, for example, payment and booking. The proofs about recording payment instructions and booking responses can be established separately. To use them together, we check that running either routine does not introduce changes excluded by the other module's effects. Once this compatibility holds, the proofs establish coverage and fidelity for the combined tools, without a new proof for every payment--booking combination. The second line of \cref{ass:tool-fidelity}, the start, is a property of the contract's control step (\cref{sec:world}).

The theorem is stated using execution effects $\effect$ and $\widehat{\effect}$. To obtain guarantees independent of the requesting agent, its argument can first be applied to the reference executions of \cref{sec:establishing-assumptions}, with the same compatibility condition on their effects. Under \cref{ass:isolation}, \cref{lem:isolation} transfers the resulting coverage and fidelity to executions with any agent. This identifies the separate roles of composition and isolation: composition combines the component arguments, while isolation makes the resulting argument apply regardless of the agent's computation and requests.

A contract can also restrict an existing tool by a precondition, as in \cref{sec:tool-families}. The new model is the conjunction of that precondition with the original model. Whenever it is enabled, the precondition holds and the original tool is enabled, so the original coverage proof also establishes coverage of the restricted model. The routine and correspondence have not changed, so the same fidelity proof applies. The restriction thus changes which calls can be authorized while retaining the established guarantees for those calls.

\textbf{Checking an extension.} We want to reuse the guarantees of an existing module when adding state fields, correspondence claims or routines. To do so, we first establish that the existing routines retain their guarantees in the extended system, then check that they preserve the added correspondence and that the new routines satisfy the requirements on the full state. The completed-purchase register introduced above illustrates why these checks are needed. An existing booking routine leaves this register unchanged, but that alone does not establish the added correspondence $\corresponds'$. If an entry means that a confirmation was issued, later cancellation need not invalidate it. If it means that the booking is still held, cancellation can make it false even though the entry has not changed.

More precisely, the effects of each existing routine, projected onto $\worlds_M$ and $\ss_M$, must remain within its original effects, and $Z$ must remain unchanged. Their coverage proofs then establish the lifted models in \cref{eq:extension-model}, and their fidelity proofs preserve $\corresponds_M$. For each such tool, including idle, we additionally check that an enabled invocation starting from $\corresponds_M\wedge\corresponds'$ preserves $\corresponds'$. For an added routine, such as the purchase routine, coverage must hold on the full state $\ss_M\times Z$ and fidelity must preserve both correspondences. An added control routine must also satisfy \cref{ass:initial-control} for the chosen contract clause. These checks allow an extension to reuse the original proofs while establishing the additional claims it makes.

\begin{figure}[t]
\centering
\resizebox{\linewidth}{!}{%
\begin{tikzpicture}[x=1cm,y=1cm,text=asInk,font=\pcVisualFont,
  execute at begin node={\setlength{\parskip}{0pt}\hyphenpenalty=10000},
  line width=.65pt,line cap=round,line join=round,
  >={Latex[length=1.6mm,width=1.2mm]},
  box/.style={draw=asInk!45,fill=white,rounded corners=3pt,align=center,inner sep=5pt},
  claim/.style={draw,rounded corners=2pt,align=center,inner xsep=3pt,inner ysep=3pt,minimum height=.48cm},
  module/.style={box,text width=3.30cm,minimum height=1.45cm},
  note/.style={text=asMuted,font=\sffamily\fontsize{8}{9.5}\selectfont},
  dep/.style={draw=asInk!60,rounded corners=3pt}]
\node[box,draw=asProved!65,fill=asProved!9,text width=5.4cm,minimum height=1.0cm] (goal) at (7.15,0)
 {\PCAssumptionRef{ass:transition-coverage}\enspace and\enspace\PCAssumptionRef{ass:tool-fidelity}\\Coverage and fidelity for every agent};
\node[box,text width=7.1cm,minimum height=1.45cm] (combined) at (7.15,-2.55) {};
\node at (7.15,-2.14) {\textbf{Module $M=M_1\otimes M_2\otimes M_5$}};
\node[claim,draw=asProved!65,fill=asProved!9,text width=2.80cm] at (5.50,-2.76)
 {$\mathsf{Cov}_{M_1\otimes M_2\otimes M_5}$};
\node[claim,draw=asProved!65,fill=asProved!9,text width=2.80cm] at (8.80,-2.76)
 {$\mathsf{Fid}_{M_1\otimes M_2\otimes M_5}$};
\node[box,draw=asAssigned!65,fill=asAssigned!9,text width=3.40cm,minimum height=1.0cm] (a7) at (12.75,-1.10)
 {\PCAssumptionRef{ass:isolation}\enspace\textbf{isolation}\\{\color{asMuted}Airlock provider}};
\draw[dep,->] (combined.north) -- node[right,inner sep=3pt]{\cref*{lem:isolation}} (goal.south);
\draw[dep,->] (a7.north) |- (goal.east);
\foreach \k/\x/\covcolour/\fidcolour in {1/2.20/asAssigned/asMitigated,2/6.30/asMitigated/asAssigned}{
  \node[module] (m\k) at (\x,-5.25) {};
  \node at (\x,-4.84) {\textbf{Module $M_\k$}};
  \node[claim,draw=\covcolour!65,fill=\covcolour!10,text width=1.15cm] at (\x-.80,-5.46)
    {$\mathsf{Cov}_{M_\k}$};
  \node[claim,draw=\fidcolour!65,fill=\fidcolour!10,text width=1.15cm] at (\x+.80,-5.46)
    {$\mathsf{Fid}_{M_\k}$};
}
\node[module,text width=5.20cm] (m5) at (12.35,-5.25) {};
\node at (12.35,-4.84) {\textbf{Module $M_5=\operatorname{ext}(M_3\otimes M_4)$}};
\node[claim,draw=asProved!65,fill=asProved!10,text width=1.15cm] at (11.55,-5.46)
  {$\mathsf{Cov}_{M_5}$};
\node[claim,draw=asProved!65,fill=asProved!10,text width=1.15cm] at (13.15,-5.46)
  {$\mathsf{Fid}_{M_5}$};
\foreach \m/\x in {m1/2.20,m2/6.30,m5/12.35}
  \draw[dep] (\m.north) -- (\x,-4.05);
\draw[dep] (2.20,-4.05) -- (12.35,-4.05);
\draw[dep,->] (7.15,-4.05) -- node[right,inner sep=3pt]{\cref*{thm:global-local}} (combined.south);
\foreach \k/\x/\covcolour/\fidcolour in {3/10.50/asAssigned/asAssigned,4/14.20/asMitigated/asAccepted}{
  \node[box,text width=2.90cm,minimum height=1.45cm] (m\k) at (\x,-8.05) {};
  \node at (\x,-7.64) {\textbf{Module $M_\k$}};
  \node[claim,draw=\covcolour!65,fill=\covcolour!10,text width=1.10cm] at (\x-.72,-8.26)
    {$\mathsf{Cov}_{M_\k}$};
  \node[claim,draw=\fidcolour!65,fill=\fidcolour!10,text width=1.10cm] at (\x+.72,-8.26)
    {$\mathsf{Fid}_{M_\k}$};
}
\draw[dep] (m3.north) -- (10.50,-6.95) -- (14.20,-6.95) -- (m4.north);
\draw[dep,->] (12.35,-6.95) -- (m5.south);
\node[note,anchor=west,align=left,inner sep=4pt] at (12.42,-6.45)
 {Product and extension checks};
\node[note,anchor=east] at (5.55,-9.25) {How supporting claims are addressed:};
\node[asLegend=asProved] at (6.10,-9.25) {proof};
\node[asLegend=asAssigned] at (8.25,-9.25) {assigned responsibility};
\node[asLegend=asMitigated] at (10.80,-9.25) {mitigation};
\node[asLegend=asAccepted] at (13.10,-9.25) {accepted risk};
\end{tikzpicture}%
}

\caption{An illustrative assurance case for a recursively constructed contract environment. The arrows connect separate module guarantees: $M=M_1\otimes M_2\otimes M_5$, with $M_5=\operatorname{ext}(M_3\otimes M_4)$. The direct branch from $M_3,M_4$ to $M_5$ combines the product conditions of \cref{thm:global-local} and the extension checks described in the text. Each coverage or fidelity claim concerns all tools of that module, including idle, for reference executions. Together with isolation, the guarantees of $M$ yield \cref{ass:transition-coverage,ass:tool-fidelity} through \cref{lem:isolation}. The colour tags record a proof under supporting conditions, responsibility for meeting a condition, measures to reduce the risk of failure, or acceptance of that risk. The assignments are illustrative and may differ between coverage and fidelity of the same module.}
\label{fig:derived-tree}
\end{figure}

The conditions used in these proofs provide a way to organize the argument for the complete contract. Write $L_k$ for the supporting conditions of module $M_k$, such as the register and issuance requirements \cref{ass:custody-register}--\cref{ass:custody-submission} for payments or the response and recording requirements \cref{ass:interface-responses}--\cref{ass:interface-recording} for bookings. Suppose they establish its local guarantees for reference executions, and check compatibility in \cref{eq:no-coupling} for those same effects. Under the remaining construction hypotheses of \cref{thm:global-local}, composition followed by isolation gives
\begin{equation}\label{eq:global-leaves}
L_1\wedge\dots\wedge L_n\ \wedge\ \text{\cref{ass:isolation}}\ \wedge\ \text{compatibility of reference effects}
\ \implies\ \text{\cref{ass:transition-coverage}}\wedge\text{\cref{eq:fidelity}}.
\end{equation}
\Cref{fig:derived-tree} illustrates how this argument can be repeated within a module. The contract environment is the product $M_1\otimes M_2\otimes M_5$, where $M_5=\operatorname{ext}(M_3\otimes M_4)$. In the figure, $\mathsf{Cov}_N$ and $\mathsf{Fid}_N$ abbreviate the coverage and fidelity requirements for every tool of a module $N$, including idle, for reference executions. We first combine the guarantees of $M_3$ and $M_4$, then check the additions that produce $M_5$. Once these checks are complete, $M_5$ supplies guarantees in the same form as the other modules and can be used in the outer product. Each product step requires the hypotheses of \cref{thm:global-local}, and \cref{ass:isolation} transfers the resulting guarantees to every agent.

The colours illustrate one possible assessment of the conditions supporting this construction. They apply separately to coverage and fidelity, while the module boxes remain white. The internal claims are proved from their dependencies, while the leaf claims identify where responsibility is assigned, risk is mitigated or risk is accepted. Coverage and fidelity of the same module may therefore receive different treatments. These tags describe how a condition is handled, and a different use of the same module may require a different assignment. In \cref{sec:responsibilities} we explain how to organize such an assessment as an assurance case, connecting the mathematical argument to evidence and responsibilities.

Reusing module arguments reduces repeated verification, although compatibility and extension checks still depend on the combination. It also helps assess a change: if a payment provider changes its access policy, we can follow the branches from that condition to the contracts whose proofs use it. The tree identifies the affected arguments. Whether the resulting system failure can also affect other modules depends on their interactions, particularly on whether compatibility still holds.

\textbf{Adequacy.} The theorem establishes coverage and fidelity. The implication $\corresponds(w,s)\wedge\clause(s)\implies\Safe(w)$ in \cref{ass:contract-adequacy} still concerns the contract as a whole. Its correspondences may be local while its protection involves several resources. With two payment providers under one budget, for instance, $a^1(w)\leq b^1(s)$ and $a^2(w)\leq b^2(s)$, together with $b^1(s)+b^2(s)\leq B$, entail $a^1(w)+a^2(w)\leq B$. The parties must also validate that this formal protection is the one they intend. The same distinction applies to the goal and \cref{eq:goal-adequacy}.

\begin{figure}[t]
\centering
\scalebox{.94}{\assuranceTree}
\caption{An illustrative assurance case connecting assumptions, supporting conditions and responsibilities. The first level gives the joint premises of \cref{thm:world-final}. Isolation connects reference proofs to arbitrary agents. The four groups refine fidelity through the lettered conditions of \cref{prop:custody,prop:interface,prop:shared-model}, some of which also establish coverage. Teal denotes a proof under its supporting conditions. Grey under adequacy records acceptance of the risk that the chosen scope of protection is incomplete. The colour tags have the same meanings as in \cref{fig:derived-tree}, while their assignments and the named roles are illustrative and may be combined or changed for a particular system. Dashed branches indicate patterns to instantiate for its correspondence.}
\label{fig:assurance}\label{fig:tree}
\end{figure}

\textbf{Beyond products.} This construction uses disjoint coordinates for the resources and their states. More general forms of composition can support shared resources or guarantees between interacting components. Separation logic and resource algebras provide ways to describe shared resources with separated permissions \citep{ohearn2001local,reynolds2002separation,jung2015iris}. Interface theories and assume--guarantee contracts organize interactions through specified interfaces \citep{dealfaro2001interface,benveniste2018contracts}, while layered verification establishes a component against the specification of the layer below \citep{gu2015deep}. These approaches could support further capability constructions. Here the product and extension identify the proof obligations for the cases developed above.

\Needspace{7\baselineskip}
\subsection{From assumptions to responsibilities}\label{sec:responsibilities}\label{sec:tree}

The preceding constructions allow a contract to reuse its modules' guarantees. To apply them, we need to identify how their supporting conditions are met by the routines and services involved. For a payment module, this includes how instructions are recorded, who holds the credentials and which submissions the provider accepts. Following these dependencies identifies what to check and who can supply the evidence. Organizing the argument in this way produces an \emph{assurance case}: a structured argument explaining why a system should satisfy a specified property, supported by evidence for the conditions on which it relies \citep{kelly2004gsn,bloomfield2020assurance}.

\Cref{fig:assurance} organizes this assessment for the complete contract. Its first level gives the six assumptions of \cref{thm:world-final}, with their requirements and the parties expected to support them. The module arguments of \cref{fig:derived-tree} establish coverage and fidelity, while the other branches concern the checker, initialization and control, execution, and adequacy. Isolation connects these arguments to arbitrary agents: by \cref{lem:isolation}, the reference-execution guarantees apply regardless of the agent's computation and requests. The Airlock's provider establishes this restriction on access, as described in \cref{sec:mediation}.

The colours distinguish how the parties propose to support a condition or address the possibility that it fails:
\begin{itemize}[leftmargin=1.5em,itemsep=3pt,topsep=4pt]
\item \textbf{\textcolor{asProved}{Proof.}} A mathematical argument establishes a property from specified premises. For example, the routine can be proved to register each instruction before submitting it. Its assumptions about storage and execution remain explicit dependencies.
\item \textbf{\textcolor{asAssigned}{Assigned responsibility.}} A named party commits to maintaining a condition, such as restricting use of the payment credentials or honouring a reservation. The tree records who must support that commitment with evidence and who is responsible for failures.
\item \textbf{\textcolor{asMitigated}{Mitigation.}} Measures such as testing, independent checks or monitoring reduce the risk of a failure or its consequences. For a service response, this could include checking its fields against the request before recording it. The remaining uncertainty must still be assessed.
\item \textbf{\textcolor{asAccepted}{Accepted risk.}} The parties identify a possibility of failure or a gap in the protection and agree to proceed despite it. For a theorem's premise, this records the limit of their confidence that it holds. For the scope of $\Safe$, it records the risk that the specified protection does not cover everything they intended.
\end{itemize}
The four tags retain these meanings throughout the assessment, while their placement in \cref{fig:derived-tree,fig:assurance} illustrates possible choices. They may coexist, for example when a service provider accepts responsibility for a condition while monitoring supplies further evidence about whether it holds. In \cref{fig:assurance}, fidelity is green because it is derived from the conditions below it. The grey box under adequacy records a different decision: the parties accept that the scope of protection expressed by $\Safe$ may be incomplete, even when the adequacy implication itself has been proved. The white annotation about the meaning of fields identifies a specification choice.

The expanded part follows fidelity, whose supporting conditions often involve several parties. It groups them by the questions to address when interpreting and maintaining the correspondence. The labels \cref{ass:custody-register}--\cref{ass:custody-submission}, \cref{ass:interface-responses}--\cref{ass:interface-recording} and \cref{ass:measurement-agreement}--\cref{ass:invariant-preservation} connect these branches to the conditions in the preceding propositions. A module may combine several of these arguments.

\textbf{Custody.} We ask who can change the protected records and authorize the operations they describe. In the payment module, the routine registers each instruction before submission, and the provider accepts submissions only through the protected authority. Conditions \cref{ass:custody-register}--\cref{ass:custody-submission} express these requirements. \Cref{prop:custody} shows that they establish coverage and fidelity. Their justification combines an argument about the routine with evidence about access permissions and the provider's acceptance of instructions.

\textbf{Formal interfaces.} When a module records a service's response, we need to know what the service may report and how the routine receives and records it. Conditions \cref{ass:interface-responses}--\cref{ass:interface-recording} separate these questions. A booking service must return permitted responses, and the receiving routine must authenticate them and associate them with the correct requests. \Cref{prop:interface} then establishes coverage and fidelity. The service provider supports the first condition, while the receiving routine's implementer and operator support the second.

\textbf{External behaviour.} Some claims require a service to maintain a condition after replying, or while the module waits. A hotel must retain an unexpired reservation, and a mail service may be required to retain unacknowledged messages. The predicate $\mathsf K$ of \cref{prop:shared-model} expresses such domain invariants. Its preservation, together with agreement between the state and its measurement, establishes fidelity. Coverage also requires an account of possible state changes. These arguments may rely on service commitments, verified rules or bounds on outside interactions, as illustrated in \cref{tab:worlds}.

\textbf{Agreed meaning.} Before checking whether a field describes the world correctly, we must specify what it describes. A booking field may record that a confirmation was issued or assert that a room is still held. These choices require different correspondences. In the measurement construction, choosing $\pi$ specifies the reading, while \cref{ass:measurement-agreement} requires the state to agree with it. For an expert tool, the record can identify the procedure and invocation that produced a judgment. The parties decide what reliance on that judgment means for the protection or goal, connecting correspondence to adequacy and to \cref{eq:goal-adequacy}.

For a payment module, the resulting assurance case can connect its coverage and fidelity proofs to a checked implementation, an assessment of access permissions and evidence about the provider's acceptance of instructions. The tree records how these pieces support the overall guarantee and makes any unsupported conditions visible. The propositions supply the mathematical implications, while the evidence supports applying their premises to the routines and services that will actually be used \citep{hawkins2011new}. This includes specifications and interactions with external components, which are known sources of failures even in verified systems \citep{fonseca2017empirical}.

The work can be reused along with the module: a new contract checks that its supporting conditions still apply and that the composition and any extensions meet the requirements of \cref{sec:global-local}. It then uses the existing proofs and evidence while supplying its own clause, control and adequacy arguments. If a payment provider changes its access policy, the dependencies identify which custody arguments and contracts need review. In \cref{sec:signature} we discuss recording these responsibilities and agreeing on responses to failures. The same assessment supports completion, where the correspondence must justify the world goal when settlement begins.

\section{Mechanization}\label{sec:lean}\label{sec:implementation}

We provide a Lean~4 mechanization of PC2 \citep{demoura2021lean4}, available in the \PCRepo{}. The repository contains the formal definitions and checked proofs, identifies the corresponding results in this paper and makes their assumptions explicit. It also documents the implementation and provides instructions for reproducing the checks.

\section{Practical use and remaining risks}\label{sec:residual}

The preceding sections establish how PC2 can enforce a contract and how its guarantees depend on the routines, services and host that support execution. Using such a contract also requires the parties to understand what they are agreeing to, the agent to find proofs at an acceptable cost, and arrangements for the risks that remain assumed. These questions connect the construction to work on specification, automated reasoning and cooperation between parties with different interests. We discuss possible ways to address them, and what would need to be evaluated. Developing and validating these arrangements is outside the scope of the present work.

\subsection{Writing and understanding agreements}\label{sec:formalizers}

An enforceable agreement need not be the agreement its parties intended. A travel contract might require a flight and a hotel without requiring their dates and destinations to match. Every accepted call could then satisfy the clauses while the resulting trip is unusable. The mathematical adequacy condition in \cref{ass:contract-adequacy} connects $\clause$ to a specified $\Safe$, but deciding whether $\Safe$ captures the intended protection remains a separate task. This is closely related to the difficulties of specifying objectives for optimizing systems and of writing complete agreements \citep{manheim2018goodhart,hadfieldmenell2019incomplete}.

A possible response is to assist the parties with \emph{formalizer agents}, building on work on autoformalization \citep{wu2022autoformalization}. A formalizer would draft clauses from the parties' intentions, explain the resulting clauses in ordinary language, and search for compliant states that expose an unintended interpretation. For the travel example, it could present a trip whose bookings satisfy the individual requirements but cannot be used together. These examples give the parties concrete questions to resolve before signing. Capability modules and previously reviewed contract templates can also reduce the amount of specification that must be written afresh. Once $\rho$, $\clause$ and $\Safe$ have been specified, formal reasoning can establish their mathematical relationship. Reviewing the explanations and counterexamples serves the different purpose of checking that these are the relationships the parties want.

\textbf{Expert judgments.} Some requirements, such as whether a hotel suits a client's description, depend on the experts introduced in \cref{sec:tool-families}. A fixed expert procedure gives these requirements an agreed interpretation, but its judgments may be influenced by how evidence is presented. A hotel description could be written to obtain a favourable assessment without corresponding to a desirable hotel. The parties can test proposed experts on borderline and adversarial examples, specify which evidence they receive, and limit repeated attempts to obtain a favourable verdict. When an assessment is intended to remain binding, the contract can retain it and specify when new evidence permits reconsideration. These choices make the decision procedure more predictable and harder to manipulate. Their effectiveness, and whether the procedure captures the intended judgment, require evaluation beyond checking the proofs that use its answers.

\textbf{Negotiation.} The same review matters when an agreement is negotiated. A party may propose clauses that are formally consistent but unexpectedly favourable to itself, so the final agreement must be assessed as a whole. Choreographic and negotiation languages \citep{gopinathan2026pact,wyse2026commitment} provide ways to organize how agents reach agreements. Connecting them to PC2 would require their outputs to determine the clauses, tools and other components of the contract that will actually be enforced. Formalizers could support this process by explaining how a proposed revision changes the permitted outcomes.

\subsection{Finding proofs and using contracts in practice}\label{sec:provers}\label{sec:measure}

A request without an accepted proof cannot execute, so the ability to find proofs determines whether the agent can carry out its task. Control can still enforce stopping and refund conditions under the timing assumptions of \cref{sec:execution-duration}. There is no general procedure that decides all obligations expressible in the underlying logic, and proofs that exist may still be difficult to find before their control window expires. The benefit of the construction is that it requires proofs about proposed calls and the tool models, without requiring a proof of the agent's reasoning or its entire future behaviour. An individual obligation can nevertheless involve substantial reasoning about hidden state and possible outcomes.

Capability modules offer a way to reduce this work. A module can provide lemmas about the effects of its tools, which can be reused across calls and contracts. Common contract templates could then reduce many requests to familiar arithmetic or finite checks, while retaining the possibility of more general clauses. For example, a lemma about a payment routine's maximum charge can be reused when proving that a particular request preserves a budget. A prover can combine such lemmas with Lean's tactics and learned proof search \citep{ren2025deepseekproverv2,lin2025goedelproverv2,wang2025kimina}. It can run as an assistant to the acting agent, since the correctness guarantee depends on checking the resulting proof rather than on which system produced it. The choice of control window also matters: a proof that tolerates several control steps may remain usable despite the time required to construct and check it.

\textbf{Confidentiality during inference.} Access restrictions must also account for how the agent is run. If it sends its observations to an external inference service, that service receives the information in those requests even when every tool call is proof-gated. The parties therefore need to decide whether the provider is an intended recipient. Where it is not, possible arrangements include inference on a machine under the required access controls or within a confidential computing environment \citep{chrapek2025confidential}. The permitted recipients and the protections offered by the chosen arrangement must agree with the confidentiality requirements of the contract.

\textbf{Evaluation.} An evaluation of practical use should measure whether contracts capture the intended task, whether the required proofs can be found in time, and whether the resulting system completes useful work at an acceptable total cost. The first question calls for adversarial examples that separate intended outcomes from formally compliant ones. The second calls for measurements of proof-search and checking time, reuse of module lemmas, and requests that expire before authorization. The third includes the cost of constructing the contract, operating control and using external services, compared with other safeguards on the same tasks. Successful task completion must be reported alongside violations and rejected calls, so that rejecting everything cannot count as a successful defense. Such an evaluation would test the proposed use of formalizers and provers, rather than infer their effectiveness from the safety theorem.

\subsection{Responsibility and consequences}\label{sec:signature}\label{sec:managing}

The assurance analysis of \cref{sec:responsibilities} identifies what supports a guarantee and what remains assumed. The parties must also decide how to proceed with those assumptions. Before signing, they can record each premise that remains, who accepts responsibility for it, the evidence available, and the response agreed for a failure. For the payment ledger of \cref{app:examples}, this includes the reservation discipline of the implementation and the charge limits enforced by the payment service. An agreement to answer for a service failure gives the parties a basis for allocating its consequences. It does not replace the evidence that the service respects its limits.

Where failures can be established from records, a contract could include a remedy with a decidable trigger and an available execution routine. Reconciliation of payment instructions against service statements might, for instance, trigger a refund from reserved funds. Deposits or insurance could provide other ways to allocate losses, depending on what events they cover, what evidence establishes a claim, and whether the promised compensation is available. Designing these arrangements is a further question beyond the enforcement mechanism studied here. An automatic remedy itself needs the same coverage, fidelity and adequacy analysis as any other tool. Compensation addresses a consequence of failure, while the safety theorem concerns whether the failure can occur at all. If the available observations cannot distinguish a failure, its resolution requires additional evidence or a risk the parties explicitly accept.

\textbf{Consequences for others.} An agreement may serve its parties while harming others. Credible commitments can sustain harmful coordination or make threats more credible \citep{hammond2025multiagent}. Access to an Airlock could therefore require additional approval or disclosure for certain contracts, alongside restrictions on the resources and operations it provides. These measures raise questions about authority, confidentiality and protection of people outside the agreement. PC2 can enforce agreed requirements, but deciding which agreements should be permitted requires a broader institutional account.

\textbf{Participation and economic agency.} Enforceable commitments could allow people and smaller organizations to delegate to providers whose systems they cannot inspect, while retaining specified limits on the authority they grant. Whether that possibility translates into broad participation also depends on the cost of writing contracts and finding proofs, access to suitable capability modules, and the terms on which Airlock providers offer their services. These questions connect to economic agency and the distribution of AI's gains \citep{jacobs2026economic}, and to arguments that access must be accompanied by the infrastructure needed to benefit from AI \citep{gabriel2026globalbenefit}. Reusable modules and reviewed templates could lower some of these costs. Fair terms and protection for people outside an agreement require additional institutional arrangements.

\PCLayer{Context}
\section{Related work}\label{sec:related}

\begin{table}[t]
\caption{Comparison by feature, with representative examples under each approach. Formal guarantees depend on each construction's assumptions about observations, models and execution. A \pcYes{} means that the feature is provided, and a \pcNo{} that it is not provided by the mechanism itself. For PC2, the settlement row concerns a contract with a goal-gated settlement tool.}
\label{tab:comparison}
\centering\pcTableStyle
\begin{tabularx}{\linewidth}{@{}>{\raggedright\arraybackslash}p{3.0cm}*{4}{>{\raggedright\arraybackslash}X}@{}}
\toprule
\textbf{Question} & \textbf{Prompts and learned monitors} & \textbf{Policy\newline enforcement} & \textbf{Model-based\newline shields} & \textbf{PC2} \\
\addlinespace[3pt]
Examples & Prompt delimiters and attack detectors evaluated in AgentDojo \citep{debenedetti2024agentdojo} & CaMeL \citep{debenedetti2025camel} and FORGE \citep{palumbo2026formal} & Shield synthesis \citep{bloem2015shield} and shielded RL \citep{alshiekh2018shielding} & This work \\
\midrule
What allows a tool call to run? & The model follows instructions, or a learned monitor accepts the input or call & An author-supplied rule permits the call, given its arguments, history or data sources & An analysis of possible system behaviour identifies the action as safe & A supplied proof establishes the obligation derived from the contract and tool models \\
\addlinespace[5pt]
How are requirements written? & Instructions in ordinary language or criteria learned from examples & Executable rules on calls, past actions and information flow & Safety formulas supported by the chosen system model and analysis & State constraints from which per-call obligations are derived, even when those obligations are undecidable \\
\addlinespace[5pt]
What is established about the call's effects? & No formal guarantee covering all possible outcomes of an accepted call & Calls and recorded events satisfy the policy. Safety of their consequences needs a further argument & Every behaviour covered by the safety analysis satisfies the specified safety property & Every modelled effect preserves the contract, despite hidden state and permitted changes before execution \\
\addlinespace[5pt]
Are restrictions guaranteed even for a malicious agent? & \pcNo{} Resistance to attacks is empirical & \pcYes{} The specified policy is enforced & \pcYes{} Safety holds within the analysed model & \pcYes{} The contract holds throughout execution \\
\addlinespace[5pt]
Does each requested call carry a separately checkable proof? & \pcNo & \pcNo & \pcNo{} The shield itself may be verified & \pcYes{} The proof refers to the contract, tool and published information \\
\addlinespace[5pt]
Can proving task completion trigger the agreed settlement? & \pcNo & \pcNo & \pcNo & \pcYes{} An authorized request proving completion triggers settlement without further client approval \\
\bottomrule
\end{tabularx}
\end{table}

PC2 brings together three lines of work: restrictions on what an agent can execute, proofs supplied by an untrusted requester, and arguments connecting a formal specification to the system in which it is used. \Cref{tab:comparison} summarizes the enforcement mechanisms. We first explain how proof-carrying changes the task of establishing safety, then compare recent safeguards for agents and the verification, specification and cooperation literature on which the construction builds.

\textbf{From safety constraints to call obligations.} To use an authorization policy for state safety, its author must establish that every permitted call preserves the intended constraint. As tools interact with hidden information, external services and changing state, this can require substantial reasoning about all the situations in which the policy allows execution. A policy can be too restrictive, excluding safe calls, or leave safety gaps despite being enforced correctly.

In PC2, the author specifies what must remain true of the state, through $\clause$, together with models $\tran_\tool$ of the tools' possible effects. These determine the obligation for each requested call, accounting for hidden state and intervening control steps (\cref{def:obligation,def:delayed-obligation}). The agent must prove this obligation for the call it has chosen. No separate permission policy needs to be designed and proved to preserve $\clause$ for every call it permits. Each call's proof establishes enabledness and preservation at all compliant active states compatible with the view and all active states admitted by the control window, covering every outcome allowed by the tool model. From these individual justifications and automatic control, \cref{thm:commitment-final} establishes compliance throughout execution once the contract has started. The agent can change its strategy without needing to verify that strategy. Coverage, fidelity and adequacy connect the argument to the executing routines and world safety (\cref{sec:guarantees,sec:risks}).

\textbf{Producing proofs and checking them.} The distinction between finding a proof and checking one underlies proof-carrying code \citep{necula1997pcc}. Finding an argument may require substantial search, auxiliary lemmas and problem-specific reasoning. A supplied proof records the steps that justify the conclusion, allowing a checker to verify them against the logical rules without discovering the argument itself. In PC2, this places the work of justifying a call with the agent requesting it, while the Airlock checks the result. Enforcement therefore does not require a general algorithm that decides whether each obligation is true. Constraints and tool models can give rise to undecidable obligations without preventing the Airlock from checking proofs of particular instances.

This separation allows the reasoning used to justify calls to become more sophisticated without changing the criterion for accepting them. Routine obligations can use existing automation and reusable lemmas, while more difficult ones can require additional search or specialist provers (\cref{sec:provers}). An accepted proof establishes the required preservation condition regardless of how it was found. If the agent cannot supply one, the requested call does not execute. Task completion may then fail, while checked calls and automatic control continue to provide the safety guarantee under the stated assumptions.

\textbf{Defenses for tool-using agents.} Prompt injection \citep{perez2022ignore,greshake2023indirect} allows content encountered during a task to steer an agent away from its instructions. AgentDojo \citep{debenedetti2024agentdojo} provides an environment for evaluating such attacks and their defenses on realistic tasks. Deterministic defenses instead restrict execution independently of whether the model follows its instructions. CaMeL \citep{debenedetti2025camel} separates privileged planning from untrusted data and enforces capabilities on tool use. Progent \citep{shi2025progent} enforces programmable privilege policies, while FIDES \citep{costa2025fides} and RTBAS \citep{zhong2025rtbas} use information-flow restrictions. PC2 shares the principle that an adversarial model must remain subject to the execution mechanism. Its obligations address what the permitted call can do: the proof must cover every modelled outcome from the compliant active states compatible with the published view, including the changes allowed before execution.

\textbf{Formal policies and contracts for agents.} FORGE \citep{palumbo2026formal} enforces Datalog authorization rules over execution histories and externally supplied facts. Its enforcement argument relies on complete mediation and faithful policy-relevant observations. The guarantee concerns enforcement of the supplied policy, leaving its relationship to state safety to a separate argument.

Alignment Contracts \citep{david2026alignment} give a decidable contract language for permitted effects, budgets and disclosures. Their mechanized soundness theorem establishes that the event trace produced by a monitor satisfies the contract, independently of the agent's policy. Applying the theorem requires correct event extraction and complete mediation, which the work explicitly distinguishes from its mechanized results. It also separates formal enforcement from heuristic mitigation of attacks involving the contents of permitted messages. The guarantee concerns the events represented in the contract language, with unmodelled disclosures and payload semantics outside its scope.

When authorization rules are supplied separately from a desired state constraint, their author must establish that every permitted call preserves that constraint. A policy can be too restrictive, rejecting safe calls, or insufficiently restrictive, allowing violations despite being enforced correctly. PC2 asks the contract author to specify what must remain true of the state, through $\clause$, together with models of the tools' possible effects. From these objects, the published view and the allowed control steps, it derives the obligation for each requested call (\cref{def:obligation,def:delayed-obligation}). The proof must establish enabledness and preservation of $\clause$ at every compliant active state compatible with the view and every active state admitted by its control window. The agent can therefore choose any strategy for carrying out the task, provided its calls carry proofs of these obligations. No separate permission policy needs to be designed and justified: each accepted proof establishes the preservation condition itself. This condition is exact relative to the chosen models and the states admitted by the view and control window: every rule that safely admits calls on the basis of the view implies it (\cref{prop:wp}). Conservative models and difficulties in finding proofs can still prevent safe calls from executing (\cref{sec:coherence,sec:provers}).

Proof-carrying provides a further form of generality. The Airlock checks a supplied proof of the constructed obligation, so its enforcement does not require a decision procedure for the obligation itself. The contract's constraints and tool models can therefore give rise to undecidable obligations while retaining an independently checkable justification for every authorized call. Under the assumptions of \cref{thm:commitment-final}, these checks and the automatic control steps preserve $\clause$ throughout execution once the contract has started. Fidelity and adequacy connect this execution guarantee to world safety in \cref{sec:risks}, and \cref{sec:composition,sec:responsibilities} develop reusable module guarantees and the evidence and responsibilities supporting their application. Goal-gated settlement further allows parties to agree on a completion procedure whose authorized execution requires no subsequent client approval.

\textbf{Proofs, evidence and resource contracts.} Type-Checked Compliance \citep{rashie2026typechecked} proposes translating financial policies into Lean and checking proposed actions before invoking an API. Like PC2, it places a formal check between the proposed action and its execution. It does not develop the transition-based argument relating accepted obligations to all possible intermediate states and then to safety in the world. Proof-Carrying Agent Actions \citep{wang2026proofcarrying} uses ``proof'' for structured evidence attached to an action, including approval receipts, recorded assumptions and material for replay. Such evidence supports accountability, but is different from a checked derivation of an obligation over possible tool outcomes.

Agent Behavioral Contracts \citep{bhardwaj2026abc} develops a probabilistic account of compliance, drift and recovery, with composition results under assumptions on these processes. Resource-bounded Agent Contracts \citep{ye2026agentcontracts} organizes budgets, deadlines and delegation, and explicitly distinguishes stopping future calls from preventing an overrun within a call. PC2's execution guarantee includes the states reached within each call and during control, rather than only the conditions checked when a new call is requested. Under its stated assumptions, preservation of the contract holds at every instant for arbitrary requests and proposed proofs.

\textbf{Runtime enforcement and shields.} Execution monitoring studies which properties can be enforced by observing a program and stopping unacceptable behaviour. Schneider's characterization \citep{schneider2000enforceable} identifies safety as necessary for this form of enforcement: violations must be witnessed by an execution prefix. Shield synthesis \citep{bloem2015shield} and shielded reinforcement learning \citep{alshiekh2018shielding} construct a component that prevents unsafe actions independently of the agent's policy. Shielding has also been developed for continuous state and action spaces \citep{kim2024continuousshield}. In PC2, the Airlock checks a supplied proof rather than deciding the truth of the obligation by a general safety-analysis procedure. This permits predicates and state spaces beyond those supported by a particular synthesis algorithm. Relative to the chosen tool models, the obligation asks exactly for admissibility throughout the control window of every compliant active state with the published view: like the maximally permissive supervisor of supervisory control \citep{ramadge1987supervisory}, it restricts the agent only where the contract requires (\cref{prop:wp}). It imposes no further restriction on how the agent chooses its actions or constructs its proof. Obtaining that proof is the task discussed in \cref{sec:provers}.

\textbf{Proof-carrying code and authentication.} Proof-carrying code \citep{necula1997pcc} places proof construction with an untrusted code producer and proof checking with the consumer. Proof-carrying authentication \citep{appel1999pca} applies the same division to requests for access under a logical policy. PC2 adopts this established separation of responsibilities for tool calls. Its certificate concerns the effects of a particular requested operation, including nondeterministic outcomes and facts hidden by the view, while delayed obligations keep the certificate applicable as control runs. This makes the proof gate part of a continuing interaction with a changing system. The acting model and the prover can be replaced without changing the logical criterion for accepting a request.

\textbf{Reasoning transparency.} Inspecting an agent's reasoning can help identify deception and unexpected behaviour, but the usefulness of this evidence depends on how much of the reasoning is visible and faithfully reported \citep{shah2026transparency}. PC2 asks for a different kind of evidence: a proof that the requested call satisfies its obligation. That proof need not describe how the agent chose the call, and it can be produced by a separate prover. Its validity therefore does not depend on the transparency of the acting model. Monitoring can still help identify risks or omissions in the contract, while enforcement of its formal constraints rests on the proof and execution assumptions.

\textbf{From verification to world safety.} A proof about a mathematical description must be connected to the routines and services that execute it. Refinement mappings \citep{abadi1991refinement} provide a foundation for relating descriptions at different levels of abstraction. Verification-guided development and differential testing, as used for Cedar \citep{disselkoen2024cedar}, can establish properties of a model and test whether an implementation agrees with it. Smith \citep{smith1985limits} examines the role of these interpretations in claims about real systems, and empirical studies of verified systems locate failures in specifications and interactions with unverified components \citep{fonseca2017empirical}. PC2 makes the relevant connections explicit through coverage, fidelity and adequacy. This separates the proof that execution preserves $\clause$ from the evidence that compliant states describe safe situations in the world.

The structure follows the satisfaction argument of requirements engineering: properties of the environment together with a system specification must imply the intended requirement \citep{zave1997four}. Assume--guarantee contracts \citep{benveniste2018contracts} explain how a component's guarantees depend on its environment and can support composition. In PC2, capability modules organize such reusable guarantees around tool models and the correspondence between states and the world. Assurance cases then organize the claims and evidence supporting an application of the theorem. They distinguish the argument for safety from confidence in its premises \citep{hawkins2011new}, and examine objections that could defeat that argument \citep{bloomfield2020assurance}. \Cref{sec:responsibilities} uses this perspective to make the dependencies of PC2's guarantees visible, before \cref{sec:signature} considers responsibility for what remains assumed.

\textbf{Measurements and agreed meaning.} Blockchain oracles face a related problem when external facts are used by an executable contract \citep{eskandari2021sok}. Attestation and authenticated access can establish the source and processing of data \citep{zhang2016towncrier,zhang2020deco}, while its interpretation still depends on what that source represents. Similarly, the Airlock's record of an expert verdict establishes which procedure supplied an answer, without by itself establishing the answer's accuracy. Goodhart's law \citep{manheim2018goodhart} and incomplete-contract accounts of AI alignment \citep{hadfieldmenell2019incomplete} explain why an optimized measure or a formal agreement may miss the intended objective. These concerns do not preclude proving adequacy for specified $\rho$ and $\Safe$. They motivate examining whether those choices express the intended protection. The distinction between engineering a system to satisfy a model and assessing a model against an existing system \citep{lee2016fundamental} also informs the different conditions attached to custody, formal interfaces and external behaviour in \cref{sec:composition}.

\textbf{Guaranteed safe AI.} The guaranteed safe AI framework \citep{dalrymple2024gsai} combines a world model, a safety specification and a verifier. PC2 develops this approach for interactions mediated by tools: tool relations describe the relevant possible changes, the contract specifies constraints, and proofs are checked before requested operations run. The formalization can concentrate on the quantities needed by the contract and can use guarantees supplied by capability modules. It does not require a model of the agent's internal reasoning. The coverage and fidelity assumptions nevertheless include every relevant change during execution, including outside changes, which is why restricting access and specifying external behaviour matter.

\textbf{Agents, code and theorem proving.} ReAct \citep{yao2023react} and Toolformer \citep{schick2023toolformer} develop agents that combine reasoning with tool use, while CodeAct \citep{wang2024codeact} uses code as the action interface. PC2 retains the ability to choose actions and adapt to observations, adding an obligation to requests that affect the protected system. LeanDojo \citep{yang2023leandojo}, learned provers \citep{ren2025deepseekproverv2,lin2025goedelproverv2,wang2025kimina} and autoformalization \citep{wu2022autoformalization} supply relevant methods for constructing proofs and specifications. Their application to the obligations and agreements considered here is the practical question developed in \cref{sec:residual}.

\textbf{Commitment and cooperation.} Program equilibrium \citep{tennenholtz2004program} studies how commitments expressed through programs can support cooperation. PC2 provides a mechanism for enforcing agreed constraints on an agent's interactions and for invoking authorized settlement without a further client veto. This addresses one part of the principal--agent problem \citep{jensen1976theory}, in which a principal delegates work without being able to rely on the delegate's incentives. It also connects to the risks and opportunities of agents acting for different principals \citep{hammond2025multiagent}. Choreographic and negotiation languages \citep{gopinathan2026pact,wyse2026commitment} study how interactions and agreements are organized. PC2 concerns their subsequent enforcement, provided the agreement has been expressed as a contract with the required guarantees.

\textbf{Confidential computing.} Confidential virtual machines, including AMD SEV-SNP \citep{amd2020sevsnp}, provide mechanisms for protecting execution and attesting a running configuration to other parties. Work on confidential LLM inference \citep{chrapek2025confidential} studies their use and cost for the models involved. These mechanisms can support the Airlock's host and custody requirements and allow parties to identify the code to which they entrust an agreement. Their concrete configuration must provide the isolation and other execution properties identified in \cref{sec:establishing-assumptions,sec:execution-duration}.

\section{Conclusion}\label{sec:conclusion}

PC2 binds an AI agent to a contract it cannot break. The agent keeps its usual loop, and each of its tool calls carries a proof, checked before the call runs. Under the stated assumptions, from the moment the contract starts, the clauses hold at every instant for arbitrary requests and proposed proofs (\cref{thm:commitment-final}). Since the Airlock isolates the agent, none of these assumptions depends on it: the checker, the Airlock, the contract and the tools are assessed once, and the guarantees hold for every agent, adversarial or not (\cref{cor:every-agent}). For the standard contract, settlement requires proof of the goal, an authorized settlement request requires no further client approval, and the contract otherwise ends as agreed (\cref{thm:completion}). What remains is named. Whether the guarantees reach the world depends on the fidelity of the tools and on the adequacy of the contract, for which we identify concrete deployment conditions (\cref{sec:risks,sec:assurance}) and discuss responsibility for remaining assumptions (\cref{sec:signature}). Whether PC2 is useful depends on whether contracts can be written and proofs found at acceptable cost, which formalizer and prover agents address and a benchmark can measure (\cref{sec:residual}). PC2 guarantees safety under its assumptions, and termination by a deadline that the contract states as a clause is part of it (\cref{cor:every-agent}). Other forms of progress, such as settlement once the goal is reached, require additional conditions (\cref{sec:execution-duration}). This paper treats one acting agent; several agents acting on shared resources, and subcontracts whose obligations survive at any depth, are the subject of a second paper.

\bibliographystyle{pc2reportnat}
\PCBibliography{references}

\newpage
\appendix
\PCLayer{Appendices}
\PCAppendixSetup
\crefalias{section}{appendix}\crefalias{subsection}{subappendix}\crefalias{subsubsection}{subsubappendix}

\crefname{equation}{}{}
\Crefname{equation}{Eq.}{Eqs.}
\crefname{figure}{Figure}{Figures}
\crefname{table}{Table}{Tables}
\crefname{section}{Section}{Sections}
\crefname{theorem}{Theorem}{Theorems}
\crefname{proposition}{Proposition}{Propositions}
\crefname{fact}{Fact}{Facts}
\crefname{lemma}{Lemma}{Lemmas}
\crefname{corollary}{Corollary}{Corollaries}
\crefname{example}{Example}{Examples}
\crefname{remark}{Remark}{Remarks}
\crefname{algorithm}{Algorithm}{Algorithms}
\crefname{enumi}{}{}

\crefname{appendix}{Appendix}{Appendices}

\numberwithin{equation}{section}
\numberwithin{lemma}{section}
\numberwithin{proposition}{section}
\numberwithin{theorem}{section}
\numberwithin{corollary}{section}
\numberwithin{definition}{section}
\numberwithin{algorithm}{section}
\numberwithin{remark}{section}

\section*{Appendices}
\markboth{Appendices}{}

The appendices provide implementation details, worked examples and further developments of the construction. They can be read independently alongside the relevant sections of the main text.
\begin{itemize}[leftmargin=2em,labelsep=.6em,itemsep=3pt,topsep=4pt]
\item[\PCAppendixRef{app:wp}] \textbf{The obligation as a weakest precondition.} Why the obligation asks for exactly the safety of a call, and what this means for policy engines.
\item[\PCAppendixRef{app:proof-check-program}] \textbf{Proof checking.} The auxiliary program that checks requests and makes an accepted proof available for one tool invocation.
\item[\PCAppendixRef{sec:ctrl-design}] \textbf{Control design.} A construction that maintains compliance by idling, correcting the state or terminating the contract, illustrated with a compute budget.
\item[\PCAppendixRef{app:intervals}] \textbf{Time between operations.} How bounded gaps can be included in the operation models, with a corresponding convention for recorded observations.
\item[\PCAppendixRef{app:examples}] \textbf{Correspondence in practice.} Payment and booking examples show how state records support guarantees about safety and completion in the world.
\item[\PCAppendixRef{app:no-isolation}] \textbf{Failure without isolation.} The ways in which actions outside the proof gate can invalidate coverage, fidelity or adequacy.
\item[\PCAppendixRef{app:isolation}] \textbf{A formal account of isolation.} Conditions on concurrent steps and a proof that the agent introduces no effects beyond reference executions.
\end{itemize}

\section{The obligation as a weakest precondition}\label{app:wp}

\paragraph{Weakest preconditions} For a relation $R$ on states and a predicate $Q$, we use the following relational weakest precondition, which requires $R$ to admit some outcome and all its outcomes to satisfy $Q$ \citep{dijkstra1975guarded}:
\[
\mathrm{wp}(R,Q)(s)\;:\iff\;\bigl(\exists s'.\ R(s,s')\bigr)\wedge\bigl(\forall s'.\ R(s,s')\implies Q(s')\bigr).
\]
The outcome is chosen against the requester, and the first conjunct excludes miracles: a relation with no outcome would otherwise establish any $Q$. Admissibility is a weakest precondition, $\mathcal A_\tool(i,\cdot)=\mathrm{wp}(\tran_\tool(i,\cdot,\cdot),\clause)$. Here an outcome of $\tran_\tool$ is a state admitted during an invocation, rather than necessarily a state at which it returns. The existential condition therefore establishes enabledness in the model, not termination of $\exec_\tool$, and the universal condition requires $\clause$ throughout the call when coverage holds. In proof-carrying code, the producer proves a verification condition computed as the weakest precondition of its program for the consumer's safety policy \citep{necula1997pcc}; in PC2, the agent proves that of one call.

\paragraph{What the agent knows} The agent does not know the state, only the view $v$ on which its request is based, the contract, and that the state complies with it, which \cref{thm:commitment-final} makes true. For a call that may start after up to $n$ control steps, the states it cannot rule out are
\[
K_n(v)\;:=\;\bigl\{\,s'\in\Sigma_{\mathrm{act}}\;:\;\exists s\in\Sigma_{\mathrm{act}}.\ \obs(s)=v\wedge\clause(s)\wedge\tran^{\leqslant n}_\toolctrl(s,s')\,\bigr\},
\]
and, by \cref{def:delayed-obligation},
\begin{equation}\label{eq:obl-wp}
\Obl^{\leqslant n}_\tool(i,v)\iff\forall s'\in K_n(v).\ \mathrm{wp}\bigl(\tran_\tool(i,\cdot,\cdot),\clause\bigr)(s').
\end{equation}
The obligation states that the agent knows the call to be admissible, in the sense of the logic of knowledge, where a fact is known when it holds at every state considered possible \citep{fagin1995knowledge}.

\paragraph{The most permissive gate} A \emph{gate} specifies, from a request and the view $v$ on which it is based, whether the call may run: it is a predicate $G(\tool,i,n,v)$. A gate is \emph{safe} if every call it admits is admissible at every state from which it may start: $G(\tool,i,n,v)$ implies $\mathcal A_\tool(i,s')$ for every $s'\in K_n(v)$.

\begin{proposition}[The most permissive safe gate]\label{prop:wp}
The obligation is a safe gate, and every safe gate $G$ satisfies $G(\tool,i,n,v)\implies\Obl^{\leqslant n}_\tool(i,v)$ for all $\tool$, $i$, $n$ and $v$. In particular, if the obligations of a contract form an undecidable predicate, every decidable safe gate rejects some call whose obligation holds.
\end{proposition}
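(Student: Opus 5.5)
The argument unfolds \eqref{eq:obl-wp} and the definition of a safe gate, and then treats the decidability claim as a corollary.

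\emph{The obligation is a safe gate.} Assume $\Obl^{\leqslant n}_\tool(i,v)$ and let $s'\in K_n(v)$. By definition of $K_n(v)$ there is $s\in\Sigma_{\mathrm{act}}$ with $\obs(s)=v$, $\clause(s)$ and $\tran^{\leqslant n}_\toolctrl(s,s')$, and $s'$ is active. Instantiating \cref{def:delayed-obligation} at this pair $(s,s')$ gives $\mathcal A_\tool(i,s')$. Equivalently, \eqref{eq:obl-wp} gives $\mathrm{wp}(\tran_\tool(i,\cdot,\cdot),\clause)(s')$, which is $\mathcal A_\tool(i,s')$ by the identity $\mathcal A_\tool(i,\cdot)=\mathrm{wp}(\tran_\tool(i,\cdot,\cdot),\clause)$. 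That identity holds because the two conjuncts of $\mathrm{wp}$ are exactly $\Xi_\tool$ and $\Theta_\tool$.

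\emph{Every safe gate implies the obligation.} Let $G$ be safe and assume $G(\tool,i,n,v)$. To establish $\Obl^{\leqslant n}_\tool(i,v)$, fix active $s,s'$ with $\obs(s)=v$, $\clause(s)$ and $\tran^{\leqslant n}_\toolctrl(s,s')$. Then $s'\in K_n(v)$ by definition, so safety of $G$ yields $\mathcal A_\tool(i,s')$, which is what the obligation requires. Both directions are therefore pure unfoldings. The only care needed is that $K_n(v)$ and \cref{def:delayed-obligation} quantify over the same states: both require $s,s'\in\Sigma_{\mathrm{act}}$, and the relation $\tran^{\leqslant n}_\toolctrl$ ranges over active intermediate states.

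\emph{The undecidability consequence.} Let $G$ be a decidable safe gate. By the previous step, $G\subseteq\Obl$ pointwise on requests $(\tool,i,n,v)$. Suppose, for contradiction, that $G$ rejected no call whose obligation holds. Then $\Obl\subseteq G$ as well, so $G=\Obl$ as predicates, and the obligation would be decidable, contradicting the hypothesis. Hence some $(\tool,i,n,v)$ satisfies $\Obl^{\leqslant n}_\tool(i,v)$ but not $G(\tool,i,n,v)$.

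The main point needing care is this final step, specifically what "decidable" means. Decidability must be taken over the same domain of requests, a computable encoding of $(\tool,i,n,v)$, for both $G$ and $\Obl$, so that the equality $G=\Obl$ transfers decidability. I would state this encoding convention explicitly. Beyond that, no earlier result is needed except the unfolding in \eqref{eq:obl-wp}.
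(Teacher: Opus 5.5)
Your proposal is correct and matches the paper's proof: both directions unfold the characterization of $\Obl^{\leqslant n}_\tool(i,v)$ as admissibility at every $s'\in K_n(v)$ against the definition of a safe gate. The decidability claim follows, as in the paper, because a decidable safe gate that admitted every call whose obligation holds would coincide with the obligation. Your remark that $G$ and the obligation must be decidable over the same computable encoding of requests is a reasonable clarification of a convention the paper leaves implicit.
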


\begin{proof}
By \cref{eq:obl-wp}, $\Obl^{\leqslant n}_\tool(i,v)$ holds exactly when $\mathcal A_\tool(i,s')$ holds for every $s'\in K_n(v)$. This is what safety of the obligation requires, and what safety of $G$ yields when $G(\tool,i,n,v)$ holds. A decidable safe gate that admitted every call whose obligation holds would coincide with the obligation, which would then be decidable.
\end{proof}

The proposition is relative to the models and to the view: models that admit fewer outcomes while remaining enabled, views that show more and shorter windows all weaken the obligation, and a gate that also used earlier views could admit more calls, which a contract obtains by putting the relevant history in the state. It concerns one call, while the contract must hold for the rest of the execution. The two agree because the control step keeps $\clause$ from every compliant active state (\cref{ass:initial-control}): a call that keeps $\clause$ leaves a state from which the contract can be kept, so a safe gate needs no look-ahead. In the terms of supervisory control, $\clause$ is controllable, and the obligation is the maximally permissive supervisor, which excludes exactly the calls that fail enabledness or preservation at some state in $K_n(v)$ \citep{ramadge1987supervisory}.

\paragraph{Policy engines} A policy engine admits calls by a fixed rule that it decides itself: a decidable gate, whose author must show that it is safe. By \cref{prop:wp}, a safe rule admits only calls whose obligation holds, and if its safety is proved, each call it admits comes with a proof of its obligation: evaluate the rule on the request, and apply the proof of safety. An agent constrained by a safe policy can therefore run under PC2, which admits, in addition, every other call whose obligation is proved: the agent may follow any strategy whose calls keep the contract. A fixed rule rejects safe calls wherever it is stricter than the obligation, as it must be wherever the obligation is undecidable. In the travel contract, a rule that caps each of three nights at 600 EUR keeps the budget of 2000 EUR, and rejects a first night at 900 EUR followed by two at 500 EUR, although these keep the budget too. The exact rule is the obligation itself: a policy engine can enforce it only where it is decidable, and only after someone has derived it from the contract, while PC2 derives it and checks a proof of it.

\section{Details on the proof-checking program}\label{app:proof-check-program}

\begin{algorithm}[t]
\caption{Details on proof checking.}
\label{alg:proof-checking}
\begin{algorithmic}[1]
  \State {\bfseries shared variables} $r \gets \mathsf{none}$; $h \gets \mathsf{none}$
  \Statex
  \State {\bfseries def} $\mathsf{check}(\tool,i,k_0,n,\pi,k_{\mathsf{last}},v)$ \Comment{runs in its own process}
  \State \qquad $k \gets \mathsf{len}(v)-1$ \Comment{index of the last published view}
  \State \qquad {\bfseries if} $\tool \in {\cal T}$ \; {\bfseries and} $\; k_{\mathsf{last}} < k_0 \leq k \leq k_0 +n$
  \Statex \PCAlgorithmNote{Checks that $i:I_\tool$, so we can build $\Obl_{\tool}^{\leqslant n}(i,v[k_0])$}
  \State \qquad\qquad  {\bfseries if} $\mathsf{type\_check}(i, I_\tool)$
  \Statex \PCAlgorithmNote{Checks that $\pi$ is a valid proof for $\Obl_{\tool}^{\leqslant n}(i,v[k_0])$}
  \State \qquad\qquad\qquad {\bfseries if} $\mathsf{kernel\_check}(\pi, \Obl_{\tool}^{\leqslant n}(i,v[k_0]))$
  \State \qquad\qquad\qquad\qquad $r \gets \mathsf{correct}$; $\mathsf{return}$
  \State \qquad $r \gets \mathsf{incorrect}$
  \Statex
  \State {\bfseries def} $\mathsf{start\_proof\_check}(\tool,i,k_0,n,\pi,k_{\mathsf{last}},v)$
  \State \qquad {\bfseries if} $r = \mathsf{computing}$ {\bfseries then} $\mathsf{kill\_process}(h)$
  \State \qquad $r \gets \mathsf{computing}$ \Comment{any earlier verdict is dropped}
  \State \qquad $h \gets \mathsf{new\_process}(\mathsf{check},(\tool,i,k_0,n,\pi,k_{\mathsf{last}},v))$
  \Statex
  \State {\bfseries def} $\mathsf{is\_proof\_correct}()$
  \State \qquad $\mathsf{return} ~ r=\mathsf{correct}$
  \Statex
  \State {\bfseries def} $\mathsf{consume\_proof}()$ \Comment{an accepted proof authorizes one call}
  \State \qquad $r \gets \mathsf{none}$
\end{algorithmic}
\end{algorithm}

\Cref{alg:proof-checking} gives the routines used by \cref{alg:pc2} to check a request while control continues. The check concerns a particular tool, input and recorded view. Its correctness therefore requires both a sound checker, as in \cref{ass:sound-checking}, and the connection between the checked statement and the call required by \PCAssumptionRef[(iii)]{ass:execution-integrity}.

\textbf{Forming the obligation.}
For a request $(\tool,i,k_0,n,\pi)$, the Airlock retrieves the unchanged view $v[k_0]$ and checks that $\tool\in\mathcal T$ and $i\in I_\tool$. It then forms the statement $\Obl_\tool^{\leqslant n}(i,v[k_0])$ from the contract's agreed definitions. The submitted proof must establish this statement under the permitted axioms. Checking a statement supplied by the requester would not suffice, since it might express a different condition.

This requirement also concerns how a statement is represented. Source text acquires its meaning from the notations and definitions in scope. For example, a macro that replaces the written obligation with \texttt{True} would make \texttt{trivial} a valid proof of the resulting statement, while establishing nothing about the requested call. Agents have exploited changes to notation in a Lean submission system in this way \citep{paglieri2026swarm}. The issue is the connection between the intended statement and the one checked, a distinction discussed by \citet{pollack1998believe,wiedijk2012pollack}.

In Lean, the obligation can instead be formed directly as an expression in the kernel's language, using the fixed contract definitions and an encoding of the recorded view. That encoding must faithfully represent the view. A kernel expression does not depend on submitted notation or macros for its meaning. Lean's export format records such expressions and their dependencies, and \texttt{comparator} checks a submitted solution against an independently supplied statement and permitted axioms \citep{leanfro2025lean4export,leanfro2025comparator}. These facilities support the check, provided the statement against which the solution is compared is formed independently of untrusted source.

\textbf{From a verdict to a call.}
Each check uses a fixed request and recorded view, and its verdict belongs only to that request. In the pseudocode, $\mathsf{kill\_process}$ cancels the previous check before its replacement starts, so the cancelled check cannot subsequently publish a verdict. Replacing a request also discards any earlier verdict. The execution loop checks that the current request remains within its control window before running it, and $\mathsf{consume\_proof}$ prevents its reuse.

The invocation must use the same tool and input that appear in the checked obligation. For an input containing executable code, this also requires the runtime behaviour of that code to agree with the definitions used in the proof, a requirement on the compiler and runtime. Running the checker in a separate process allows control to continue while a proof is checked. The checker must also be prevented from altering the contract, recorded views or execution of a running routine. Process separation supports these requirements, but must be accompanied by the access restrictions needed to satisfy \cref{ass:execution-integrity,ass:isolation}.

\textbf{Independent checking.}
The same formal proof can be checked by several implementations. If acceptance requires all of them to accept the same statement under the agreed definitions and axioms, soundness of any one of them suffices for soundness of the combined check, provided their results are combined correctly. Lean provides export-based checking and, from version 4.35, an option to run several bundled checkers \citep{lean2026v435}. This offers a way to reduce reliance on one implementation of the checking rules.

Rocq also has a separately implemented checker through the MetaRocq project. Its \emph{safe checker} is written and verified in Rocq for PCUIC, a formal calculus underlying Rocq. Its correctness and completeness results concern that calculus under their stated assumptions, including strong normalization \citep{sozeau2020coq,sozeau2025checking}. Within this scope it provides another way to support \cref{ass:sound-checking}.

Using different proof assistants requires an additional connection between their statements. For example, recent work translates MiniF2F statements into Rocq, with human review of their semantic agreement \citep{viennot2025}. To use such a translation for a PC2 obligation, one also needs a proof of the translated statement and a justification that it expresses the intended obligation. Rechecking the same formal statement with another kernel does not require this translation.

\section{A possible design for the control \texorpdfstring{$\toolctrl$}{ctrl}}\label{sec:ctrl-design}

The control routine of \cref{sec:control-tool} keeps a contract compliant while the agent prepares calls and their proofs are checked. Depending on the state it reads, it waits, corrects the state or closes the contract. We now explain how these choices can establish the certificate required by \cref{ass:initial-control}, and illustrate the argument with a compute budget. The construction connects the clause to the interventions available to control: every compliant active state must leave at least one of these choices safe.

\textbf{Establishing the certificate.}
Fix the idle period $p$ and the region $C_{\mathrm{idle}}$ of \cref{sec:control-tool}. Outside $\mathit{Act}$, a compliant state belongs to $C_{\mathrm{idle}}$, so the model of one idle period admits only compliant states. In $\mathit{Act}\cap\mathit{Fix}$, the correcting routine must preserve $C$ throughout its execution and return to $C_{\mathrm{idle}}$. In $\mathit{End}$, the termination routine must preserve $C$ until the contract closes, including at closing. Each branch must also be enabled wherever it is selected. The model $\tran_\toolctrl$ covers the whole control period, from its initial reading until it is ready to hand over to the next invocation. The branch guarantees must therefore include the outside changes and costs over that whole period. Together, these conditions give the controller line of \cref{ass:initial-control}.

Initialization supplies its other role. From $\mathit{Ready}$, the initializer establishes $C$ before making the contract active, as when it sets the status to $\mathsf{ongoing}$ at the end of initialization. From the other inactive states it leaves the contract inactive. Providing an enabled model for both cases gives the initializer requirements of \cref{ass:initial-control}. Only the tests used by the routine, namely $\mathit{Ready}$, $\mathit{Act}$ and $\mathit{Fix}$, need to be decidable. The clause and $C_{\mathrm{idle}}$ can remain more general predicates, with their relationship to these tests established by the certificate.

This construction includes the simpler controllers discussed in \cref{sec:control-tool}. If outside changes preserve $C$, then $C_{\mathrm{idle}}=C$ and control can simply wait, as in the stability condition of rely--guarantee reasoning \citep{jones1983rely}. If no correction is supplied, control waits or terminates, as in the travel contract. When correction is available, returning well inside $C_{\mathrm{idle}}$ leaves room for further activity before another intervention is needed.

When a control invocation from a compliant active state returns without closing the contract, the state is either in $C_{\mathrm{idle}}$ after correction or in a state reached by one idle period from that region. This describes the states at the completion of control, as illustrated in \cref{fig:world-correct}(a). A requested tool may subsequently move the state elsewhere within $C$. The delayed obligations in panel~(b) use the full model $\tran_\toolctrl$, which must also include the intermediate states of control invocations.

\textbf{Example: a compute budget.}
A company lets an agent run cloud jobs within a budget $B$, with the contract to close by a deadline $d$. The state records the bill $b$, an upper bound $r$ on the combined charging rate of the running jobs, the time $t$ and the contract's status $\ell$. Outside changes can increase the bill at rate at most $r$ or stop jobs, but only an authorized call can add jobs or increase the rate. In this example we model readings and branch selection as instantaneous, while stopping the jobs can take up to $\delta$ units of time. The contract must reserve enough money and time for this shutdown.

To express that reserve throughout execution, the state also records a remaining shutdown allowance $h$. While jobs run normally, $h=\delta$. Once shutdown begins, $h$ counts down with elapsed time, and the model requires the jobs to have stopped by the time it reaches zero. If no jobs remain, $r=h=0$. We use the clause
\[
\clause(s)\quad:\iff\quad
b+rh\le B
\quad\wedge\quad
\bigl(\ell=\mathsf{ongoing}\ \implies\ t+h\le d\bigr).
\]
The term $rh$ reserves the maximum further charge during shutdown. Thus the clause protects the budget while retaining the means to stop, and the agent's calls cannot spend that reserve.

Control can idle for $p$ whenever
\[
b+r(p+h)\le B,
\qquad
t+p+h\le d.
\]
These decidable tests certify membership in $C_{\mathrm{idle}}$: throughout the idle period the bill can rise by at most $rp$, while enough money and time remain for shutdown. Use the complement of this tested region for $\mathit{Act}$. When only the budget test fails, there is still time to correct the state. A simple fix stops the running jobs and leaves the contract active. When the time test fails, control stops the jobs and closes the contract instead.

Both interventions preserve the clause. After $0\le u\le h$ time units during shutdown, its model gives $b'\le b+ru$, $r'\le r$, $h'\le h-u$ and $t'=t+u$, hence
\[
b'+r'h'\le b+ru+r(h-u)=b+rh\le B,
\qquad
t'+h'\le t+h\le d.
\]
At completion, $r'=h'=0$. In the correcting branch, the time test held before the intervention, so $t'+p\le t+h+p\le d$. The resulting state therefore belongs to $C_{\mathrm{idle}}$. In the terminating branch, shutdown finishes within the reserved time and the contract closes by $d$, with the bill still at most $B$. Initialization starts the contract only when the waiting tests have been established; otherwise it leaves it inactive.

The same accounting applies to a tool that launches more jobs: its obligation requires the clause, including the shutdown reserve, throughout the call. The call need not leave enough room to wait for another whole period, because control can intervene immediately afterwards. This distinguishes the protection supplied by the contract from the additional room for uninterrupted work supplied by $C_{\mathrm{idle}}$.

\section{Time between operations}\label{app:intervals}

\PCAssumptionRef[(i)]{ass:execution-integrity} takes the operations of the Airlock to be contiguous: each starts when the previous one ends. In an implementation, a routine may return before the host is ready to invoke the next one, and the system can continue to change during this interval. We account for a gap of at most $t_0$ by including it in the preceding operation and enlarging the tool models accordingly. Together with the sampling convention specified below, this lets us apply the results of \cref{sec:guarantees} to the extended operations. Their models and control certificate must cover the added intervals.

\textbf{Extended operations.}
Between the return of a routine and the invocation of the next one, no tool routine runs. The state can still change, by outside changes: time advances, prices move, mail arrives. Over a time of at most $t_0$, these are the changes that the model of $\mathsf{idle}$ admits (\cref{sec:model}): $\tran_{\mathsf{idle}}(t_0,z,s'')$ holds when outside changes can lead from $z$ to $s''$ within a time of at most $t_0$. In particular, $\tran_{\mathsf{idle}}(t_0,z,z)$ holds for every $z$, since that time may be zero.

For each tool $\tool\in\tools\cup\{\toolctrl\}$, let $\tool'$ be the corresponding extended tool, with the same input space. An operation of $\tool'$ executes the routine $\exec_\tool$ and includes the subsequent gap. Its model is
\begin{equation}\label{eq:extended-model}
\tran_{\tool'}(i,s,s'')\;:\iff\;\exists z.\ \tran_\tool(i,s,z)\wedge\tran_{\mathsf{idle}}(t_0,z,s'').
\end{equation}
The model of $\tool'$ admits every state that the model of $\tool$ admits, taking $z=s''$, and every state that outside changes can reach within $t_0$ from such a state, in particular from the state in which the routine returns. It is enabled exactly where the model of $\tool$ is. Let $\contract'$ be the contract $\contract$ with the tools $\tool'$, for $\tool\in\tools$, and the control tool $\toolctrl'$. Its state space, clauses, view and inactivity test are those of $\contract$.

An operation of $\tool'$ starts when the Airlock invokes $\exec_\tool$, as an operation of $\tool$ does, but it ends only when the Airlock invokes the next routine, or when the contract closes. It consists of the execution of the routine followed by the time between operations. The underlying tool routine is unchanged, but its extended operation includes the time spent by the host before the next invocation. If the routines run on the intervals $[\tau_1,\tau'_1],[\tau_2,\tau'_2],\dots$, with $\tau'_k\le\tau_{k+1}\le\tau'_k+t_0$, the operations of $\contract$ are these intervals, while those of $\contract'$ are $[\tau_1,\tau_2],[\tau_2,\tau_3],\dots$. The extended operations are therefore contiguous, as required by \PCAssumptionRef[(i)]{ass:execution-integrity}.

\textbf{Readings at operation boundaries.}
To use \cref{lem:window}, a recorded view must describe the state at a boundary between the operations counted by its window. We therefore use the following sampling convention for the extended operations: the reading associated with a completed operation is taken at the handoff to the next routine, and views and the inactivity test use that reading. Sampling and dispatch are treated as a single boundary event in this abstraction. This convention, together with the bound $t_0$, is a condition of the construction. It connects the readings to the extended operations as required by \PCAssumptionRef[(ii)]{ass:execution-integrity}. Together with the exclusion of intervening requested calls in \PCAssumptionRef[(iii)]{ass:execution-integrity}, this ensures that a call based on an earlier recorded view is separated from its reading only by complete operations of $\toolctrl'$, so the control-window argument applies.

\textbf{Guarantees for the extended operations.}
With this sampling convention, we apply the definitions and results of \cref{sec:model,sec:guarantees} to $\contract'$. In \cref{ass:transition-coverage}, $\effect_{\tool'}$ now describes the states reached throughout the extended operation, including the gap. The Airlock forms the obligations of $\contract'$ from the models $\tran_{\tool'}$, and their windows from $\tran_{\toolctrl'}$. Under \cref{ass:sound-checking,ass:transition-coverage,ass:initial-control,ass:execution-integrity} for $\contract'$, \cref{thm:commitment-final} gives the clauses at every instant from the start, including the time between operations, which now lies within operations of $\contract'$. Since $(\tool|_{\pre})'=\tool'|_{\pre}$, the settlement tool of a standard contract remains goal-gated, and since outside changes do not modify the label, \cref{thm:completion} applies too. For $t_0=0$, $\tran_{\mathsf{idle}}(0,\cdot,\cdot)$ is the identity, and $\contract'$ is $\contract$.

These guarantees require the assumptions to hold for $\contract'$. Coverage must account for the routine and the following gap, and the control certificate must establish initialization and compliance for the enlarged model $\tran_{\toolctrl'}$. For an idling branch of duration $p$, this requires accounting for up to $p+t_0$ of outside changes. The correcting and terminating routines must be checked against their enlarged models as well. The agent's obligations also use the enlarged models, so an accepted proof establishes compliance during the call and its following gap.

To apply the world-safety result of \cref{thm:world-final}, $\widehat{\effect}_{\tool'}$ must likewise describe the world and state throughout the extended operation, and \cref{ass:tool-fidelity} must hold over that whole interval. Preserving the correspondence during the added gap then connects the compliance guarantee to safety in the world through \cref{ass:contract-adequacy}.


\section{Two worked examples of correspondence}\label{app:examples}

This appendix works through two correspondences of the kind studied in \cref{sec:world}. In the payment ledger, the state bounds what happens in the world, as safety requires. In the booking confirmations, what the state records must be true of the world, as completion requires.

\subsection{A payment ledger}\label{ex:payment-ledger}

The structure needed is simple, and we describe it in words. The contract lets the agent pay merchants within a budget $B$, fees included, through a payment provider that issues \emph{payment instructions} with a spending limit, such as single-use virtual cards, pre-authorizations, or payment mandates with a maximum amount \citep{ap22025}.
\begin{itemize}[leftmargin=*,itemsep=1pt,topsep=2pt]
\item \emph{The state} contains a ledger kept by the Airlock: a list of entries, each with three fields, an identifier, a payee and a limit.
\item \emph{The world} contains the provider's records for the account: for each instruction it has issued, its identifier, payee and limit, and the charges posted against it.
\item \emph{The tool} $\mathsf{pay}(m,q)$ pays the merchant $m$ up to $q$. Its routine appends the entry $(j,m,q)$ to the ledger, under a fresh identifier $j$, and only then asks the provider to issue the instruction $(j,m,q)$. A retry reuses $j$, which the provider treats as an idempotency key. The model of $\mathsf{pay}$ admits the ledger unchanged or with the new entry.
\end{itemize}
For an identifier $j$, write $r_j(s)$ for the limit of its entry in the ledger, $u_j(w)$ for the limit the provider has issued under $j$, and $a_j(w)$ for the total charged under $j$, each zero if there is no such entry, instruction or charge. The clause bounds what the ledger has reserved, and safety bounds what has actually been charged:
\begin{equation}\label{eq:budget-quantities}
b(s):=\sum_j r_j(s),\qquad a(w):=\sum_j a_j(w),
\qquad \clause(s):=b(s)\leq B,\qquad
\Safe(w):=a(w)\leq B.
\end{equation}
The budget bound remains part of the clause at closing, so settlement or termination cannot discharge the obligation to respect it.
With this model, admissibility of $\mathsf{pay}(m,q)$ at a compliant state reduces to $b(s)+q\leq B$, as for the tools of \cref{sec:tool-families}. When the view reveals the reserved total, the ordinary obligation reduces to the same inequality. In the travel contract, $b(s)$ is the amount $\mathit{spent}$ in \cref{sec:travel}: it counts reservations, not receipts. The correspondence relates the provider's records to the ledger:
\begin{equation}\label{eq:budget-correspondence}
\corresponds(w,s)\quad\iff\quad
\forall j.\quad a_j(w)\leq u_j(w)\leq r_j(s).
\end{equation}
Every issued limit is covered by an entry of the ledger, and every charge stays within its limit.

Two conditions maintain this correspondence. The \emph{reservation discipline}: only the Airlock's routines ask the provider to issue instructions, each after recording its entry; entries are never removed or lowered, issued limits never change, and retries reuse their identifier. The \emph{service condition}: the provider lets the charges under an instruction, fees and retries included, reach at most its limit, and accepts no charge without an instruction. Both conditions apply at every instant, to all routines and outside interactions. Charges may arrive after a routine returns, so entries are retained to cover them.

\begin{proposition}[Preserving budget correspondence]\label{prop:budget-correspondence}
The reservation discipline and service condition preserve \cref{eq:budget-correspondence} whenever it holds initially. In particular, they imply the first line of \cref{ass:tool-fidelity}.
\end{proposition}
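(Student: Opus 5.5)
The plan is to prove the invariant pointwise in the identifier $j$. The correspondence \cref{eq:budget-correspondence} is a conjunction over $j$ of two inequalities, $a_j(w)\le u_j(w)$ and $u_j(w)\le r_j(s)$. I will show that each inequality, once true, stays true at every instant of any enabled invocation (of $\mathsf{pay}$, $\mathsf{idle}$ or $\toolctrl$) and across outside interactions. The second claim of the proposition, the first line of \cref{ass:tool-fidelity}, then follows at once. That line asks exactly that $\corresponds$ be preserved along every joint effect $\widehat{\effect}_\tool(i,w,s,w',s')$ starting from $\corresponds(w,s)$. Both conditions are stated to hold at every instant and for all routines and outside interactions, so it suffices to argue about an arbitrary trajectory $\tau\mapsto(w(\tau),s(\tau))$ and to examine the events that can change $r_j$, $u_j$ or $a_j$.

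\emph{Step 1: the ledger only grows.} By the reservation discipline, entries are never removed or lowered. Hence $r_j(s(\tau))$ is non-decreasing in $\tau$ for every $j$, and this holds also for identifiers with no entry, where $r_j=0$. Appending a fresh entry $(j,m,q)$ raises $r_j$ from $0$ to $q$.

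\emph{Step 2: the inequality $u_j\le r_j$.} $u_j$ changes only when the provider issues an instruction under $j$: issued limits never change, and the provider accepts nothing except requests from the Airlock's routines. By the discipline, such a request is made only after the entry $(j,m,q)$ has been recorded. Retries reuse $j$, and idempotency means no second, larger limit is issued under the same key. So at the instant $u_j$ jumps from $0$ to $q$, we already have $r_j\ge q$. By Step 1, this remains true afterwards.

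\emph{Step 3: the inequality $a_j\le u_j$.} $a_j$ changes only through charges. The service condition allows no charge without an instruction, so $a_j=0$ while $u_j=0$. Once the instruction is issued, charges under it, fees and retries included, reach at most its limit, and that limit is fixed. Combining the two steps gives the chain $a_j\le u_j\le r_j$ at every instant.

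I expect the main obstacle to be the bookkeeping at the instants where several quantities change at once or in close succession. Examples are a charge arriving after the routine returns, or concurrently with a later $\mathsf{pay}$ call. Another is a retry racing with the original request, where I must make the idempotency-key reading of ``retries reuse their identifier'' do the work of preventing a second issuance under $j$. I would handle this by treating each atomic world event separately, without reasoning about whole invocations. Because each inequality involves a single $j$, unrelated events leave it untouched. Fresh identifiers ensure that a new entry never collides with an existing instruction.
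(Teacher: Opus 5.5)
Your proposal is correct and follows essentially the same argument as the paper. Reservations never decrease and are recorded before issuance, and issued limits stay fixed, which gives $u_j\le r_j$. The service condition gives $a_j\le u_j$ at every instant. Your per-identifier, per-event bookkeeping, including the handling of retries via the idempotency key, only makes explicit what the paper's four-sentence proof leaves implicit.
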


\begin{proof}
For an existing instruction, $u_j\leq r_j$ persists because its issued limit is fixed and its reservation never decreases. A new instruction satisfies the inequality because reservation precedes issuance. Before issuance, $u_j=0\leq r_j$. The service condition gives $a_j\leq u_j$ at every instant, so correspondence holds throughout, including between routines.
\end{proof}

Summing \cref{eq:budget-correspondence} gives $a(w)\leq b(s)$, so the clause implies $\Safe(w)$, establishing the safety implication of \cref{ass:contract-adequacy}. A new entry with limit $q$ requires $b(s)+q\leq B$ at every invocation state admitted by the view and control window. Under \cref{ass:sound-checking,ass:transition-coverage,ass:initial-control,ass:execution-integrity}, \cref{thm:commitment-final} preserves this budget clause. Together with \cref{prop:budget-correspondence}, and a start in which the empty ledger describes the account, as for a fresh account with no issued instructions or charges (\cref{ass:tool-fidelity}), this yields the world-safety guarantee. Recording expenditure only when a receipt arrives would instead leave issued instructions temporarily uncovered. The ledger reserves a quantity before it can be consumed, as the escrow method of transaction processing does for bounded quantities \citep{oneil1986escrow}, and records an action before performing it, as write-ahead logging does \citep{mohan1992aries}.

The reservation discipline concerns the routine, which can be verified, and the custody of the provider's credentials, which only the Airlock may use (\cref{sec:composition}). The service condition concerns the provider: it is a property of a service that the parties do not build, which the provider must support or the parties must accept. Where a provider allows charges to exceed a limit by a tolerance, the routine must record the limit plus that tolerance.

The guarantee can extend beyond closing. Suppose the hypotheses of \cref{thm:world-final} hold and the session closes at time $\tau_c$ with state $s_c$. If no new instructions can be issued afterwards, existing issued limits remain fixed, and the service condition continues to hold, then for every $\tau\geq\tau_c$,
\begin{equation}\label{eq:postclosing-budget}
a(w(\tau))\leq b(s_c)\leq B.
\end{equation}
The reservations at closing already cover the charges that may still arrive. This protection therefore survives without the Airlock continuing to run, provided the restrictions on issuance and charging remain effective.

\subsection{Booking confirmations}\label{app:booking}

The ledger construction controls which instructions can produce charges. A booking confirmation raises a different question: what does a service's response establish, and for how long?

\begin{example}[Booking confirmations]\label{ex:booking-confirmations}
Let $R(s)$ be a finite set of recorded confirmations $(b,d)$, where $b$ identifies a particular reservation and $d$ is its expiry time. Write $\mathsf{Issued}(w,b,d)$ when the world's history contains the agreed service's confirmation of this reservation until $d$, and $\mathsf{Held}(w,b)$ when the hotel actually holds it. With $t_{\worlds}(w)$ denoting time in the world, consider
\begin{equation}\label{eq:booking-correspondence}
\corresponds_{\mathsf{book}}(w,s)\quad\iff\quad
\forall (b,d)\in R(s).\quad
\mathsf{Issued}(w,b,d)\ \land\
\bigl(t_{\worlds}(w)<d\implies\mathsf{Held}(w,b)\bigr).
\end{equation}
The first conjunct states what the service reported. The second connects that report to a reservation still held before expiry. After expiry, the record continues to describe a past confirmation, but no longer asserts that the reservation is held.

The \emph{recording condition} requires entries to be added only from authenticated confirmations of the agreed service, with the reservation and expiry recorded correctly. Existing entries may be retained or removed, and any replacement must meet the same condition. The \emph{reservation condition} requires every confirmed reservation to be held until its stated expiry, including during other operations and between routines. This condition concerns both the service issuing the confirmation and the hotel holding the reservation. A confirmation with an expiry is a lease: a grant that holds until a stated time \citep{gray1989leases}.
\end{example}

\begin{proposition}[Preserving booking correspondence]\label{prop:booking-correspondence}
The recording and reservation conditions in \cref{ex:booking-confirmations} preserve \cref{eq:booking-correspondence} whenever it holds initially, provided the world's history retains issued confirmations. They therefore establish the booking component of the first line of \cref{ass:tool-fidelity}.
\end{proposition}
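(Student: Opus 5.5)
The argument is a pointwise invariance check along the execution, done entry by entry for the record $R(s)$ and following the pattern of \cref{prop:budget-correspondence}. Fix an interval during which we want \cref{eq:booking-correspondence} to persist. This may be an enabled invocation of a tool or of control, or a stretch between routines. Suppose $\corresponds_{\mathsf{book}}(w,s)$ holds at its start. We show that at every later instant $\tau$, every pair $(b,d)\in R(s(\tau))$ satisfies both conjuncts with respect to $w(\tau)$. Since the correspondence is a universally quantified conjunction over a finite set, the check reduces to a single arbitrary entry present at $\tau$.

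We split the entries into two cases according to their origin.

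\emph{Old entries.} Suppose $(b,d)$ was already in $R(s)$ at the start and has been retained. The initial correspondence gives $\mathsf{Issued}(w,b,d)$. Because the world's history retains issued confirmations, this yields $\mathsf{Issued}(w(\tau),b,d)$.

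\emph{New entries.} Suppose $(b,d)$ was added, or substituted as a replacement, during the interval. The recording condition says it came from an authenticated confirmation of the agreed service, recorded correctly. So that confirmation is in the world's history at the moment the entry is added. Retention of history then carries $\mathsf{Issued}$ forward to $\tau$.

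Removed entries impose no constraint, since the quantifier ranges only over the current $R$. In both cases the first conjunct holds at $\tau$.

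For the second conjunct, assume $t_{\worlds}(w(\tau))<d$. Having established $\mathsf{Issued}(w(\tau),b,d)$, we know the agreed service confirmed reservation $b$ until $d$. The reservation condition requires every confirmed reservation to be held until its stated expiry, at every instant, including during other operations and between routines. This gives $\mathsf{Held}(w(\tau),b)$.

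Notice that the second conjunct does not depend on the initial correspondence. It follows from $\mathsf{Issued}$ together with the reservation condition. This is why replaced entries cause no difficulty.

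The step needing most care is matching the reservation condition exactly to the meaning of $\mathsf{Issued}$. In particular, ``confirmed'' in that condition has to be read as ``$\mathsf{Issued}$ holds in the world's history'', with the same reservation and expiry recorded. Then the two conditions chain without any gap, for instance a confirmation recorded with a later expiry than the one actually issued. The phrase ``recorded correctly'' in the recording condition is what excludes that mismatch, so it should be invoked explicitly at this point.

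Finally, the preservation holds at every instant of every enabled invocation, with joint effects $\widehat{\effect}_\tool$ as in \cref{sec:world}. This gives the implication \cref{eq:fidelity} restricted to the booking component, that is, the booking component of the first line of \cref{ass:tool-fidelity}.
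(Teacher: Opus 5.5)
Your proposal is correct and follows the paper's proof. It splits each surviving entry into an initial one (where you use the initial correspondence) or a correctly recorded new one (where you use the recording condition), carries $\mathsf{Issued}$ forward by retention of history, and obtains $\mathsf{Held}$ before expiry from the reservation condition. Your remarks on removed and replaced entries, and on matching the recorded expiry to the issued one, spell out details the paper's shorter argument leaves implicit.
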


\begin{proof}
Consider any later instant and an entry $(b,d)$ still in the record. It was present initially or added from a correctly recorded confirmation. In either case, $\mathsf{Issued}(w,b,d)$ holds, since history retains that event. If $t_{\worlds}(w)<d$, the reservation condition gives $\mathsf{Held}(w,b)$. Both conjuncts therefore hold throughout execution, including between routines.
\end{proof}

Authentication supports the recording condition, but does not establish the reservation condition. It establishes where a report came from, not that the report is true, a distinction familiar from authenticated data feeds and blockchain oracles \citep{zhang2016towncrier,zhang2020deco,eskandari2021sok}. The service may have sent an authentic confirmation even though the hotel later cancels the booking. Conversely, the reservation condition need not prevent cancellation after expiry. For a goal that requires the reservation still to be held, the contract must also establish that the confirmation has not expired when settlement begins. For instance, a further correspondence claim $t_{\worlds}(w)\leq t(s)$, where $t(s)$ is a conservative time bound, makes $t(s)<d$ sufficient. This bound must cover delays before invocation: an earlier clock reading alone does not provide it. Together with \cref{eq:booking-correspondence}, it supplies the connection from a recorded confirmation to $\mathsf{Held}$ needed by \cref{cor:world-completion}.

For the travel contract, this gives the following goal in the world.

\begin{example}[Settlement of a travel contract]\label{ex:world-completion}
For the travel contract, let $\mathit{Nights}$ be the requested nights, and let $\mathsf{RoomHeld}(w,n)$ mean that a hotel holds a reservation for the traveller on night $n$. A possible world goal is
\begin{equation}\label{eq:travel-world-goal}
\goal_{\worlds}(w):=
\bigl(\forall n\in\mathit{Nights}.\ \mathsf{RoomHeld}(w,n)\bigr)
\land\mathsf{VisaGranted}(w),
\end{equation}
where the second conjunct means that the traveller's visa has been granted. Under its hypotheses, \cref{cor:world-completion} ensures that these conditions hold when settlement begins. A goal expressed through recorded confirmations supports this conclusion only if correspondence connects those records to reservations still valid at that instant.
\end{example}

\section{What goes wrong without isolation}\label{app:no-isolation}

The examples of \cref{app:examples} show that relating the contract's state to the world requires conditions on the routines and on the services they use. In the payment example, the argument relies on every issued instruction being backed by a ledger entry and on the provider respecting its spending limit. To apply that argument to a system used by an agent, we must also consider the access that the agent has to the account and the ledger. If it can issue a payment directly or erase an entry, the fact that the Airlock's payment routine follows the prescribed procedure is no longer sufficient.

We are interested in systems where this access can be restricted by design. For instance, keeping the payment credentials and ledger under the Airlock's control can ensure that instructions are issued only by its routines. The tool model $\tran_\tool$ can then describe the state changes caused by these operations without also including arbitrary modifications of the ledger by the agent. The obligation for a proposed payment checks whether the modeled changes respect the budget, while the provider's spending limits connect the recorded amounts to actual charges. \Cref{sec:composition} develops capability modules, which combine access to a resource with guarantees about the operations performed on it. This appendix examines why protecting this access matters, by following what an action outside the proof gate does to the assumptions of \cref{thm:world-final}, as illustrated in \cref{fig:mediation}.

\textbf{Ungated interactions.}
Consider an agent that can invoke a routine in two ways. It can submit a request to the Airlock, which checks the corresponding obligation before executing the call, or invoke the routine directly without providing a proof. We call the second form of access \emph{ungated}. To see how it can affect the guarantees, suppose that an invocation authorized by the Airlock starts from a compliant active state $s$ with $\corresponds(w,s)$. While it runs, an ungated action makes the world unsafe, so that $\neg\Safe(w')$. The running invocation may be control, as in \cref{fig:mediation}, or a requested tool. In either case, $\widehat{\effect}_\tool$ includes this change, since it describes everything that can happen to the world and state during the invocation.

If the state remains unchanged, it still satisfies $\clause(s)$. Under \cref{ass:contract-adequacy}, it can therefore no longer describe the unsafe world:
\begin{equation}\label{eq:unsafe-bypass}
\clause(s)\land\neg\Safe(w')
\quad\Longrightarrow\quad \neg\corresponds(w',s).
\end{equation}
This violates fidelity, \cref{ass:tool-fidelity}, because correspondence held at the start of the enabled invocation and has been lost while it runs. Detecting the change afterwards cannot restore a guarantee that was required to hold at every instant.

If instead the state changes to $s'$ and $\corresponds(w',s')$ holds, adequacy requires $\neg\clause(s')$. The question is then whether the tool model $\tran_\tool$ includes this state. Omitting it violates coverage, \cref{ass:transition-coverage}, since it is a possible effect of the running invocation. Including it means that the invocation is not admissible from $s$. For control this contradicts \cref{ass:initial-control}, and for a requested tool it contradicts the admissibility established from its accepted obligation under the assumptions of \cref{thm:commitment-final}. Finally, if we allow $\corresponds(w',s')$ to hold while keeping $\clause(s')$ true, the unsafe world is described by a compliant state and adequacy itself fails. The three branches of \cref{fig:mediation} illustrate these different ways in which an ungated action can invalidate the assumptions.

\textbf{Restricting effects.}
We could try to account for ungated access by enlarging $\tran_\tool$ to cover all the state changes it makes possible. Suppose, however, that these changes include some $s'$ with $\effect_\tool(i,s,s')$ and $\neg\clause(s')$. For any model that includes this outcome, $\mathcal A_\tool(i,s)$ is false, because admissibility requires every state described by the model to satisfy $\clause$. If this happens for every requested call throughout execution, the proof gate rejects all of them. Provided control remains admissible as required by \cref{ass:initial-control} and the other hypotheses of \cref{thm:world-final} hold, the system still remains safe. It can continue running control, but cannot execute any requested action to carry out the task. Thus covering all possible behaviour can preserve the safety guarantee while making the system unusable for its intended purpose.

A separate difficulty arises when the ungated action changes the world without changing the state. Such an action appears in $\widehat{\effect}_\tool$, even if it introduces no additional state into $\effect_\tool$. As \cref{eq:unsafe-bypass} shows, a compliant state cannot remain a faithful description of an unsafe world under adequacy. Enlarging $\tran_\tool$ cannot resolve this failure of fidelity. This is why we consider restrictions on the agent's actual access, alongside conditions on the models used by the checker. \Cref{sec:mediation} states the corresponding restriction as \cref{ass:isolation}.

\section{A formal account of isolation}\label{app:isolation}

\Cref{lem:isolation} allows coverage and fidelity established for reference executions to apply when an agent is present. The reason is that, during an invocation, the agent cannot change the protected system or introduce new possibilities for its evolution. This appendix gives a stepwise formulation of that argument. It uses the method of proving non-interference through conditions on individual steps \citep{goguen1984unwinding,rushby1992noninterference}.

We fix $\tool\in\tools\cup\{\toolctrl\}$, an input $i$ and an invocation of its routine $R=\exec_\tool(i)$. The routine runs alongside the environment, the host and the agent. The environment includes external services and the passage of time. The host includes the Airlock's loop and checker. As in \cref{sec:mediation}, reference executions suppress the agent's steps while leaving the other processes present.

\begin{definition}[Configurations and invocation prefixes]\label{def:ref-execution}
A configuration is a tuple $c=(w,s,r,u,a)$, where $w\in\worlds$, $s\in\ss$, and $r$, $u$ and $a$ are the local states of the routine, host and agent, respectively. The routine state $r$ contains its fixed input and records whether it has returned. For each actor $b\in\{R,\mathsf{env},\mathsf{host},\mathsf{ag}\}$, let $\to_b$ be its step relation on configurations. An invocation prefix is a finite sequence $c_0\to_{b_1}c_1\cdots\to_{b_m}c_m$ starting with the routine in its initial state on $i$, in which the routine has not returned at any $c_j$ with $j<m$. The protected part of a configuration is
\[
\kappa_R(w,s,r,u,a):=(w,s,r).
\]
A reference prefix is an invocation prefix containing no step of $\mathsf{ag}$. Write $c\rightsquigarrow d$ for a finite sequence, possibly empty, of steps of $R$, $\mathsf{env}$ and $\mathsf{host}$ from $c$ to $d$ within the same invocation, with no step after the routine has returned.
\end{definition}

A prefix may end before the routine returns, or at its return. This is necessary because the effect relations describe every state reached during execution. When the step model represents the executions of \cref{sec:world}, each world--state pair reached at any instant must be represented by a finite invocation prefix. In particular, a step that advances time must allow prefixes ending at intermediate instants if the world or state changes during that step. Sampling only the entry and return of a routine would not describe the effects used in the main results.

For an agent $\alpha$, whose processes determine $\to_{\mathsf{ag}}$, let $\widehat{\effect}^{\alpha}_\tool(i,w,s,w',s')$ hold when an invocation prefix starting with world $w$ and state $s$ ends with world $w'$ and state $s'$. Define $\widehat{\effect}^{\bullet}_\tool$ in the same way using reference prefixes. The state effects are their projections, as in \cref{eq:world-projection}. The initial configurations and the step relations of the other actors are the same in both cases.

\textbf{Conditions on steps.}
The following conditions give a sufficient stepwise formulation of isolation for this model. They are required for each tool and input, for configurations of the same invocation and steps taken before the routine returns.
\begin{enumerate}[label=(NI\arabic*),leftmargin=2.8em,itemsep=2pt,topsep=2pt]
\item \emph{Preservation by the agent.} If $c\to_{\mathsf{ag}}c'$, then $\kappa_R(c')=\kappa_R(c)$.
\item \emph{Independence of the protected evolution.} If $c\to_b c'$ with $b\ne\mathsf{ag}$ and $\kappa_R(d)=\kappa_R(c)$, then there is a configuration $d'$ such that $d\rightsquigarrow d'$ and $\kappa_R(d')=\kappa_R(c')$.
\end{enumerate}

(NI1) expresses the restriction on the agent's access in \cref{ass:isolation}: its computation and submission of requests leave the running routine, world and state unchanged. (NI2) expresses the independence needed to remove that computation. When two configurations agree on the protected quantities, a change produced by the other actors from one must also be possible from the other without an agent step. The matching sequence may be empty when the original step leaves the protected quantities unchanged. Thus, for example, a checker's private progress need not be reproduced step for step.

These conditions must be established for the interfaces and execution environment that enforce isolation. They do not require knowing how the agent chooses its requests or proofs. Host and service actions remain in reference executions, and their effects must still be covered by the tool models. Execution integrity, \cref{ass:execution-integrity}, continues to govern the scheduling of invocations and the connection between requests, views and calls.

\begin{lemma}[Isolation, stepwise formulation]\label{lem:ref-isolation}
For any agent $\alpha$ satisfying (NI1) and (NI2), every invocation prefix has a reference prefix starting from the same configuration and ending in a configuration with the same protected part. Consequently, for all inputs, worlds and states,
\[
\widehat{\effect}^{\alpha}_\tool(i,w,s,w',s')
\implies\widehat{\effect}^{\bullet}_\tool(i,w,s,w',s'),
\qquad
\effect^{\alpha}_\tool(i,s,s')
\implies\effect^{\bullet}_\tool(i,s,s').
\]
\end{lemma}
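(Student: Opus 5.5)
The plan is induction on the length $m$ of an invocation prefix $c_0\to_{b_1}c_1\cdots\to_{b_m}c_m$. The invariant is that there is a reference prefix from $c_0$ to some $d_j$ with $\kappa_R(d_j)=\kappa_R(c_j)$. For $j=0$ take the empty reference prefix, $d_0=c_0$. This is a reference prefix because $c_0$ already has the routine in its initial state on $i$.

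For the inductive step there are two cases, depending on the actor $b_{j+1}$.

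\textbf{Agent step.} If $b_{j+1}=\mathsf{ag}$, then (NI1) gives $\kappa_R(c_{j+1})=\kappa_R(c_j)=\kappa_R(d_j)$. We keep $d_{j+1}=d_j$ and add no step.

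\textbf{Non-agent step.} If $b_{j+1}\neq\mathsf{ag}$, we apply (NI2) with $c=c_j$, $c'=c_{j+1}$ and $d=d_j$. This yields $d_j\rightsquigarrow d_{j+1}$ with $\kappa_R(d_{j+1})=\kappa_R(c_{j+1})$. We append this sequence, which contains steps of $R$, $\mathsf{env}$ and $\mathsf{host}$ only, so the result contains no agent step.

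The point that needs care, and which I expect to be the main obstacle, is checking that the extended sequence is still an invocation prefix: the routine must not have returned at any configuration strictly before the last. Whether the routine has returned is recorded in $r$, and $r$ is part of $\kappa_R$. In the original prefix the routine has not returned at $c_j$ for $j<m$. Hence it has not returned at $d_j$, since $\kappa_R(d_j)=\kappa_R(c_j)$. The relation $\rightsquigarrow$ is defined to allow no step after the routine has returned, so the routine can return only at the final configuration of an appended segment. If that happens at an intermediate $d_{j+1}$ with $j+1<m$, then $\kappa_R(d_{j+1})=\kappa_R(c_{j+1})$ would record a return at $c_{j+1}$, which contradicts the prefix condition. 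So returns in the constructed sequence occur only at the end, and it is a valid reference prefix.

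To conclude, an invocation prefix witnessing $\widehat{\effect}^{\alpha}_\tool(i,w,s,w',s')$ gives a reference prefix from the same $c_0$, which has world $w$ and state $s$. It ends in a configuration whose protected part, including $(w',s')$, equals that of $c_m$. Since the initial configurations and the non-agent step relations are the same in both cases, this witnesses $\widehat{\effect}^{\bullet}_\tool(i,w,s,w',s')$. Projecting away the worlds as in \cref{eq:world-projection} gives the implication for $\effect$.
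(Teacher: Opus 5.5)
Your proof is correct and follows the paper's argument: the same induction on $j$ maintaining a reference prefix ending at $d_j$ with $\kappa_R(d_j)=\kappa_R(c_j)$, using (NI1) for agent steps and appending the (NI2) segment $d_j\rightsquigarrow d_{j+1}$ otherwise, then projecting onto states. Your check that the routine can return only at the very end of the constructed sequence uses two facts: $r$ is part of $\kappa_R$, and $\rightsquigarrow$ forbids steps after return. It spells out what the paper compresses into a single sentence.
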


\begin{proof}
Consider an invocation prefix $c_0\to_{b_1}c_1\cdots\to_{b_m}c_m$. We construct a reference prefix starting at $c_0$ whose endpoint $d_j$ satisfies $\kappa_R(d_j)=\kappa_R(c_j)$, by induction on $j$.

For $j=0$, the empty prefix suffices. Suppose the prefix has been constructed up to $j<m$. If $b_{j+1}=\mathsf{ag}$, (NI1) gives $\kappa_R(c_{j+1})=\kappa_R(c_j)$, so the same reference prefix serves, with $d_{j+1}=d_j$. Otherwise, (NI2) gives a sequence $d_j\rightsquigarrow d_{j+1}$ with $\kappa_R(d_{j+1})=\kappa_R(c_{j+1})$, which we append. Since $j<m$, the routine has not returned at $c_j$ or at its matching configuration $d_j$. The appended sequence takes no step after return, so the concatenation is a reference prefix.

At $j=m$, the constructed prefix starts at the same invocation and has the same final world and state. It therefore witnesses the implication for $\widehat{\effect}$. Projection on states gives the implication for $\effect$.
\end{proof}

Thus coverage and both parts of fidelity for reference executions also hold for actual executions, as required by \cref{lem:isolation}.

\end{document}